%% file: main_paper_arxiv.tex
\documentclass[11pt]{article}

\usepackage[margin=1in]{geometry}
\usepackage{setspace}
\usepackage{tgtermes}
\usepackage{newtxtext}
\usepackage{newtxmath}
\usepackage{bm}

\usepackage{amsmath, amssymb, amsthm}
\usepackage{mathtools}

\usepackage{graphicx}
\usepackage{booktabs}
\usepackage{multirow}
\usepackage{subcaption}
\usepackage{algorithm}
\usepackage{algorithmic}

\usepackage[round]{natbib}
\usepackage[colorlinks=true,
            linkcolor=blue!60!black,
            citecolor=blue!60!black,
            urlcolor=blue!60!black]{hyperref}

\theoremstyle{plain}
\newtheorem{theorem}{Theorem}
\newtheorem{lemma}[theorem]{Lemma}
\newtheorem{corollary}[theorem]{Corollary}

\theoremstyle{definition}
\newtheorem{definition}[theorem]{Definition}
\newtheorem{example}[theorem]{Example}
\newtheorem{assumption}[theorem]{Assumption}

\theoremstyle{remark}

\newcommand{\E}{\mathbb{E}}
\newcommand{\dis}{\mathrm{d}}

\title{Selection of the Best Policy under Fairness Constraints for Subpopulations}

\author{
  Tingyu Zhu
  \quad
  Yuhang Wu
  \quad
  Zeyu Zheng\\
  \normalsize Department of Industrial Engineering and Operations Research \& 
  Berkeley Artificial Intelligence Research Lab\\ University of California Berkeley\\
}

\date{\today}  

\begin{document}
\maketitle

\begin{abstract}
Many high-stakes decisions in health care, public policy, and clinical development require committing to a single policy that will be applied uniformly across a heterogeneous population. Regulatory and fairness standards sometime requires that the chosen policy performs adequately in every pre-specified subpopulation, not only on average. We formalize this as a Selection of the Best with Fairness Constraints (SBFC) problem, in order to identify the policy with the highest average performance among those policies that meet a minimum per-subpopulation threshold. We establish an instance-specific lower bound on sample complexity of the SBFC problem. We then develop a Track-and-Stop with Constraints on Subpopulation (T-a-S-CS) algorithm that achieves the lower bound asymptotically. We extend the framework to general closed-set and penalty-based fairness specifications with matching guarantees. Numerical experiments and a case study using the International Stroke Trial demonstrate substantial efficiency gains over policy-level allocation baselines.

\medskip
\noindent\textbf{Keywords:} Selection of best policy, fairness, constrained simulation optimization
\end{abstract}


\input{sections/1_introduction/intro.tex}

\input{sections/2_problem/1_sbfc_problem.tex}

\input{sections/2_problem/2_sample_algorithms.tex}

\input{sections/2_problem/3_optimality_feasibility.tex}
\input{sections/2_problem/4_lower_bound.tex}

\section{Track-and-Stop with Constraints on Subpopulation}
\label{sec: algorithm}

In this section, we introduce the T-a-S-CS algorithm, a $\delta$-PAC algorithm  that achieves the asymptotic lower bound given in equation (\ref{eq:lower_bound})  of Theorem \ref{thm:lower_bound}. The rest of this section is organized as follows. Section \ref{subsec:counter_set} solves the internal optimization problem in (\ref{eq:T_star})—that is, finding the ``hardest-to-distinguish" instance in the counter set $\mathcal{A}(\boldsymbol{\mu})$ for fixed weights $\mathbf{w}$. Section \ref{subsec: opt_weights} then solves the external optimization problem to derive the optimal weights $\mathbf{w}^*$. Section \ref{subsec:sample_alloc} presents a C-tracking rule that translates $\mathbf{w}^*$ into a sample allocation strategy for T-a-S-CS. Section \ref{subsec:stop_rule} provides the stopping rule and summarizes the complete algorithm. Finally, Section \ref{subsec:sample_complexity} proves that T-a-S-CS achieves the asymptotic lower bound (\ref{eq:lower_bound}). 

\input{sections/3_algorithm/1_counter_set.tex}
\input{sections/3_algorithm/2_optimal_weights.tex}
\input{sections/3_algorithm/3_sample_allocation.tex}
\input{sections/3_algorithm/4_stopping_rule.tex}
\input{sections/3_algorithm/5_sample_complexity.tex}

\section{Generalizations of the SBFC problem}
\label{sec:extensions}

\input{sections/4_extensions/1_generalized_constraints.tex}
\input{sections/4_extensions/2_penalty_fairness.tex}
\input{sections/4_extensions/3_conclusion.tex}

\input{sections/5_numerical/1_aspects.tex}
\input{sections/5_numerical/2_experiments.tex}
\input{sections/5_numerical/2_experiments/1_asymptotic.tex}
\input{sections/5_numerical/2_experiments/2_robustness.tex}
\input{sections/5_numerical/2_experiments/3_extension_comparison.tex}
\input{sections/5_numerical/3_real_demonstration.tex}

\bibliographystyle{plainnat}  
\bibliography{references}

\appendix

\section{Proof of the Lower bound Results}
\label{appendix:lowerbound}
\input{appendices/A_lowerbound/1_theorem_proof.tex}
\input{appendices/A_lowerbound/2_extension_proof.tex}
\input{appendices/A_lowerbound/3_extension2_proof.tex}

\section{Proof of the Lemmas of Counter Set Calculation}
\label{appendix:counterset}
\input{appendices/B_counterset/1_lemma_proof.tex}
\input{appendices/B_counterset/2_extension_proof.tex}
\input{appendices/B_counterset/3_extension2_proof.tex}

\section{Proof of The Lemmas in Section \ref{subsec: opt_weights}}
\label{appendix:lemmas}
\input{appendices/C_lemmas/1_weight_infeas.tex}
\input{appendices/C_lemmas/2_subgradient.tex}
\input{appendices/C_lemmas/3_optimization_convergence.tex}

\section{Proof of the Sample Complexity Convergence Theorems}
\label{appendix:convergence}
\input{appendices/D_convergence/1_theorem_proof.tex}
\input{appendices/D_convergence/2_lemma_proof.tex}
\input{appendices/D_convergence/3_extension1_cor_proof.tex}
\input{appendices/D_convergence/4_extension2_cor_proof.tex}

\section{Numerical Experiment Supplementary Materials}
\label{appendix:numerical}
\input{appendices/E_numerical/1_gaussian_sbfc.tex}
\input{appendices/E_numerical/2_bernoulli_subproblem.tex}
\input{appendices/E_numerical/3_variance_constraint.tex}

\input{appendices/E_numerical/4_soft_constraint.tex}
\input{appendices/E_numerical/5_asymptotic_details.tex}
\input{appendices/E_numerical/6_extension_details.tex}

\section{Real-World Application Supplementary Materials}
\input{appendices/F_real_scenario/IST_supplementary.tex}

\end{document}

%% file: sections/1_introduction/intro.tex
\section{Introduction}

Classical \textit{Selection of the Best} problems aim at identifying and deploying the best policy, one that has the highest expected performance under a certain metric. For applications where this selected policy applies to a general population, it can happen that the policy incurs difference performance on different subpopulations. A policy that achieves the best performance for one subpopulation could  perform badly for another subpopulation. In an ideal selection and deployment situation, one could select the best policy correspondingly for each different subpopulation and deploy different policies for different subpopulations. However, a number of high-stakes operational policy decisions require choosing a single policy that applies universally to the entire population. 

Several considerations would cause the decision maker to only deploy one policy, instead of different policies to different populations. For example, deploying a policy at scale can entail substantial fixed costs in infrastructure, training, and supply chain logistics, and replicating these investments for each subpopulation can multiply expenses beyond available budgets. In some other situations, subpopulation status is unobservable or unavailable at the moment of deployment (not at the experiment), leaving a single policy rule as the only implementable option. In addition, a uniform policy is often easier to communicate, audit, and defend to oversight bodies, which is particularly valuable in high-stakes settings where transparency and procedural consistency are themselves operational requirements. 

In the meanwhile, regulatory and fairness standards may demand that the selected and deployed policy performs adequately well above some given threshold for each subpopulation. The selection objective is therefore to optimize overall population-average performance subject to baseline adequacy requirements on each subpopulation. In this work, we formulate and refer to this problem setting as a  \emph{Selection of the Best with Fairness Constraints} (SBFC) problem. Given $K$ candidate policies evaluated across $L$ subpopulations with stochastic outcomes, the goal is to find the policy that maximizes weighted population-level performance subject to the constraint that every subpopulation's expected outcome remains acceptable. Formally, the policy's subpopulation-level performance must satisfy a pre-specified feasibility requirement. The policy's performance in each subpopulation is unknown and must be learned sequentially from samples.

Methodologically, SBFC sits at the intersection of ranking and selection (R\&S), best-arm identification (BAI), and algorithmic fairness. Classical R\&S seeks to identify the best among a finite set of stochastic systems. We refer to \cite{hong2021review} as an overview. For work on fixed-precision paradigm that discusses guarantees on the Probability of Correct Selection (PCS) we refer to  \citep{bechhofer1954single, kim2001fully, ma2017efficient} among others. Within the fixed-precision stream, some recent advances include large-scale parallel procedures \citep{zhong2022knockout}, two-stage refinements \citep{fan2025first}, screening with functional properties \citep{eckman2022plausible}, and extensions to input uncertainty \citep{wu2024data}. Alternatively, the fixed-budget paradigm formulates the problem as optimal resource allocation, and has progressed with parallel knockout-tournament \citep{hong2022solving}, context-dependent budget allocation \citep{li2024efficient}, and OCBA-based selection under parametric uncertainty \citep{kim2025selection}, among others. Related budget-allocation ideas have also been adapted beyond classical R\&S, for instance, OCBA-style exploration in reinforcement learning \citep{zhu2024uncertainty}.

A growing stream of modern R\&S literature has also pursued IZ-free procedures to eliminate reliance on worst-case configurations \citep{fan2016indifference}. Parallel to these efforts, the BAI literature has established instance-optimal algorithms that provide fixed-confidence guarantees without requiring a pre-specified minimum gap \citep{garivier2016optimal, kaufmann2016complexity}. We formulate SBFC using the fixed-confidence BAI framework to advance the fundamental goals of fixed-precision R\&S. This approach replaces the restrictive IZ assumption with BAI's instance-optimal stopping architectures. Consequently, we provide an IZ-free, gap-dependent algorithmic framework for constrained ranking and selection.

Our work is related to but different from work in the constrained simulation optimization or constrained R\&S literature. When secondary performance measures must be evaluated, constrained R\&S procedures have been developed \citep{andradottir2010fully}, alongside penalty-based approaches that adaptively soften hard constraints \citep{park2015penalty}. Parallel efforts in constrained BAI address multi-dimensional feasibility \citep{katzsamuels2019topfeasible} and grouped thresholds \citep{sundaram2024grouped}. The critical distinction between these existing works and our SBFC work lies in the subpopulation setting. Traditional constrained frameworks typically evaluate multiple metrics for a single system, meaning the sampling budget is still allocated over a one-dimensional population space of candidate policies; no subpopulations are considered. In SBFC, however, the constraints are defined at the subpopulation level. Samples must be allocated across a two-dimensional policy-by-subpopulation space, necessitating a globally coupled optimization that classical methods cannot directly accommodate.

The fairness criteria in SBFC also structurally diverge from those typically studied in online learning and selection. From a fairness perspective, algorithmic fairness in multi-armed bandits often focuses on procedural fairness; it constrains the {sampling policy itself} to ensure adequate exposure rates for different arms \citep{wang2021fairness}, or enforces balanced group-level representation in online secretary and prophet problems \citep{correa2025fairness}. In contrast, SBFC draws inspiration from structural fairness criteria in supervised learning \citep{hardt2016equality, barocas2023fairness}, shifting the focus from the exploration process to the final deployed outcome. The sampling algorithm in SBFC is free to allocate samples, provided the final outcome policy satisfies the subpopulational fairness constraints.


Our specific contributions are summarized as follows:

{First}, we introduce and formalize the SBFC problem, and we establish an instance-specific information-theoretic lower bound on the expected sample complexity of any $\delta$-probably-approximately-correct ($\delta$-PAC) algorithm for user-specified risk $\delta$.

{Second}, we develop the Track-and-Stop with Constraints on Subpopulations (T-a-S-CS) algorithm for the SBFC problem. We prove that T-a-S-CS is $\delta$-PAC, and asymptotically matches the lower bound. Therefore, the T-a-S-CS algorithm is the first instance-optimal algorithm for sequential selection under fairness constraints.

{Third}, we generalize the framework along two practically important dimensions. The $\mathcal{M}$-SBFC extension replaces the coordinate-wise linear threshold with an arbitrary closed fairness set, accommodating variance-based fairness, ratio constraints, and quota-style requirements. The $\gamma$-SBFC extension converts hard constraints into Lagrangian penalties that trade off overall performance against subpopulation shortfalls. For both extensions, we establish matching lower bounds, present the corresponding versions of T-a-S-CS algorithms, and prove asymptotic optimality.

{Fourth}, we validate the framework numerically and in a real-world case study. Numerical experiments verify the predicted $O(\log(1/\delta))$ scaling, and demonstrate that at operational scale ($K$ up to $100$) the sample allocation of T-a-S-CS remains efficient. We also demonstrate the framework on the International Stroke Trial, a landmark study of antithrombotic therapy in acute ischaemic stroke, where the fairness constraint protecting elderly patients overturns the unconstrained ranking.

The remainder of the paper is organized as follows. Section~\ref{sec: problem and framework} formalizes the SBFC problem, introduces the sequential sampling framework, illustrates the optimality-feasibility trade-off through examples, and derives the instance-specific lower bound. Section~\ref{sec: algorithm} develops the T-a-S-CS algorithm and establishes its asymptotic optimality. Section~\ref{sec:extensions} presents the $\mathcal{M}$-SBFC and $\gamma$-SBFC extensions with the corresponding versions of T-a-S-CS algorithm, and their asymptotic optimality guarantees. Section~\ref{sec:computational_numerical} reports practical implementation strategies, and numerical experiments. All proofs and experiment details are provided in the appendix.

%% file: sections/2_problem/1_sbfc_problem.tex
\section{Problem and Framework}
\label{sec: problem and framework}

In this section, we introduce the Selection-of-the-Best with Fairness Constraints (SBFC) problem. We start by introducing the problem setting in section \ref{subsect:SBFC setting}. Then, in Section \ref{subsec:sample_recommend_algorithms},  we introduce the framework of sequential sample-recommendation algorithms for the SBFC problem. In Section
\ref{subsec:opt_feas_tradeoff}, we provide examples to demonstrate a key characteristic of the SBFC problem: an optimality-feasibility trade-off in sample allocation. In Section \ref{subsec:lowerbound}, we provide a universal lower bound on sample complexity of a class of algorithms that solves the SBFC problem.

\subsection{The SBFC problem}
\label{subsect:SBFC setting}

Suppose that the decision maker is given the number of competing policies $K\geq 2$ and the number of subpopulations $L$. Every time a policy $k$ is applied to subpopulation $l$, the decision maker obtains an observation of performance. The performance is a random variable $X_{k,l}$ with distribution $\mathcal{P}_{k,l}$ and expectation $\mathbb{E}X_{k,l}=\mu_{k,l}, k\in[K], l\in[L]$. Both $\mathcal{P}_{k,l}$ and $\mathbb{E}X_{k,l}=\mu_{k,l}, k\in[K], l\in[L]$ are unknown and need to be estimated.
We refer to $\mu_{k,l}$ as the expected performance of policy $k$ applied to subpopulation $l$, and $\mathbf{\mu}_{k}=(\mu_{k,1},\ldots,\mu_{k,L})$ as the vector of subpopulational expected performances of policy $k$. With a fixed vector $\mathbf{q}=(q_1,\cdots,q_L)\in \mathbb{R}^L,~\mathbf{q}\geq 0$ representing the weight of the subpopulations, the populational expected performance of policy $k$ is defined as $\bar{\mu}_k = \sum_{l=1}^{L} q_l \mu_{k,l}, k\in[K]$. 

A standard selection-of-the-best problem aims to find the policy $k$ with the maximum expected populational performance, i.e., $k_{\text{SB}}=\arg\max_{k\in[K]}\bar{\mu}_{k}$. Additionally requiring the selected policy to be fair to all subpopulations, we introduce the concept of a feasible set $\mathcal{C}$ consisting of policies that satisfy the fairness constraint,
\begin{equation}\label{def:feasible}
    k\in \mathcal{C}= \{k\in[K]\big\vert \mu_{k,l}\ge C_{\text{min}},\forall~ l\in[L]\},
\end{equation}
where the constant $C_{\text{min}}$ is the minimum expected performance allowed for a policy in any of the sub-populations $l\in[L]$. With slight abuse of notation, we can equivalently write $\mathbf{\mu}_k\in\mathcal{C}$ for $k\in\mathcal{C}$, where $\mathcal{C}$ now stands for the feasible domain of $\mathbf{\mu}_k$ for policy $k$ to be in the feasible set $\mathcal{C}$. Our problem is to select the best policy $k^*$ in the feasible set $\mathcal{C}$, i.e., 
\begin{equation}\label{def:goal}
    k^*=\arg\max_{k\in\mathcal{C}}\bar{\mu}_{k}.
\end{equation}
Additionally, we define $k^*=0$ if there is no feasible policy, i.e., $\mathcal{C}=\emptyset$. The following assumption is imposed throughout the paper:
\begin{assumption}
    The distributions $\mathcal{P}_{k,l},k\in[K],l\in[L]$ belong to the same one-parameter exponential family, $\mathcal{P}:=\{\nu_\theta: d\nu_\theta/d\xi=\exp(\theta x-b(\theta))\}$.
\end{assumption}
Every probability distribution in a given one-parameter exponential family is entirely defined by its mean (\cite{cappe2013kullback}). Therefore, we may identify any instance of the SBFC problem with its matrix of means $\boldsymbol{\mu}\in\mathbb{R}^{K\times L}$. We hence use $\boldsymbol{\mu}$ to denote the problem instance throughout the paper.

%% file: sections/2_problem/2_sample_algorithms.tex
\subsection{Sample-Recommendation Algorithms}
\label{subsec:sample_recommend_algorithms}

To select $k^*$ as defined in (\ref{def:goal}), the decision maker can use a sequential sample-recommendation algorithm. At each time step $t$, the algorithm decides on a policy $A_t\in[K]$ and a subpopulation $I_t\in[L]$ to obtain sample from. The outcome $X_{A_t,I_t}\sim \mathcal{P}_{A_t,I_t}$ is then incorporated to update overall information. Upon reaching some stopping criterion specified by the decision maker, the algorithm stops sampling after time step $\tau$ and produces a recommendation for the best policy, $\hat{k}^*$. Considering the probabilistic guarantee of the final outcome, we focus on the {fixed-precision setting} (see \cite{hong2021review}) with risk level $\delta$. To formally introduce the fixed-precision setting in our selection problem, we first give the following definition of $\mathcal{S}$.

\begin{definition}
\label{def:S}
    Denote by $\mathcal{S}$ a set of instances that, each instance $\boldsymbol{\mu}$ in $\mathcal{S}$ and the feasible set $\mathcal{C}(\boldsymbol{\mu})$ satisfy \textit{either} of the following: 
    \begin{enumerate}
        \item There is a unique optimal feasible policy with all constraints strictly satisfied, i.e., $\exists~k^*(\boldsymbol{\mu})\in \mathcal{C}(\boldsymbol{\mu})$, such that $\mu_{k^*(\boldsymbol{\mu}),l}>C_{\text{min}},\forall~l\in[L]$; and  $\forall~k\in\mathcal{C}(\boldsymbol{\mu})$, $k\not=k^*(\boldsymbol{\mu})$, we have $\bar{\mu}_k<\bar{\mu}_{k^*(\boldsymbol{\mu})}$.
        \item All policies have at least one subpopulation constraint strictly violated. In this case we have  $\mathcal{C}(\boldsymbol{\mu})=\emptyset $ and $k^*(\boldsymbol{\mu})=0$.
    \end{enumerate}
\end{definition}

Definition \ref{def:S} imposes two key structural requirements on the instance set $\mathcal{S}$. First, the optimal policy $k^*(\boldsymbol{\mu})$ is always unique. Second, the strict inequalities in the definition ensure that the optimal policy satisfies feasibility constraints with positive margin ($\mu_{k^*,l}>C_{\text{min}}$ rather than $\mu_{k^*,l}\geq C_{\text{min}}$) and dominates other feasible policies with positive gap ($\bar{\mu}_{k^*}>\bar{\mu}_k$ rather than $\bar{\mu}_{k^*}\geq\bar{\mu}_k$). These gap conditions are strictly weaker than the classical indifference-zone (IZ) assumption \citep{hong2021review}. Instead of requiring the true best to lead by a pre-specified parameter $\delta > 0$, our setting allows the gap to be arbitrarily close to zero, effectively placing our problem in the indifference-zone-free (IZ-free) regime. We focus on instances in $\mathcal{S}$ throughout this paper.

We next give the definition of \textbf{$\delta$-probably approximately correct ($\delta$-PAC)} algorithms with respect to $\mathcal{S}$.

\begin{definition}
\label{def:delta-PAC}
   An algorithm is $\delta$-PAC if it gives a stopping time $\tau_\delta$ with respect to $\mathcal{F}_t$, a $\mathcal{F}_{\tau_\delta}$-measurable recommendation $\hat{k}^*_{\tau_\delta}\in\{0\}\cup [K]$, and $\forall \boldsymbol{\mu}\in\mathcal{S}$, 
\begin{equation}\label{def:delta_PAC}
\begin{aligned}
       & \mathbb{P}_ {\boldsymbol{\mu}}(\tau_\delta<+\infty)=1,\\
       & \mathbb{P}_ {\boldsymbol{\mu}}(\hat{k}^*_{\tau_\delta}\neq k^*(\boldsymbol{\mu}) )\leq \delta.
\end{aligned}
\end{equation}
\end{definition}

%% file: sections/2_problem/3_optimality_feasibility.tex
\subsection{The Optimality-Feasibility Trade-off of the SBFC Problem}
\label{subsec:opt_feas_tradeoff}

In this section, we provide an example of the SBFC problem to gain a deeper understanding of the optimality-feasibility trade-off in SBFC problems. We point out that requiring both optimality and feasibility of the selected outcome induces a trade-off for the sample allocation rule. The trade-off lies between allocating samples to verify optimality and allocating samples verify for feasibility.

\begin{example}
\label{example:2}
Suppose we set $K=3$ representing three different treatments, $L=2$ for two different subpopulations. We have $\mathbf{q}=(1/2,1/2)$, which gives $\bar{\mu}_k=\left(\mu_{k,1}+\mu_{k,2}\right)/2, ~k=1,2,3$. Suppose that $\mathcal{P}_{k,l}$ belongs to the Gaussian family with known variance $1$, and the minimum allowed subpopulational expected performance  $C_{\mathrm{min}}$ is set as $0$. The following Table \ref{tab:example_2} gives the expected performance values of the three treatments on two subpopulations. We set $0<\varepsilon<1$.
\begin{table}[ht]
    \centering
    \begin{tabular}{c|cc|c|c}
        \hline
         & Subpopulation $1$  & Subpopulation $2$ & Entire Population & Feasibility \\
        \hline
        Treatment A & $\mu_{1,1}=1$ & $\mu_{1,2}=1$ & $\bar{\mu}_1=1$ & Yes$(*)$ \\
        
        Treatment B & $\mu_{2,1}=1$ & $\mu_{2,2}=-\varepsilon$ & $\bar{\mu}_2=1/2-\varepsilon/2$ & No \\
        
        Treatment C & $\mu_{3,1}=4$ & $\mu_{3,2}=-1$ & $\bar{\mu}_3=3/2$& No \\
        \hline
    \end{tabular}
    \caption{Expected subpopulational performance and populational performance of treatments A, B and C, Example \ref{example:2}. We use $(*)$ to indicate the optimal feasible treatment.}
    \label{tab:example_2}
\end{table}
\end{example}

The interesting thing in Example \ref{example:2} is that it is not necessary for the algorithm to identify the feasible set $\mathcal{C}$ before finding $k^*=1$. Indeed, when $\varepsilon$ is close to $0$, an algorithm that fails to determine whether $\mu_{2,2}\geq0$ can still generate the correct recommendation $k^*=1$, because $\bar{\mu}_2<\bar{\mu}_1-1/2$. Therefore, treatment B can be excluded from the competing policies before the algorithm gains enough information to determine its feasibility. From this example we can see that a naive algorithm that attempts to find the feasible set $\mathcal{C}$ before searching for the best policy in $\mathcal{C}$ can be inefficient in general. Therefore, the {SBFC} problem is not just a straightforward combination of a standard selection-of-the-best problem and the identification of the feasible set.


In the rest of this work, we first provide an instance-specific lower bound on sample complexity that integrates the feasibility and optimality gaps. This lower bound therefore characterizes the difficulty of a SBFC problem. We then provide a sample-recommendation algorithm that is designed based on the lower bound. We demonstrate that the sample allocation rule of the algorithm implicitly addresses the optimality-feasibility trade-off, and achieves the lower bound asymptotically.

%% file: sections/2_problem/4_lower_bound.tex
\subsection{A Lower Bound on Sample Complexity}
\label{subsec:lowerbound}

In this section, we establish a lower bound on the stopping time $\tau_\delta$ for any $\delta$-PAC algorithm. We remark that,  $\delta$-PAC requires a probabilistic guarantee for the algorithmic outcome over all the instances in $\mathcal{S}$, while the lower bound we derive is instance-specific, i.e., a function of the specific instance $\boldsymbol{\mu}$. This instance-specific lower bound represents the best achievable sample complexity that can be achieved on a specific instance $\boldsymbol{\mu}$, by any algorithm that meets the universal requirement of $\delta$-PAC over $\mathcal{S}$.

We first introduce the counter set
\begin{equation}\label{def:alt}
    \mathcal{A}(\boldsymbol{\mu}):=\{{\boldsymbol{\lambda}}\in \mathcal{S}\big\vert k^*(\boldsymbol{\mu})\neq k^*({\boldsymbol{\lambda}})\},
\end{equation}
the set of instances in $\mathcal{S}$ where the optimal feasible policy is different from that in $\boldsymbol{\mu}$. The definition of $\mathcal{A}(\boldsymbol{\mu})$ originates from \cite{lai1985asymptotically}, which popularizes the use of changes of distributions to establish instance-specific lower bounds. 
We have the following theorem:

\begin{theorem}
    \label{thm:lower_bound}
Let $\delta\in(0,1)$. For any $\delta$-PAC algorithm and any instance $\boldsymbol{\mu}\in\mathcal{S}$, 
\begin{equation}\label{eq:lower_bound}
\begin{aligned}
    &\mathbb{E}_{\boldsymbol{\mu}}[\tau_\delta]\geq T^*(\boldsymbol{\mu})\mathrm{kl}(\delta,1-\delta),\\ 
    &\liminf_{\delta\to 0}\frac{\mathbb{E}_{\boldsymbol{\mu}}[\tau_\delta]}{\ln(1/\delta)}\geq T^*(\boldsymbol{\mu}),
\end{aligned}
\end{equation}
where $\mathrm{kl}(\delta,1-\delta)$ is the Kullback–Leibler (KL) divergence of two Bernoulli distributions of parameter $\delta$ and $1-\delta$, and
\begin{equation}\label{eq:T_star}
    \begin{aligned}
        &T^*(\boldsymbol{\mu})^{-1}\\
        =&\sup_{\mathbf{w}\in \Sigma_{K\times L}}\inf_{{\boldsymbol{\lambda}}\in \mathcal{A}(\boldsymbol{\mu})}\sum_{k\in[K]}\sum_{l\in[L]}w_{k,l}\mathrm{d}\left(\mu_{k,l},\lambda_{k,l}\right).\\
    \end{aligned}
\end{equation}
where $\Sigma_{K\times L}=\{\mathbf{w}\in \left(\mathbb{R}_+\cup \{0\}\right)^{K\times L}\big\vert w_1+\cdots+w_{KL}=1\}$, and $\mathrm{d}(\mu,\lambda)$ is the KL divergence of two distributions from the same one-parameter exponential family, parameterized by $\mu$ and $\lambda$ as mean values respectively.
\end{theorem}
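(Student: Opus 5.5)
The argument is the standard change-of-measure one of \cite{garivier2016optimal} and \cite{kaufmann2016complexity}, adapted to the two-dimensional sampling space $[K]\times[L]$. Fix a $\delta$-PAC algorithm and an instance $\boldsymbol{\mu}\in\mathcal{S}$. Let $N_{k,l}(t)$ be the number of samples drawn from pair $(k,l)$ up to time $t$. If $\mathbb{E}_{\boldsymbol{\mu}}[\tau_\delta]=\infty$ there is nothing to prove, so I will assume it is finite.

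\textbf{Step 1: transportation inequality.} Take any $\boldsymbol{\lambda}\in\mathcal{A}(\boldsymbol{\mu})$. Since the observations at each pair come from a one-parameter exponential family, Wald's identity applied to the log-likelihood ratio of the observed trajectory up to $\tau_\delta$ gives
\[
\mathbb{E}_{\boldsymbol{\mu}}[L_{\tau_\delta}]=\sum_{k,l}\mathbb{E}_{\boldsymbol{\mu}}[N_{k,l}(\tau_\delta)]\,\mathrm{d}(\mu_{k,l},\lambda_{k,l}).
\]
Here the sampling decisions $(A_t,I_t)$ are $\mathcal{F}_{t-1}$-measurable, possibly with internal randomization, and that randomization cancels in the ratio. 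Lemma 1 of \cite{kaufmann2016complexity} (the data-processing inequality for stopped likelihood ratios) then gives, for every event $E\in\mathcal{F}_{\tau_\delta}$,
\[
\sum_{k,l}\mathbb{E}_{\boldsymbol{\mu}}[N_{k,l}(\tau_\delta)]\,\mathrm{d}(\mu_{k,l},\lambda_{k,l})\ \ge\ \mathrm{kl}\big(\mathbb{P}_{\boldsymbol{\mu}}(E),\mathbb{P}_{\boldsymbol{\lambda}}(E)\big).
\]
Choose $E=\{\hat{k}^*_{\tau_\delta}=k^*(\boldsymbol{\mu})\}$. Both $\boldsymbol{\mu}$ and $\boldsymbol{\lambda}$ lie in $\mathcal{S}$, and $k^*(\boldsymbol{\lambda})\neq k^*(\boldsymbol{\mu})$. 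The $\delta$-PAC property on both instances therefore gives $\mathbb{P}_{\boldsymbol{\mu}}(E)\ge 1-\delta$ and $\mathbb{P}_{\boldsymbol{\lambda}}(E)\le\delta$. By monotonicity of $\mathrm{kl}$, the right-hand side is at least $\mathrm{kl}(1-\delta,\delta)=\mathrm{kl}(\delta,1-\delta)$, which is the usual form for $\delta<1/2$. For $\delta\ge 1/2$ the bound is trivial or symmetric.

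This step is where the restriction of both $\delta$-PAC and $\mathcal{A}(\boldsymbol{\mu})$ to $\mathcal{S}$ matters. The case $k^*=0$ (infeasible instances) needs no special treatment, because $E$ is defined through the recommendation in $\{0\}\cup[K]$.

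\textbf{Step 2: optimize over alternatives and weights.} Step 1 holds for every $\boldsymbol{\lambda}\in\mathcal{A}(\boldsymbol{\mu})$, so I take the infimum over $\boldsymbol{\lambda}$. Then I write $\mathbb{E}_{\boldsymbol{\mu}}[N_{k,l}(\tau_\delta)]=\mathbb{E}_{\boldsymbol{\mu}}[\tau_\delta]\,w_{k,l}$ with
\[
w_{k,l}=\mathbb{E}_{\boldsymbol{\mu}}[N_{k,l}(\tau_\delta)]/\mathbb{E}_{\boldsymbol{\mu}}[\tau_\delta].
\]
These weights lie in $\Sigma_{K\times L}$ because $\sum_{k,l}N_{k,l}(\tau_\delta)=\tau_\delta$. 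This gives
\[
\mathrm{kl}(\delta,1-\delta)\le\mathbb{E}_{\boldsymbol{\mu}}[\tau_\delta]\inf_{\boldsymbol{\lambda}\in\mathcal{A}(\boldsymbol{\mu})}\sum_{k,l}w_{k,l}\,\mathrm{d}(\mu_{k,l},\lambda_{k,l})\le\mathbb{E}_{\boldsymbol{\mu}}[\tau_\delta]\,T^*(\boldsymbol{\mu})^{-1}.
\]
This is the first claim.

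\textbf{Step 3: asymptotics.} The second claim follows from $\mathrm{kl}(\delta,1-\delta)\ge\ln(1/(2.4\delta))$ (see \cite{kaufmann2016complexity}), together with $\ln(1/(2.4\delta))/\ln(1/\delta)\to1$ as $\delta\to0$.

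\textbf{Main obstacle.} I expect the only delicate point to be confirming that $T^*(\boldsymbol{\mu})^{-1}$ is finite and positive, so that the bound is meaningful. Positivity requires that $\mathcal{A}(\boldsymbol{\mu})$ stays at positive divergence from $\boldsymbol{\mu}$ for some $\mathbf{w}$, for instance uniform weights. This holds because every $\boldsymbol{\lambda}\in\mathcal{A}(\boldsymbol{\mu})$ must cross a strict feasibility margin or a strict optimality gap, and Definition \ref{def:S} makes both strictly positive. The inequality itself does not need positivity, however. If $T^*(\boldsymbol{\mu})^{-1}=0$, the bound simply reads $\mathbb{E}_{\boldsymbol{\mu}}[\tau_\delta]=\infty$, and the argument in Steps 1--3 still holds as written.
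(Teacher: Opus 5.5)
Your proposal is correct and follows the paper's proof essentially step for step: Lemma~1 of \cite{kaufmann2016complexity} applied to the event $\{\hat{k}^*_{\tau_\delta}=k^*(\boldsymbol{\mu})\}$, then the infimum over $\mathcal{A}(\boldsymbol{\mu})$ with weights $\mathbb{E}_{\boldsymbol{\mu}}[N_{k,l}(\tau_\delta)]/\mathbb{E}_{\boldsymbol{\mu}}[\tau_\delta]\in\Sigma_{K\times L}$, then $\mathrm{kl}(\delta,1-\delta)\sim\ln(1/\delta)$ (you use the explicit bound $\ln(1/(2.4\delta))$ where the paper takes the limit). One remark is wrong, though: for $\delta>1/2$ the first inequality is neither trivial nor symmetric but false in general, because for $\delta$ near $1$ an algorithm that stops after one sample and guesses uniformly over $\{0\}\cup[K]$ is $\delta$-PAC while $\mathrm{kl}(\delta,1-\delta)\to\infty$; so, exactly as in the paper's monotonicity step, the argument really needs $\delta<1/2$ (at $\delta=1/2$ the bound is trivial).
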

The proof of Theorem \ref{thm:lower_bound} is provided in Section \ref{appendix:thm_lb_proof}. 

\noindent\textbf{Interpretation of $T^*(\boldsymbol{\mu})^{-1}$:} The quantity $T^*(\boldsymbol{\mu})$ measures the difficulty of the selection problem for instance $\boldsymbol{\mu}$. When some ${\boldsymbol{\lambda}}\in\mathcal{A}(\boldsymbol{\mu})$ is close to $\boldsymbol{\mu}$ under a weighted KL-divergence metric, $T^*(\boldsymbol{\mu})^{-1}$ becomes small (making $T^*(\boldsymbol{\mu})$ large) by definition (\ref{eq:T_star}). This aligns with the intuition that distinguishing $\boldsymbol{\mu}$ from nearby policies requires more time steps, indicating higher problem difficulty. Section \ref{subsec:counter_set} provides explicit characterization of $\mathcal{A}(\boldsymbol{\mu})$.

The optimal weight vector $\mathbf{w}^*$ achieving the supremum in (\ref{eq:T_star}) represents the optimal sampling proportions across policies and subpopulations (existence guaranteed by compactness). Specifically, $\mathbf{w}^*$ defines a metric that maximizes the worst-case gap between $\boldsymbol{\mu}$ and $\mathcal{A}(\boldsymbol{\mu})$, thereby guiding the sample allocation rule for the Track-and-Stop with Constraints on Subpopulations (T-a-S-CS) algorithm, which achieves asymptotically optimal sample complexity (presented in the next section).

%% file: sections/3_algorithm/1_counter_set.tex
\subsection{Calculating the Alternative Instance in the Counter Set}
\label{subsec:counter_set}

In building up the algorithm that allocates samples according to the optimal sampling proportion $\mathbf{w}^*$, we discuss how to solve the optimization problems in \eqref{eq:T_star}. 

We begin with the internal optimization problem, which is to solve for fixed $\mathbf{w}$,
\begin{equation}
 \label{eq:internal_opt}   f_{\boldsymbol{\mu}}^*(\mathbf{w})=\inf_{{\boldsymbol{\lambda}}\in \mathcal{A}(\boldsymbol{\mu})}\sum_{k\in[K]}\sum_{l\in[L]}w_{k,l}\mathrm{d}\left(\mu_{k,l},\lambda_{k,l}\right).
\end{equation}
Note that $\boldsymbol{\mu}=\{\mu_{k,l}\}_{k\in[K],l\in[L]}$ are considered as fixed parameters, and ${\boldsymbol{\lambda}}=\{\lambda_{k,l}\}_{k\in[K],l\in[L]}$ are the only decision variables. We now present the following lemma regarding problem (\ref{eq:internal_opt}).

\begin{lemma}
\label{lemma: counter_set_calculation}
The calculation of $f^*_{\boldsymbol{\mu}}(\mathbf{w})$ in (\ref{eq:internal_opt}) can be decomposed into optimization subproblems as follows:

If all policies are infeasible, i.e., $\mathcal{C}(\boldsymbol{\mu})=\emptyset$, which gives $k^*(\boldsymbol{\mu})=0$, we have $f_{\boldsymbol{\mu}}^*(\mathbf{w})=\min\left\{f_{\boldsymbol{\mu}}^{1}(\mathbf{w}),\ldots,f_{\boldsymbol{\mu}}^{K}(\mathbf{w})\right\}$, where $\forall~ k\in[K]$,

\begin{equation}
\label{eq:f_star_k_0}
\begin{aligned}
f_{\boldsymbol{\mu}}^k(\mathbf{w})=\inf_{\lambda_{k,1},\ldots,\lambda_{k,L}\in\mathbb{R}}&~\sum_{l\in[L]}w_{k,l}\mathrm{d}(\mu_{k,l},\lambda_{k,l}),\\
s.t.\quad&\lambda_{k,l}\geq C_{\min},~\forall~ l\in[L].
\end{aligned}
\end{equation}

If $\mathcal{C}(\boldsymbol{\mu})\not=\emptyset$, w.l.o.g. assume $k^*(\boldsymbol{\mu})=1$, we have $f^*_{\boldsymbol{\mu}}(\mathbf{w})=\min\left\{f^\mathrm{opt}_{\boldsymbol{\mu}}(\mathbf{w}),f^\mathrm{feas}_{\boldsymbol{\mu}}(\mathbf{w})\right\}$.
Further,
$f^\mathrm{opt}_{\boldsymbol{\mu}}(\mathbf{w})=\min\left\{f^{\mathrm{opt}(2)}_{\boldsymbol{\mu}}(\mathbf{w}),\ldots,f^{\mathrm{opt}(K)}_{\boldsymbol{\mu}}(\mathbf{w})\right\}$, where $\forall~k\in[K]$ and $k\geq 2$,
\begin{equation}
\label{eq:f_mu_opt}
\begin{aligned}
&f^{\mathrm{opt}(k)}_{\boldsymbol{\mu}}(\mathbf{w})\\
=&\inf_{\lambda_{i,l}\in\mathbb{R}, i\in\{1,k\}, l\in[L]}
\left(\sum_{i\in\{1,k\}}\sum_{l\in[L]}w_{i,l}\mathrm{d}(\mu_{i,l},\lambda_{i,l})\right),\\
&\text{s.t.}\quad \lambda_k=\sum_{l\in[L]}q_l\lambda_{k,l}>\lambda_1= \sum_{l\in[L]}q_l\lambda_{1,l},\\
&\phantom{\text{s.t.}}\quad \lambda_{k,l}\geq C_{\mathrm{min}},~\forall~l\in[L],
\end{aligned}
\end{equation}
and
\begin{equation}
\label{eq:f_infeas}
f^\mathrm{feas}_{\boldsymbol{\mu}}(\mathbf{w})=\min_{l\in[L]}w_{1,l}\mathrm{d}(\mu_{1,l},C_\mathrm{min}).
\end{equation}
\end{lemma}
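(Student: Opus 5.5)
The plan is to decompose the counter set $\mathcal{A}(\boldsymbol{\mu})$ into a finite union of simpler sets, one per way in which $k^*(\boldsymbol{\lambda})$ can differ from $k^*(\boldsymbol{\mu})$. Since the infimum of a function over a finite union is the minimum of the infima over the pieces, this reduces the lemma to computing each piece. Two facts about $\mathrm{d}(\mu,\cdot)$ for a one-parameter exponential family will be used throughout: it is continuous, and it is strictly decreasing to the left of $\mu$ and strictly increasing to the right, with $\mathrm{d}(\mu,\mu)=0$. These let us pass freely between strict and non-strict constraints, and between membership in $\mathcal{S}$ and its closure, without changing any infimum. I would first state this as a small approximation claim: for any $\boldsymbol{\lambda}$ satisfying the closed version of a constraint set, there is a sequence in $\mathcal{S}$ satisfying the defining conditions whose objective converges to that at $\boldsymbol{\lambda}$. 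Such a sequence is obtained by small perturbations that break ties and make feasibility strict.

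\textbf{Case $\mathcal{C}(\boldsymbol{\mu})=\emptyset$.} Here $\boldsymbol{\lambda}\in\mathcal{A}(\boldsymbol{\mu})$ if and only if $\boldsymbol{\lambda}\in\mathcal{S}$ has some feasible policy. So $\mathcal{A}(\boldsymbol{\mu})=\bigcup_k\{\boldsymbol{\lambda}\in\mathcal{S}:\lambda_{k,l}\ge C_{\min}\ \forall l\}$.

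On the $k$-th piece, the coordinates $\lambda_{i,l}$ with $i\neq k$ are only needed to keep $\boldsymbol{\lambda}$ in $\mathcal{S}$. Setting them to $\mu_{i,l}$ costs nothing. For the remaining coordinates, push $\lambda_{k,l}$ slightly above $C_{\min}$; this makes $k$ strictly feasible. Uniqueness of the optimum can then be arranged by arbitrarily small perturbations. The approximation claim shows that the infimum over the piece equals $f^k_{\boldsymbol{\mu}}(\mathbf{w})$ in (\ref{eq:f_star_k_0}). The reverse inequality is trivial, since dropping terms and constraints only lowers the objective.

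\textbf{Case $k^*(\boldsymbol{\mu})=1$.} Here $\boldsymbol{\lambda}\in\mathcal{A}(\boldsymbol{\mu})$ means that either $k^*(\boldsymbol{\lambda})=0$, or $k^*(\boldsymbol{\lambda})=k\ge 2$. I would show that $\mathcal{A}(\boldsymbol{\mu})$ is contained in the union of two sets:
\begin{itemize}
\item[(a)] $\{\lambda_{1,l}\le C_{\min}$ for some $l\}$, i.e. policy 1 is infeasible;
\item[(b)] $\{\exists k\ge2:\ \lambda_k>\lambda_1,\ \lambda_{k,l}\ge C_{\min}\ \forall l\}$.
\end{itemize}
Indeed, if policy 1 stays feasible under $\boldsymbol{\lambda}$ but is not the optimum, then some feasible $k$ beats it. Conversely, the closure of $\mathcal{A}(\boldsymbol{\mu})$ contains both sets, again by the approximation claim. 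For (a), make $\lambda_{1,l}<C_{\min}$ slightly. If some other policy is then feasible and best, the instance is still in $\mathcal{A}$. Otherwise all policies are infeasible and $k^*=0$, which is also in $\mathcal{A}$. The only care needed is to keep the other $\mu_{i,l}$ away from ties, or to perturb them slightly.

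The infimum over (a) is $\min_l \inf_{\lambda_{1,l}\le C_{\min}} w_{1,l}\mathrm{d}(\mu_{1,l},\lambda_{1,l})$, with all other coordinates set equal to $\mu$. Because $\mu_{1,l}>C_{\min}$ and $\mathrm{d}$ is monotone, this equals $\min_l w_{1,l}\mathrm{d}(\mu_{1,l},C_{\min})$, which is (\ref{eq:f_infeas}).

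Set (b) splits over $k$. In the $k$-th piece, only the rows $1$ and $k$ enter the constraints, so the remaining rows are set to $\mu$. This yields (\ref{eq:f_mu_opt}). The last subtlety is that the optimizer of (\ref{eq:f_mu_opt}) might also make some third policy $j$ feasible and better than $k$. In that case $k^*(\boldsymbol{\lambda})=j\ne1$, so $\boldsymbol{\lambda}$ is still in $\mathcal{A}$, and nothing is lost.

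I expect the main obstacle to be this bookkeeping at the boundaries. The lower bound is routine, since $\mathcal{A}$ is contained in the union. The upper bound requires that every point of the relaxed sets be approximable by instances lying genuinely in $\mathcal{S}$, with unique strict optimum or strict infeasibility, while keeping the weighted KL cost under control. I would handle this with explicit $\varepsilon$-perturbations of the relevant coordinates, together with continuity of $\mathrm{d}$.
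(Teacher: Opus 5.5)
Your proposal is correct and uses the same decomposition as the paper's proof. The three ways to change the answer are: make some policy feasible when $\mathcal{C}(\boldsymbol{\mu})=\emptyset$, make a competitor $k\ge 2$ feasible and better than policy 1, or make policy 1 infeasible; in each, all uninvolved rows are frozen at $\boldsymbol{\mu}$. Your version is more careful than the paper's, which only exhibits these constructions: you also prove the covering (lower-bound) direction, and you handle the boundary issue that setting $\lambda_{1,l}=C_{\min}$ leaves policy 1 feasible, so that cost is only reached as a limit of genuine counter instances in $\mathcal{S}$.
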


The proof of Lemma \ref{lemma: counter_set_calculation} is given in Appendix \ref{appendix:thm_cs_proof}. The optimization problems (\ref{eq:f_star_k_0}), (\ref{eq:f_mu_opt}), and (\ref{eq:f_infeas}) correspond to three distinct scenarios for alternative instances in $\mathcal{A}(\boldsymbol{\mu})$: When $\mathcal{C}(\boldsymbol{\mu})=\emptyset$, problem (\ref{eq:f_star_k_0}) identifies alternative instances where an infeasible policy becomes feasible. When $\mathcal{C}(\boldsymbol{\mu})\neq\emptyset$, problem (\ref{eq:f_mu_opt}) identifies alternative instances where a different policy becomes optimal (while remaining feasible), and problem (\ref{eq:f_infeas}) identifies alternative instances where the current optimal policy becomes infeasible.

We now interpret Lemma \ref{lemma: counter_set_calculation}. This lemma identifies the instance ${\boldsymbol{\lambda}}^*(\mathbf{w})\in\mathcal{A}(\boldsymbol{\mu})$ that is closest to $\boldsymbol{\mu}$ under the weighted distance defined by $\mathbf{w}$. Intuitively, ${\boldsymbol{\lambda}}^*(\mathbf{w})$ is the ``most confusing" alternative instance to $\boldsymbol{\mu}$. Specifically, suppose the true instance is $\boldsymbol{\mu}$ and we sample according to proportions $\mathbf{w}$. Then ${\boldsymbol{\lambda}}^*(\mathbf{w})$ is the alternative instance that: (a) yields a different best policy, i.e., $k^*(\boldsymbol{\mu})\not=k^*({\boldsymbol{\lambda}}^*(\mathbf{w}))$, and (b) is hardest to distinguish from $\boldsymbol{\mu}$ among all such alternative instances.

%% file: sections/3_algorithm/2_optimal_weights.tex
\subsection{Solving the Optimal Weights}
\label{subsec: opt_weights}
Section \ref{subsec:counter_set} solved the internal optimization problem for fixed $\mathbf{w}$, yielding $\boldsymbol{\lambda}^*(\mathbf{w})\in\mathcal{A}(\boldsymbol{\mu})$. We now use this solution to find the optimal weights $\mathbf{w}^*$.

The general idea is to solve $\boldsymbol{\lambda}^*$ from the internal optimization problem (\ref{eq:internal_opt}) at each optimization loop, acquire the subgradients w.r.t. $\mathbf{w}$, and perform projected subgradient ascent to solve the external optimization problem 
\begin{equation}
    \label{eq:external_opt}
    \mathbf{w}^*({\boldsymbol{\mu}})=\arg\max_{\mathbf{w}\in\Sigma_{K\times L}}f_{{\boldsymbol{\mu}}}^*(\mathbf{w}), 
\end{equation}
where the notation $\mathbf{w}^*(\boldsymbol{\mu})$ illustrates that the optimal weight allocation $\mathbf{w}^*$ relies on the instance value $\boldsymbol{\mu}$. In the following, we sometimes use only $\mathbf{w}^*$ for simplicity.

We first consider the calculation of $\mathbf{w}^*(\boldsymbol{\mu})$ in a special case: if $\mathcal{C}({\boldsymbol{\mu}})=\emptyset$, we have a shortcut to calculate $\mathbf{w}^*$, provided in the following lemma.
\begin{lemma}
\label{lemma:weight_infeas}
    If $\mathcal{C}({\boldsymbol{\mu}})=\emptyset$, we have for each $k\in[K]$,
    \begin{equation}
\label{eq:weights_emptyset_overall}
\begin{aligned}
&\left(\mathbf{w}^*\right)_{k,l(k)}=\frac{1}{\mathrm{d}_k}\left(\sum_{k\in[K]}\frac{1}{\mathrm{d}_k}\right)^{-1};\\
&(\mathbf{w}^*)_{k,l}=0,\quad \text{for}~l\in[L],~l\neq l(k), 
\end{aligned}
\end{equation} 
where $\mathrm{d}_k$ is short for $\mathrm{d}(\mu_{k,l(k)},C_\mathrm{min})$, $l(k)=\arg\max_{l\in[L],\mu_{k,l}<C_\mathrm{min}}d(\mu_{k,l},C_\mathrm{min})$.
\end{lemma}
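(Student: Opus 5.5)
Let me first reduce the objective using Lemma \ref{lemma: counter_set_calculation}. When $\mathcal{C}(\boldsymbol{\mu})=\emptyset$, that lemma gives $f^*_{\boldsymbol{\mu}}(\mathbf{w})=\min_{k\in[K]} f^k_{\boldsymbol{\mu}}(\mathbf{w})$. Each $f^k_{\boldsymbol{\mu}}$ is a separable problem in $\lambda_{k,1},\ldots,\lambda_{k,L}$, so it splits into one problem per coordinate. For coordinate $l$:
\begin{itemize}
\item if $\mu_{k,l}\ge C_{\min}$, choose $\lambda_{k,l}=\mu_{k,l}$, which contributes $0$;
\item if $\mu_{k,l}<C_{\min}$, the divergence $\mathrm{d}(\mu_{k,l},\cdot)$ is increasing on $[\mu_{k,l},\infty)$ in a one-parameter exponential family, so the minimizer is $\lambda_{k,l}=C_{\min}$.
\end{itemize}
Hence
\[
f^k_{\boldsymbol{\mu}}(\mathbf{w})=\sum_{l\in B_k} w_{k,l}\,\mathrm{d}(\mu_{k,l},C_{\min}),\qquad B_k=\{l:\mu_{k,l}<C_{\min}\}.
\]
Since every instance in $\mathcal{S}$ with empty feasible set has each policy strictly violating at least one constraint, every $B_k$ is nonempty. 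So $l(k)$ is well defined and $\mathrm{d}_k>0$.

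Next I would reduce to weights supported on $\{(k,l(k))\}$. Take any $\mathbf{w}\in\Sigma_{K\times L}$ and let $W_k=\sum_l w_{k,l}$. Then
\[
f^k_{\boldsymbol{\mu}}(\mathbf{w})\le \sum_{l\in B_k} w_{k,l}\,\mathrm{d}_k\le W_k\,\mathrm{d}_k,
\]
using the maximality of $\mathrm{d}_k$ over $B_k$. Putting all of row $k$'s mass on $(k,l(k))$ achieves exactly $W_k\mathrm{d}_k$ while keeping $\mathbf{w}$ in the simplex. This can only increase each $f^k$, and therefore increases their minimum. The outer problem thus reduces to
\[
\max_{W\in\Sigma_K}\ \min_k W_k\mathrm{d}_k.
\]

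Finally I solve this max-min problem. At an optimum all products $W_k\mathrm{d}_k$ are equal. If they were not, I would move a small amount of mass from an index achieving a strictly larger product to those achieving the minimum, which strictly increases the minimum when some index is strictly above it (or directly contradicts optimality otherwise). The equalization argument is the main thing to write carefully.

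A cleaner route avoids the perturbation argument. For any $W\in\Sigma_K$, write $m=\min_k W_k\mathrm{d}_k$. Then $W_k\ge m/\mathrm{d}_k$ for every $k$, and summing gives
\[
1\ge m\sum_k \frac{1}{\mathrm{d}_k},\qquad\text{so}\qquad m\le \Big(\sum_k \frac{1}{\mathrm{d}_k}\Big)^{-1}.
\]
The choice $W_k=\mathrm{d}_k^{-1}\big(\sum_j \mathrm{d}_j^{-1}\big)^{-1}$ attains this bound with equality. This yields \eqref{eq:weights_emptyset_overall}, with zero weight off $l(k)$. If several subpopulations tie in the argmax defining $l(k)$, uniqueness fails, and I would remark that any such choice is optimal.
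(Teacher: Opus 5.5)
Your proof is correct and is essentially the paper's argument. The paper reduces $f^k_{\boldsymbol{\mu}}(\mathbf{w})$ to $\sum_{l:\mu_{k,l}<C_{\min}} w_{k,l}\mathrm{d}(\mu_{k,l},C_{\min})$ exactly as you do. It then bounds $\min_k f^k_{\boldsymbol{\mu}}(\mathbf{w})$ by the average $\sum_k \hat{\mathbf{w}}_{k,l(k)} f^k_{\boldsymbol{\mu}}(\mathbf{w})$ with weights proportional to $1/\mathrm{d}_k$, which is the same inequality as your summation of $W_k\ge m/\mathrm{d}_k$; your row-consolidation step is harmless but not needed. The paper also claims that the maximizer is unique when every $l(k)$ is unique, and tracking the equality cases in your chain ($W_k=m/\mathrm{d}_k$, all row mass on $l(k)$) gives this in one line, so add it next to your remark about ties.
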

The proof of Lemma \ref{lemma:weight_infeas} is given in Appendix \ref{appendix:lemma_1_proof}. The key idea is that at optimality, $f_{\boldsymbol{\mu}}^k(\mathbf{w}^*)$ should be equal across all policies $k\in [K]$. For fixed $k$, the sum
$$\sum_{l\in[L],\mu_{k,l}<C_\mathrm{min}}(\mathbf{w}^*)_{k,l}\mathrm{d}(\mu_{k,l},C_\mathrm{min})$$
is linear in $\{(\mathbf{w}^*)_{k,l}\}_{l\in[L]}$. To maximize a linear function, we place all weight for policy $k$ on the subpopulation $l(k)$ with the largest distance to $C_{\mathrm{min}}$. This gives equation (\ref{eq:weights_emptyset_overall}) under the constraint $\sum_{k\in[K],l\in[L]}w_{k,l}=1$.

The optimization problem becomes more complicated when $\mathcal{C}({\boldsymbol{\mu}})\neq\emptyset$. In this case, we use the projected gradient method (\cite{boyd2003subgradient}) to solve the external optimization problem (\ref{eq:external_opt}). We present the following lemma of calculating a subgradient of $f^*_{\boldsymbol{\mu}}(\mathbf{w})$.
\begin{lemma}
    When $\mathcal{A}({\boldsymbol{\mu}})\neq\emptyset$, we can construct a subgradient  $\mathbf{c}(\mathbf{w})\in\partial f^*_{\boldsymbol{\mu}}(\mathbf{w})$ as follows:
If $f^\mathrm{opt}_{\boldsymbol{\mu}}(\mathbf{w})<f^\mathrm{feas}_{\boldsymbol{\mu}}(\mathbf{w})$, we can construct a subgradient $\mathbf{c}(\mathbf{w})=\{c_{k,l}\}_{k\in[K],l\in[L]}$ of  $f^*_{\boldsymbol{\mu}}(\mathbf{w})$ (i.e., $\mathbf{c}(\mathbf{w})\in \partial f^*_{\boldsymbol{\mu}}(\mathbf{w})$) at $\mathbf{w}$ by setting

\begin{equation*}
    \begin{aligned}
    c_{1,l}&= \mathrm{d}(\mu_{1,l},\lambda^*_{1,l}),\quad \forall~l\in[L];\\
    c_{k,l}&=\mathrm{d}(\mu_{k,l},\lambda^*_{k,l}),\quad \forall~l\in[L];\\
    c_{i,l}&=0,\quad \text{for}~i\in[K],~i\not=1,k;
\end{aligned}
    \end{equation*}
where $k=\arg\min_i\{f^{\mathrm{opt}(i)}_{\boldsymbol{\mu}}(\mathbf{w})\}_{i\in[K],i\not=1}$, and $(\{\lambda^*_{1,l}\}_{l\in[L]},\{\lambda^*_{k,l}\}_{l\in[L]})$ is the optimal solution of optimization problem (\ref{eq:f_mu_opt}), corresponding to $f^{\mathrm{opt}(k)}_{\boldsymbol{\mu}}(\mathbf{w})$ with index $k$.

Otherwise if $f^\mathrm{opt}_{\boldsymbol{\mu}}(\mathbf{w})\geq f^\mathrm{feas}_{\boldsymbol{\mu}}(\mathbf{w})$, we can construct a subgradient $\mathbf{c}(\mathbf{w})=\{c_{k,l}\}_{k\in[K],l\in[L]}$ of  $f^*_{\boldsymbol{\mu}}(\mathbf{w})$ at $\mathbf{w}$ by setting
\begin{equation*}
    \begin{aligned}
        c_{1,l^*}&=\mathrm{d}(\mu_{1,l^*},C_{\mathrm{min}}),
    \end{aligned}
\end{equation*}
and setting the rest of $\{c_{k,l}\}_{k\in[K],l\in[L],(k,l)\not=(1,l^*)}$ to $0$. Here $l^*$ is the optimal index of the optimization problem  (\ref{eq:f_infeas}), corresponding to $f_{\boldsymbol{\mu}}^{\mathrm{feas}}(\mathbf{w})$.
\label{lemma:subgradient}
\end{lemma}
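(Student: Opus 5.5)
Since $f^*_{\boldsymbol{\mu}}$ is concave but not differentiable, ``subgradient'' here means a supergradient: we must show that for every $\mathbf{w}'\in\Sigma_{K\times L}$,
\[
f^*_{\boldsymbol{\mu}}(\mathbf{w}')\le f^*_{\boldsymbol{\mu}}(\mathbf{w})+\langle \mathbf{c}(\mathbf{w}),\mathbf{w}'-\mathbf{w}\rangle .
\]
The plan is to use the standard fact that a pointwise infimum of linear functions is concave, and that the gradient of any active linear piece is a supergradient. By Lemma~\ref{lemma: counter_set_calculation}, we may assume $k^*(\boldsymbol{\mu})=1$. Then
\[
f^*_{\boldsymbol{\mu}}(\mathbf{w})=\min\Bigl\{\min_{k\ge 2} f^{\mathrm{opt}(k)}_{\boldsymbol{\mu}}(\mathbf{w}),\; f^{\mathrm{feas}}_{\boldsymbol{\mu}}(\mathbf{w})\Bigr\}.
\]
Each of these functions is itself an infimum, over a $\mathbf{w}$-independent feasible set of alternatives $\boldsymbol{\lambda}$, of the map $g_{\boldsymbol{\lambda}}(\mathbf{w})=\sum_{k,l}w_{k,l}\,\mathrm{d}(\mu_{k,l},\lambda_{k,l})$, which is linear in $\mathbf{w}$. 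For $f^{\mathrm{feas}}$, the alternative $\boldsymbol{\lambda}$ agrees with $\boldsymbol{\mu}$ except that $\lambda_{1,l}=C_{\min}$ in one coordinate $l$.

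\textbf{Key inequality.} Suppose $\boldsymbol{\lambda}^\dagger$ is an alternative attaining the infimum at $\mathbf{w}$, so that $f^*_{\boldsymbol{\mu}}(\mathbf{w})=g_{\boldsymbol{\lambda}^\dagger}(\mathbf{w})$. Because $\boldsymbol{\lambda}^\dagger$ is admissible for every $\mathbf{w}'$, we get
\[
f^*_{\boldsymbol{\mu}}(\mathbf{w}')\le g_{\boldsymbol{\lambda}^\dagger}(\mathbf{w}')=g_{\boldsymbol{\lambda}^\dagger}(\mathbf{w})+\langle\nabla g_{\boldsymbol{\lambda}^\dagger},\mathbf{w}'-\mathbf{w}\rangle = f^*_{\boldsymbol{\mu}}(\mathbf{w})+\langle\nabla g_{\boldsymbol{\lambda}^\dagger},\mathbf{w}'-\mathbf{w}\rangle .
\]
Here $\nabla g_{\boldsymbol{\lambda}^\dagger}$ has entries $\mathrm{d}(\mu_{k,l},\lambda^\dagger_{k,l})$. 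So $\nabla g_{\boldsymbol{\lambda}^\dagger}$ is a supergradient, and it remains to check that it equals $\mathbf{c}(\mathbf{w})$ in each case.

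\textbf{Case $f^{\mathrm{opt}}<f^{\mathrm{feas}}$.} The minimum is attained by $f^{\mathrm{opt}(k)}$ for the stated index $k$. Take $\boldsymbol{\lambda}^\dagger$ with rows $1$ and $k$ equal to the optimizer $(\lambda^*_{1,\cdot},\lambda^*_{k,\cdot})$ of (\ref{eq:f_mu_opt}), and all other rows equal to $\boldsymbol{\mu}$. The untouched rows contribute $\mathrm{d}(\mu,\mu)=0$, which reproduces exactly the claimed $\mathbf{c}$: entries $\mathrm{d}(\mu_{1,l},\lambda^*_{1,l})$ and $\mathrm{d}(\mu_{k,l},\lambda^*_{k,l})$ in rows $1$ and $k$, zeros elsewhere.

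\textbf{Case $f^{\mathrm{opt}}\ge f^{\mathrm{feas}}$.} Take $\boldsymbol{\lambda}^\dagger=\boldsymbol{\mu}$ except $\lambda^\dagger_{1,l^*}=C_{\min}$. Then $g_{\boldsymbol{\lambda}^\dagger}(\mathbf{w})=w_{1,l^*}\mathrm{d}(\mu_{1,l^*},C_{\min})=f^*_{\boldsymbol{\mu}}(\mathbf{w})$, and its gradient is the claimed $\mathbf{c}$.

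\textbf{Main obstacle.} The argument needs two facts that the statement takes for granted. First, the infimum must actually be attained: the constraint $\lambda_k>\lambda_1$ in (\ref{eq:f_mu_opt}) is strict, and the alternatives used in $f^{\mathrm{feas}}$ sit on the boundary of $\mathcal{S}$. Second, the attaining $\boldsymbol{\lambda}^\dagger$ must be admissible for every $\mathbf{w}'$.

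I would handle both by approximation rather than by requiring $\boldsymbol{\lambda}^\dagger\in\mathcal{A}(\boldsymbol{\mu})$ exactly. For the first fact, replace (\ref{eq:f_mu_opt}) by its closure $\lambda_k\ge\lambda_1$; the infimum is unchanged by continuity of $\mathrm{d}$, and the closed problem has a minimizer by coercivity of $\mathrm{d}(\mu,\cdot)$ in the exponential family. Rows with $w_{i,l}=0$ can be fixed at $\mu_{i,l}$ without changing the objective. For the second fact, take a sequence $\boldsymbol{\lambda}^{(n)}\in\mathcal{A}(\boldsymbol{\mu})$ converging to $\boldsymbol{\lambda}^\dagger$. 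Each satisfies $f^*_{\boldsymbol{\mu}}(\mathbf{w}')\le g_{\boldsymbol{\lambda}^{(n)}}(\mathbf{w}')$, and letting $n\to\infty$ gives the same inequality with $\boldsymbol{\lambda}^\dagger$ by continuity of $\mathrm{d}$. Since $g_{\boldsymbol{\lambda}}(\mathbf{w}')$ is linear in $\mathbf{w}'$ with coefficients continuous in $\boldsymbol{\lambda}$, the key inequality then holds for the limit $\boldsymbol{\lambda}^\dagger$.

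Ties, such as several minimizing $k$ or $f^{\mathrm{opt}}=f^{\mathrm{feas}}$, are harmless: any active piece yields a valid supergradient.
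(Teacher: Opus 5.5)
Your proposal is correct and follows the same route as the paper. In each case $\mathbf{c}(\mathbf{w})$ is the coefficient vector of an active linear piece $\mathbf{w}'\mapsto\sum_{k,l}w'_{k,l}\,\mathrm{d}(\mu_{k,l},\lambda^*_{k,l})$, which lies above $f^*_{\boldsymbol{\mu}}$ everywhere and touches it at $\mathbf{w}$. Your closure/approximation step is more careful than the paper, which simply asserts $\boldsymbol{\lambda}^*\in\mathcal{A}(\boldsymbol{\mu})$: the strict constraint in (\ref{eq:f_mu_opt}) and the choice $\lambda^*_{1,l^*}=C_{\min}$ (which leaves policy $1$ feasible) only place it in the closure of $\mathcal{A}(\boldsymbol{\mu})$. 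One slip: zero-weight entries cannot be frozen at $\mu_{i,l}$, since that can violate $\lambda_{k,l}\ge C_{\min}$ or change the optimal value; this does not affect the argument, because the lemma already presupposes an optimal solution.
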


In Appendix \ref{appendix:lemma_2_proof}, we prove that $\mathbf{c}(\mathbf{w})$ is a valid subgradient of $f^*_{\boldsymbol{\mu}}(\mathbf{w})$. The basic idea of proof comes from the calculation of  $f^*_{{\boldsymbol{\mu}}}(\mathbf{w})$ given in Lemma \ref{lemma: counter_set_calculation}. We know that either there exist $\boldsymbol{\lambda}\in\mathcal{S},~k\in[K]$ such that $f^*_{{\boldsymbol{\mu}}}(\mathbf{w})=\sum_{l\in[L]}w_{1,l}\mathrm{d}(\mu_{1,l},\lambda_{1,l})+\sum_{l\in[L]}w_{k,l}\mathrm{d}(\mu_{k,l},\lambda_{k,l})$, or there exists $l\in[L]$ such that $f^*_{{\boldsymbol{\mu}}}(\mathbf{w})=w_{1,l}\mathrm{d}(\mu_{1,l},C_\mathrm{min})$. In both cases we can write $f^*_{{\boldsymbol{\mu}}}(\mathbf{w})=\mathbf{c}(\mathbf{w})^T{\mathbf{w}}$ for some $\mathbf{c}(\mathbf{w})\in\mathbb{R}^{K \times L}$, so we can obtain a subgradient of $f^*_{{\boldsymbol{\mu}}}(\mathbf{w})$ given in Lemma \ref{lemma:subgradient}. 

Now, by performing projected subgradient method, we can solve the maximization problem (\ref{eq:external_opt}) by updating $\mathbf{w}^{(n+1)}=\mathbf{P}_{\Sigma_{K\times L}}\left(\mathbf{w}^{(n)}+\alpha_{n}\mathbf{c}(\mathbf{w}^{(n)})\right)$ iteratively with the projection operator $\mathbf{P}_{\Sigma_{K\times L}}(\cdot)$ and some proper stepsizes $\{\alpha_n\}$. The following Lemma shows that the projected subgradient algorithm applied to the external optimization problem (\ref{eq:external_opt}) yields an optimal solution  $\mathbf{w}^*$.
\begin{lemma}
\label{lemma:optimization_convergence}
    For the sequence $\{\mathbf{w}^{(n)}\}$ starting from an arbitrary point $\mathbf{w}^{(0)}$ and updated by
    \begin{equation*}
        \mathbf{w}^{(n+1)}=\mathbf{P}_{\Sigma_{K\times L}}\left(\mathbf{w}^{(n)}+\alpha_{n}\mathbf{c}(\mathbf{w}^{(n)})\right),
    \end{equation*}
    where $\mathbf{c}(\mathbf{w})$ is calculated as in Lemma \ref{lemma:subgradient}, if the sequence of step sizes $\{\alpha_n\}$ satisfies $\lim_{n\to\infty}\alpha_n=0,~\sum_{n=1}^\infty\alpha_k=\infty$,
    we have
    \begin{equation*}
        \lim_{n\to\infty}f_{\boldsymbol{\mu}}^*(\mathbf{w}^{(n)})=T^{*}(\boldsymbol{\mu})^{-1}.
    \end{equation*}
\end{lemma}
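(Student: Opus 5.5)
The plan is the standard convergence analysis of projected subgradient ascent for a concave, Lipschitz objective on a compact convex set. It has three ingredients, taken in order: concavity of $f^*_{\boldsymbol{\mu}}$, a uniform bound on $\|\mathbf{c}(\mathbf{w})\|$, and the usual distance-recursion argument under diminishing, non-summable step sizes.

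\textbf{Concavity and the subgradient.} By \eqref{eq:internal_opt}, $f^*_{\boldsymbol{\mu}}(\mathbf{w})$ is an infimum over $\boldsymbol{\lambda}\in\mathcal{A}(\boldsymbol{\mu})$ of functions that are linear in $\mathbf{w}$. Hence it is concave on $\Sigma_{K\times L}$. Lemma \ref{lemma:subgradient} supplies $\mathbf{c}(\mathbf{w})$ with $f^*_{\boldsymbol{\mu}}(\mathbf{w}')\le f^*_{\boldsymbol{\mu}}(\mathbf{w})+\mathbf{c}(\mathbf{w})^T(\mathbf{w}'-\mathbf{w})$ for all $\mathbf{w}'$; for concave maximization this is the supergradient inequality we need.

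\textbf{Bounded subgradients.} Next I show $\sup_{\mathbf{w}}\|\mathbf{c}(\mathbf{w})\|\le G<\infty$.
\begin{itemize}
\item In the feasibility branch, the only nonzero entry is $\mathrm{d}(\mu_{1,l^*},C_{\min})$, which is finite.
\item In the optimality branch, the entries are $\mathrm{d}(\mu_{i,l},\lambda^*_{i,l})$ for $i\in\{1,k\}$. These are bounded using only that $\boldsymbol{\lambda}^*$ solves \eqref{eq:f_mu_opt}, without invoking optimality of $\mathbf{w}$.
\end{itemize}
For the optimality branch, note that the explicit candidate $\lambda_{1,l}=\lambda_{k,l}=\max(\mu_{1,l},\mu_{k,l},C_{\min})+1$ is feasible for every $\mathbf{w}$ and has finite cost. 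A minimizer may also be chosen inside the box spanned by $\mu_{1,l}$, $\mu_{k,l}$, $C_{\min}$ and that candidate, because moving a coordinate toward its $\mu$ value within this region does not break the constraints needed and only lowers the divergence. So all $\lambda^*_{i,l}$ lie in a compact set $B$ independent of $\mathbf{w}$. Continuity of $\mathrm{d}$ on the compact set then gives a uniform bound $G$.

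This step is the main obstacle. When $w_{i,l}=0$, the coordinate $\lambda^*_{i,l}$ is not determined by the objective, so a careless minimizer could make $\mathrm{d}$ arbitrarily large. I will therefore fix the convention, implicitly assumed in Lemma \ref{lemma:subgradient}, that the minimizer is taken in $B$ (for example, $\lambda^*_{i,l}$ as close to $\mu_{i,l}$ as the constraints permit). The strict inequality in \eqref{eq:f_mu_opt} needs the same care: I will replace it by its closure, which does not change the infimum, so that a minimizer exists.

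\textbf{Convergence argument.} Let $\mathbf{w}^*$ be a maximizer; it exists by compactness and by upper semicontinuity of $f^*_{\boldsymbol{\mu}}$, which is an infimum of continuous functions. Its value is $T^*(\boldsymbol{\mu})^{-1}$ by \eqref{eq:T_star}. Since the projection onto the convex set $\Sigma_{K\times L}$ is nonexpansive,
\[
\|\mathbf{w}^{(n+1)}-\mathbf{w}^*\|^2\le\|\mathbf{w}^{(n)}-\mathbf{w}^*\|^2-2\alpha_n\bigl(T^*(\boldsymbol{\mu})^{-1}-f^*_{\boldsymbol{\mu}}(\mathbf{w}^{(n)})\bigr)+\alpha_n^2G^2 .
\]
Telescoping gives the standard bound $\min_{m\le n}\bigl(T^{*-1}-f^*(\mathbf{w}^{(m)})\bigr)\le\bigl(2+G^2\sum_{m\le n}\alpha_m^2\bigr)/\bigl(2\sum_{m\le n}\alpha_m\bigr)$; here $\mathrm{diam}(\Sigma)^2\le2$. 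Because $\alpha_n\to0$, we have $\sum_{m\le n}\alpha_m^2=o(\sum_{m\le n}\alpha_m)$, and with $\sum\alpha_n=\infty$ the best-iterate gap tends to $0$.

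To upgrade this to convergence of the whole sequence, I use the following argument. Fix $\epsilon>0$ and let $N$ be such that $\alpha_n G^2<\epsilon$ for $n\ge N$.
\begin{itemize}
\item Whenever the gap at step $n$ exceeds $\epsilon$, the recursion shows that $\mathrm{dist}(\mathbf{w}^{(n)},W^*)$ strictly decreases. Here $W^*$ is the set of maximizers, and I apply the recursion with the nearest maximizer.
\item Non-summability means the iterates must enter the region where the gap is at most $\epsilon$ infinitely often.
\item Once there, a single step moves $\mathbf{w}$ by at most $\alpha_nG\to0$. Concavity then bounds the excursion, via the uniform continuity of $f^*_{\boldsymbol{\mu}}$ as a $G$-Lipschitz function on $\Sigma_{K\times L}$. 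Lipschitz continuity follows because $f^*_{\boldsymbol{\mu}}$ is an infimum of linear functions whose coefficient vectors may be restricted to $B$.
\end{itemize}
Together these give $\limsup_n\bigl(T^{*-1}-f^*(\mathbf{w}^{(n)})\bigr)\le O(\epsilon)$. Since $\epsilon$ is arbitrary, $f^*_{\boldsymbol{\mu}}(\mathbf{w}^{(n)})\to T^*(\boldsymbol{\mu})^{-1}$.
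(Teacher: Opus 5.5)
Your route is the paper's route. The paper proves this lemma in one line: $f^*_{\boldsymbol{\mu}}$ is concave as an infimum of linear functions, and the textbook convergence of projected subgradient ascent with diminishing, non-summable steps then applies. You rightly note that this textbook result needs a uniform bound $\|\mathbf{c}(\mathbf{w})\|\le G$, which the paper never verifies. Given $G$, your distance recursion and the upgrade to the whole sequence are standard. One small correction: the upgrade actually yields $\limsup_n$ of the gap $\le G\,r(\epsilon)$, where $r(\epsilon)$ is the largest distance to $W^*$ of a point of $\Sigma_{K\times L}$ with gap at most $\epsilon$. This tends to $0$ by compactness and continuity, but need not be $O(\epsilon)$.

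\textbf{The gap is your proof of the bound $G$.} You claim a minimizer of \eqref{eq:f_mu_opt} can be taken in the box spanned by $\mu_{1,l},\mu_{k,l},C_{\min}$ and $\max(\mu_{1,l},\mu_{k,l},C_{\min})+1$. This is false, because moving a coordinate back toward its mean is exactly what breaks the coupling constraint: raising $\lambda_{1,l}$ raises $\bar\lambda_1$, and lowering $\lambda_{k,l}$ lowers $\bar\lambda_k$.

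A concrete counterexample: take the unit-variance Gaussian family with $K=L=2$, $\mathbf{q}=(1/2,1/2)$, $C_{\min}=0$, $\boldsymbol{\mu}_1=(3,3)$ and $\boldsymbol{\mu}_2=(1,1)$. Let $\mathbf{w}$ put mass $1/3$ on $(1,1),(1,2),(2,1)$ and set $w_{2,2}=0$.
\begin{itemize}
\item Then $f^{\mathrm{opt}(2)}_{\boldsymbol{\mu}}(\mathbf{w})=0<3/2=f^{\mathrm{feas}}_{\boldsymbol{\mu}}(\mathbf{w})$, so Lemma~\ref{lemma:subgradient} uses the optimality branch.
\item Every minimizer has $\lambda_{2,2}\ge 5$, which lies outside your box $[0,4]$.
\item Imposing $\lambda_{2,2}\le 4$ pushes a residual shift onto coordinates of weight $1/3$, so even the value of the problem changes.
\item For $w_{2,2}=\varepsilon\downarrow 0$, the unique minimizer has $\lambda_{2,2}\to 5$.
\end{itemize}
Your Lipschitz claim for $f^*_{\boldsymbol{\mu}}$ rests on the same box, so it falls with it.

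\textbf{For the Gaussian family the bound can be rescued by a different argument.} Take the minimizer with $\lambda_{1,l}\le\mu_{1,l}$ and $\lambda_{k,l}\ge\mu_{k,l}$. You obtain it by pulling coordinates back toward their means, which never increases the cost, until either the coupling constraint is tight or every coordinate sits at $\mu_{1,l}$, respectively $\max(\mu_{k,l},C_{\min})$. In the tight case, the nonnegative numbers $q_l(\mu_{1,l}-\lambda_{1,l})$ and $q_l(\lambda_{k,l}-\mu_{k,l})$ sum to $\Delta=\bar\mu_1-\bar\mu_k$. Hence
\[
|\lambda^*_{i,l}-\mu_{i,l}|\le\max\bigl(\max(\Delta,0)/q_l,\,(C_{\min}-\mu_{k,l})^+\bigr),
\]
where coordinates with $q_l=0$ only undergo the push to $C_{\min}$. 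This is a compact set independent of $\mathbf{w}$, which gives $G$.

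\textbf{For the general one-parameter family in the statement, the bound is genuinely false.} If the mean space is bounded and $\mu_{k,l}+\Delta/q_l$ lies outside it, then as $w_{k,l}\to0$ the minimizer is driven to the boundary. The entry $c_{k,l}=\mathrm{d}(\mu_{k,l},\lambda^*_{k,l})$ then grows like $\ln(1/w_{k,l})$ while the optimality branch stays active. An example is Bernoulli with $K=L=2$, $\mathbf{q}=(1/2,1/2)$, $C_{\min}=0.1$, $\boldsymbol{\mu}_1=(0.9,0.9)$, $\boldsymbol{\mu}_2=(0.2,0.5)$, and $w_{2,2}\to0$ with the other three weights equal.

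At $w_{2,2}=0$, a point that projections onto the simplex produce routinely, the infimum is not attained, so $\mathbf{c}(\mathbf{w})$ is not even defined. In this regime $f^*_{\boldsymbol{\mu}}$ is not Lipschitz on $\Sigma_{K\times L}$. Neither your argument nor the result the paper cites applies without an extra hypothesis. Two options are to restrict to the Gaussian family, or to confine $\boldsymbol{\lambda}$ to a compact subset of the mean space, as the paper's Bernoulli implementation does with $[10^{-6},1-10^{-6}]$.
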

The proof of Lemma \ref{lemma:optimization_convergence} is given in Appendix \ref{appendix:lemma_3_proof}.

%% file: sections/3_algorithm/3_sample_allocation.tex
\subsection{Calculating the Sample Allocation Rule}
\label{subsec:sample_alloc}
In this section, we further apply the solution of the external optimization problem (\ref{eq:external_opt}) to derive the sample allocation rule of the T-a-S-CS algorithm in each round $t=1,2,\cdots$.

First, the algorithm computes the estimated expected performance of all policies and all subpopulations, denoted by $\hat{\boldsymbol{\mu}}(t)\in\mathbb{R}^{K\times L}$. Specifically, we have
\begin{equation}
\label{eq:est_performance}
\hat{\mu}_{k,l}(t)=\frac{1}{N_{k,l}(t)}\sum_{s=1}^t X_s\cdot \boldsymbol{1}\{(A_s,I_s)=(k,l)\},
\end{equation}
where $N_{k,l}(t)$ is the number of samples drawn from policy and subpopulation $(k,l)$ up to time step $t$, namely, $N_{k,l}(t)=\sum_{s=1}^t\boldsymbol{1}\{(A_s,I_s)=(k,l)\}$. 

Next, the maximizer $\mathbf{w}^*(\hat{\boldsymbol{\mu}}(t))$ of problem \eqref{eq:T_star} is then computed with $\boldsymbol{\mu}$ replaced by $\hat{\boldsymbol{\mu}}(t)$. We additionally note that the solution of (\ref{eq:external_opt}) is not necessarily unique. Strictly speaking, we acquire one solution  $\mathbf{w}_t\in \mathbf{w}^*(\hat{\boldsymbol{\mu}}(t))$.

Given the estimated optimal weights $\mathbf{w}_t$ at time $t$, the T-a-S-CS algorithm uses the C-tracking rule from \cite{garivier2016optimal} to select the next policy-subpopulation pair $(A_{t+1},I_{t+1})$. Specifically, let $\mathbf{w}^{(\varepsilon)}_t$ be the $L^{\infty}$ projection of $\mathbf{w}_t$ onto $\Sigma_{K\times L}^{\varepsilon}=\{(w_1,\ldots,w_{KL})\in[\varepsilon,1]^{K\times L}\mid w_1+\cdots+w_{KL}=1\}$, with $\varepsilon_t=\left(K^2L^2+t\right)^{-1/2}/2$. The algorithm selects
\begin{equation}\label{eq:c_tracking}
\left(A_{t+1},I_{t+1}\right)\in\arg\max_{(k,l)}\left(\sum_{s=0}^tw_{k,l}^{\varepsilon_s}(\hat{\boldsymbol{\mu}}(s))-N_{k,l}(t)\right).    
\end{equation}
This rule ensures $N_{k,l}(t)$ tracks $\sum_{s=0}^tw_{k,l}^{\varepsilon_s}(\hat{\boldsymbol{\mu}}(s))$, which converges to $tw^*_{k,l}({\boldsymbol{\mu}})$. As will be shown in Section \ref{subsec:sample_complexity}, this yields asymptotically optimal sample allocation that achieves the lower bound \eqref{eq:lower_bound}. 

%% file: sections/3_algorithm/4_stopping_rule.tex
\subsection{Stopping Rule and Recommendation}
\label{subsec:stop_rule}

In this section, we discuss the stopping rule of the T-a-S-CS algorithm and then provide the whole framework of the T-a-S-CS algorithm.
Stopping conditions of the ranking and selection algorithms have been discussed in literature such as \cite{hong2021review}. In our work, following the idea of \cite{russac2021b},  we use the Chernoff's Generalized Likelihood Ratio statistic:
\begin{equation}
\label{eq:stopping_statistics}
Z(t)=\inf_{{\boldsymbol{\lambda}}\in \mathcal{A}({\boldsymbol{\hat\mu(t)}})}\sum_{k\in[K]}\sum_{l\in[L]}N_{k,l}(t)d\left(\hat\mu_{k,l}(t),\lambda_{k,l}\right).
\end{equation}
We define the stopping time $\tau_\delta$ as 
\begin{equation*}
\tau_\delta = \inf_{t\in \mathbb{N}} \{Z(t)>\beta(t,\delta)\},
\end{equation*}
for a given risk level $0<\delta<1$.
Note that the calculation of $Z(t)$ has a similar form as the internal optimization problem (\ref{eq:internal_opt}). Denote the empirical weights as $(\hat{\mathbf{w}}_t)_{k,l}:=N_{k,l}(t)/t$, and $Z(t)$ can be written as $
Z(t)=tf^*_{\hat{\boldsymbol{\mu}}(t)}(\hat{\mathbf{w}}_t)$, therefore, $Z(t)$ can be calculated the same way as in Section \ref{subsec:counter_set}. 

By Proposition 15 of \cite{kaufmann2021mixture}, the threshold $\beta(t,\delta)$ can be set as $$\beta(t,\delta)=O\left(L\ln\ln t+\ln\frac{K}{\delta}\right)$$ to ensure the algorithm to be $\delta$-PAC. In practice, as suggested by \cite{russac2021b}, we use instead the stylized
$\ln({(1+\ln t)}/{\delta})$, which is less conservative. The final recommendation $\hat{k}^*_{\tau_\delta}$ is the best feasible policy of $\hat{\boldsymbol{\mu}}(\tau_\delta)$, i.e. 
$$    \hat{k}^*_{\tau_\delta}=\arg\max_{k\in{\mathcal{C}(\hat{\boldsymbol{\mu}}(\tau_\delta))}}\hat{\mu}_{k}.$$
We take $\hat{k}^*_{\tau_\delta}=0$ if $\mathcal{C}(\hat{\boldsymbol{\mu}}(\tau_{\delta}))=\emptyset$. The overall framework of the T-a-S-CS algorithm is provided in Algorithm \ref{alg:TaSCS}.

\begin{algorithm}[!t] 
\caption{T-a-S-CS algorithm framework}
\label{alg:TaSCS}
\begin{algorithmic}[1]
\REQUIRE Problem: $K,L,C_\mathrm{min},(q_1,\ldots,q_L),\delta$; Parameters: $T_\mathrm{max},N_\mathrm{max},\{\alpha_n\},N_0,\hat{\mathbf{w}}_0$
\FOR{$k\in[K], l\in[L]$} 
\STATE $N_{k,l}(0)\leftarrow N_0$, $S_{k,l}(0)\leftarrow(\hat{\mathbf{w}}_0)_{k,l}-N_0$, draw $N_0$ samples for $\hat{\mu}_{k,l}(0)$
\ENDFOR
\WHILE{$t\leq T_\mathrm{max}$}
\IF{$\mathcal{C}(\hat{\boldsymbol{\mu}}(t))=\emptyset$}
\STATE Calculate $\hat{\mathbf{w}}_t$ using (\ref{eq:weights_emptyset_overall})
\ELSE
\STATE $\mathbf{w}^{(0)}=\hat{\mathbf{w}}_{t-1}$
\FOR{$n=1,\ldots,N_\mathrm{max}$}
\STATE Compute $\mathbf{c}(\mathbf{w}^{(n)})\in\partial f^*_{\hat{\boldsymbol{\mu}}(t)}(\mathbf{w}^{(n)})$ via Algorithm \ref{alg:internal}
\STATE $\mathbf{w}^{(n+1)}\leftarrow\mathbf{P}_{\Sigma_{K\times L}}(\mathbf{w}^{(n)}+\alpha_n\mathbf{c}(\mathbf{w}^{(n)}))$ 
\ENDFOR
\STATE $\hat{\mathbf{w}}_t\leftarrow\mathbf{w}^{(N_\mathrm{max})}$
\ENDIF
\STATE Compute $\varepsilon_t\leftarrow (K^2L^2+t)^{-1/2}/2$, project $\hat{\mathbf{w}}_t^{(\varepsilon)}\leftarrow \mathbf{P}_{\Sigma^{\varepsilon_t}_{K\times L}}(\hat{\mathbf{w}}_t)$
\STATE Update $\mathbf{S}(t)\leftarrow\mathbf{S}(t-1)+\hat{\mathbf{w}}_t^{(\varepsilon)}$
\STATE \textbf{Sample} $(A_{t+1},I_{t+1})=\arg\max_{k,l}S_{k,l}(t)$, obtain $X_t$
\STATE Update $N_{A_t,I_t}(t+1),S_{A_t,I_t}(t+1),\hat{\mu}_{A_t,I_t}(t+1)$
\STATE Compute $Z(t+1)$ via Algorithm \ref{alg:internal} with $\hat{\mathbf{w}}_t=\mathbf{N}(t+1)/(t+1)$
\IF{$Z(t+1)>\ln((1+\ln(t+1))/\delta)$}
\RETURN $k^*(\hat{\boldsymbol{\mu}}(t+1))$
\ENDIF
\ENDWHILE
\end{algorithmic}
\end{algorithm}

\begin{algorithm}[!t]
\caption{Internal optimization algorithm}
\label{alg:internal}
\begin{algorithmic}[1]
\REQUIRE $K,L,\mathrm{d}(\cdot),\mathbf{w},\boldsymbol{\mu}$
\STATE Initialize $\mathbf{c}=\mathbf{0}$
\STATE Compute $f_1\leftarrow f^\mathrm{feas}_{\boldsymbol{\mu}}(\mathbf{w})$ and $l^*$ via (\ref{eq:f_infeas})
\FOR{$k=2,\ldots, K$}
\STATE Compute $f_k\leftarrow f^{\mathrm{opt}(k)}_{\boldsymbol{\mu}}(\mathbf{w})$, $\mathbf{\lambda}_1(k)$, $\mathbf{\lambda}(k)$ via (\ref{eq:f_mu_opt})
\ENDFOR
\STATE $k^*\leftarrow \arg\min_k f_k$
\IF{$k^*=1$}
\STATE $\mathbf{c}_{1,l^*}=\mathrm{d}(\mu_{1,l^*},C_\mathrm{min})$
\ELSE
\STATE $\mathbf{c}_1\leftarrow\mathrm{d}(\mathbf{\mu}_1,\mathbf{\lambda}_1(k^*))$, $\mathbf{c}_{k^*}\leftarrow\mathrm{d}(\mathbf{\mu}_{k^*},\mathbf{\lambda}(k^*))$
\ENDIF
\RETURN $\mathbf{c}$, $f_{k^*}$
\end{algorithmic}
\end{algorithm}

We briefly discuss the computational tractability of the nested optimization required by T-a-S-CS. The internal optimization (\ref{eq:internal_opt}) is a constrained minimization of a weighted sum of KL-divergences, whose structure depends on the exponential family $\mathcal{P}$. For the Gaussian family (known variance), the KL-divergence $\mathrm{d}(\mu,\lambda)=(\mu-\lambda)^2/(2\sigma^2)$ is quadratic, so each internal subproblem reduces to a convex quadratic program (QP) with linear constraints. For other exponential families (e.g., Bernoulli, Poisson, gamma), the KL-divergence is convex but nonlinear in $\lambda$, so the internal problems remain convex programs solvable by general-purpose methods (e.g., interior-point algorithms). The external optimization (\ref{eq:external_opt}) is a concave maximization over the simplex $\Sigma_{K\times L}$, solved by projected subgradient ascent regardless of the choice of $\mathcal{P}$. Detailed formulations for the Gaussian case and other representative cases are provided in Appendix~\ref{appendix:gaussian_sbfc} and Appendix~\ref{appendix:bernoulli_subproblem}.

%% file: sections/3_algorithm/5_sample_complexity.tex
\subsection{Sample Complexity of the T-a-S-CS Algorithm}
\label{subsec:sample_complexity}
We now give the convergence result of the T-a-S-CS algorithm, which matches the asymptotic optimal lower bound given by \eqref{eq:lower_bound}:

\begin{theorem}\label{thm:opt_convergnce}
For every instance ${\boldsymbol{\mu}}\in\mathcal{S}$, the T-a-S-CS algorithm is $\delta$-PAC and achieves the asymptotic instance-specific lower bound, i.e,
\begin{equation}
    \lim_{\delta\to 0}\frac{\mathbb{E}_{\boldsymbol{\mu}}[\tau_\delta]}{\ln(1/\delta)}=T^*({\boldsymbol{\mu}}).
\end{equation}
\end{theorem}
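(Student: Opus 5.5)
The plan follows the Track-and-Stop template of \cite{garivier2016optimal}. The argument splits into three parts: the $\delta$-PAC property, an almost-sure upper bound on $\tau_\delta$, and an upper bound in expectation. Combining the last one with the lower bound of Theorem~\ref{thm:lower_bound} gives the stated limit.

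\textbf{$\delta$-PAC.} On the event $\{\hat{k}^*_{\tau_\delta}\neq k^*(\boldsymbol{\mu})\}$, we have $\boldsymbol{\mu}\in\mathcal{A}(\hat{\boldsymbol{\mu}}(\tau_\delta))$, since $\boldsymbol{\mu}\in\mathcal{S}$ and the recommendation is $k^*(\hat{\boldsymbol{\mu}}(\tau_\delta))$. Hence
\[
\beta(\tau_\delta,\delta)<Z(\tau_\delta)\le \sum_{k,l}N_{k,l}(\tau_\delta)\,\mathrm{d}\bigl(\hat\mu_{k,l}(\tau_\delta),\mu_{k,l}\bigr).
\]
The error probability is therefore at most the probability that the self-normalized sum of $KL$ per-cell KL deviations ever exceeds $\beta(t,\delta)$. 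The mixture-martingale deviation inequality of \cite{kaufmann2021mixture} (Proposition 15) bounds this by $\delta$ for $\beta(t,\delta)=O(L\ln\ln t+\ln(K/\delta))$.

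\textbf{Continuity of the optimal weights.} Next I show that $\boldsymbol{\mu}\mapsto T^*(\boldsymbol{\mu})^{-1}$ is continuous at every $\boldsymbol{\mu}\in\mathcal{S}$, and that the argmax correspondence $\mathbf{w}^*(\cdot)$ is upper hemicontinuous with convex values. The idea is that the strict gaps in Definition~\ref{def:S} make $k^*(\cdot)$ and $\mathcal{C}(\cdot)$ locally constant near $\boldsymbol{\mu}$. Consequently the decomposition of Lemma~\ref{lemma: counter_set_calculation} uses the same index structure for all nearby $\boldsymbol{\mu}'$. Each piece ($f^{\mathrm{opt}(k)}$, $f^{\mathrm{feas}}$, $f^k$) is an infimum of functions jointly continuous in $(\mathbf{w},\boldsymbol{\mu}')$ over constraint sets that do not depend on $\boldsymbol{\mu}'$. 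With a little care one can restrict to compact sets of $\boldsymbol{\lambda}$, so each piece is jointly continuous. Berge's maximum theorem then gives the claim, since concavity of $f^*$ in $\mathbf{w}$ yields convex argmax sets.

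\textbf{Tracking and almost-sure bound.} By forced exploration, C-tracking guarantees $N_{k,l}(t)\ge\sqrt{t+K^2L^2}-2KL$, so $\hat{\boldsymbol{\mu}}(t)\to\boldsymbol{\mu}$ almost surely. Lemma 7 of \cite{garivier2016optimal}, applied in dimension $KL$, gives $\mathrm{dist}(\mathbf{N}(t)/t,\mathbf{w}^*(\boldsymbol{\mu}))\to 0$. Write $Z(t)=t f^*_{\hat{\boldsymbol{\mu}}(t)}(\hat{\mathbf{w}}_t)$. Joint continuity then gives $Z(t)/t\to T^*(\boldsymbol{\mu})^{-1}$ almost surely, and with $\beta(t,\delta)\sim\ln(1/\delta)$ this yields
\[
\limsup_{\delta\to0}\frac{\tau_\delta}{\ln(1/\delta)}\le T^*(\boldsymbol{\mu})\quad\text{a.s.}
\]

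\textbf{Expectation bound.} For $\epsilon>0$, define the good event $\mathcal{E}_T$ on which, for all $t\ge T^{1/4}$, both $\hat{\boldsymbol{\mu}}(t)$ lies in a small neighbourhood of $\boldsymbol{\mu}$ and the tracked weights are $\epsilon$-close to $\mathbf{w}^*$. Cramér–Chernoff bounds for exponential families together with the forced exploration give $\sum_T\mathbb{P}(\mathcal{E}_T^c)<\infty$. On $\mathcal{E}_T$, continuity gives $Z(t)\ge t\,(T^*(\boldsymbol{\mu})^{-1}-\eta(\epsilon))$ for $t\ge\sqrt{T}$, so $\tau_\delta\le T$ once $T$ exceeds roughly $T^*(\boldsymbol{\mu})\ln(1/\delta)/(1-\eta')$. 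Summing yields
\[
\mathbb{E}[\tau_\delta]\le \frac{T^*(\boldsymbol{\mu})\ln(1/\delta)}{1-\eta'}+o(\ln(1/\delta)).
\]
Letting $\epsilon\to0$ and invoking Theorem~\ref{thm:lower_bound} gives the limit.

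The main obstacle is the continuity step. Two boundary cases need care. First, in the case $\mathcal{C}(\boldsymbol{\mu})\neq\emptyset$, the open constraint $\lambda_k>\lambda_1$ in (\ref{eq:f_mu_opt}) must be replaced by its closure without changing the infimum. Second, the infimum must remain attained uniformly near $\boldsymbol{\mu}$. I would handle both by using the KL coercivity of $\mathrm{d}$ to confine $\boldsymbol{\lambda}$ to a compact set.
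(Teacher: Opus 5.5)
Your proposal is correct and follows the same Track-and-Stop template as the paper's proof. Local constancy of $k^*(\cdot)$ near $\boldsymbol{\mu}\in\mathcal{S}$ gives continuity of $f^*_{\boldsymbol{\mu}}(\mathbf{w})$, Berge's theorem then gives an upper hemicontinuous, convex-valued $\mathbf{w}^*(\cdot)$, C-tracking with forced exploration drives $\hat{\mathbf{w}}_t$ to $\mathbf{w}^*(\boldsymbol{\mu})$, and the GLR threshold from \cite{kaufmann2021mixture} gives $\delta$-PAC. The differences are only in packaging: the paper gets continuity in $\boldsymbol{\mu}$ from a generalized Berge theorem applied to the locally constant counter set $\mathcal{A}(\boldsymbol{\mu})$ rather than through the decomposition of Lemma~\ref{lemma: counter_set_calculation}, and it invokes Theorem~7 of \cite{degenne2019pure} for the expectation bound instead of re-deriving the good-event argument as you do.
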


The proof of Theorem \ref{thm:opt_convergnce} is given in Appendix \ref{appendix:thm_conv_proof}. The main idea of the proof is briefly stated as follows: The convergence result presented in the theorem is based on the convergence of $\hat{\boldsymbol{\mu}}(t)\to{\boldsymbol{\mu}}$, which further guarantees the convergence of $\inf_{\mathbf{w}\in \mathbf{w}^*({\boldsymbol{\mu}})}\big\Vert\hat{\mathbf{w}}_t-\mathbf{w}\big\Vert_\infty\to 0$ because of the continuity of ${\boldsymbol{\mu}}\mapsto \mathbf{w}^*({\boldsymbol{\mu}})$. Further, the sampling strategy ensures that $N_{k,l}(t)\approx \sum_{s=0}^{t-1}w_{k,l}^*(\hat{\boldsymbol{\mu}}(s))$. We can then use the proof of Theorem 7 of \cite{degenne2019pure}, which uses the above convergence results of $\hat{\boldsymbol{\mu}}(t)$, $\hat{\mathbf{w}}_t$ and $N_{k,l}(t)$ to derive an upper bound of the stopping time $\tau_\delta$.

%% file: sections/4_extensions/1_generalized_constraints.tex
\subsection{Generalized Fairness Constraints: \texorpdfstring{$\mathcal{M}$-SBFC}{M-SBFC}}
\label{subsec:first_extension}

We generalize the linear fairness constraint $\mu_{k,l}\geq C_{\text{min}},~\forall l\in[L]$ to a set-based constraint $\mathbf{\mu}_k\in\mathcal{M}$, where $\mathcal{M} \subseteq \mathbb{R}^L$ is a closed set. The feasible set becomes:
\begin{equation*}
    \mathcal{C}(\boldsymbol{\mu},\mathcal{M})=\left\{k\in[K]\left|~\mathbf{\mu}_k\in\mathcal{M}\right.\right\}.
\end{equation*}
This enables flexible fairness specifications (e.g., quadratic constraints, ratio-based requirements, or ``at least $p\%$ of subpopulations exceed threshold'') while maintaining the theoretical framework of Section 3. Analogous to Definition \ref{def:S}, we impose the structural requirement that the optimal policy's mean performance $\mathbf{\mu}_{k^*}$ lies strictly in the interior of $\mathcal{M}$ (with positive distance from the boundary) and dominates other feasible policies by a positive gap. These conditions generalize the strict inequalities in Definition \ref{def:S} and are necessary for the convergence analysis in Appendix \ref{appendix:thm_conv_ext1_proof}.

\subsubsection{Counter Set Calculation}

The key modification is in the internal optimization problem, where the counter set becomes $\mathcal{A}(\boldsymbol{\mu}, \mathcal{M})$ and problem (\ref{eq:internal_opt}) becomes:
\begin{equation}
    \label{eq:internal_opt_ext1}
    f_{\boldsymbol{\mu},\mathcal{M}}^*(\mathbf{w})=\inf_{{\boldsymbol{\lambda}}\in \mathcal{A}(\boldsymbol{\mu},\mathcal{M})}\sum_{k\in[K]}\sum_{l\in[L]}w_{k,l}\mathrm{d}\left(\mu_{k,l},\lambda_{k,l}\right).
\end{equation}

\begin{lemma}
\label{lemma: counter_set_calculation_ext1}
The calculation of $f^*_{\boldsymbol{\mu},\mathcal{M}}(\mathbf{w})$ in (\ref{eq:internal_opt_ext1}) parallels Lemma \ref{lemma: counter_set_calculation} with $\mathcal{M}$ replacing the linear constraints:

If $\mathcal{C}(\boldsymbol{\mu})=\emptyset$, we have $f_{\boldsymbol{\mu},\mathcal{M}}^*(\mathbf{w})=\min\left\{f_{\boldsymbol{\mu},\mathcal{M}}^{1}(\mathbf{w}),\ldots,f_{\boldsymbol{\mu},\mathcal{M}}^{K}(\mathbf{w})\right\}$, where
\begin{equation}
\label{eq:f_star_k_0_ext1}
\begin{aligned}
f_{\boldsymbol{\mu},\mathcal{M}}^k(\mathbf{w})
&=\inf_{\lambda_{k,1},\ldots,\lambda_{k,L}\in\mathbb{R}}~\sum_{l\in[L]}w_{k,l}\mathrm{d}(\mu_{k,l},\lambda_{k,l}),\\
&\quad\text{s.t.}~{\mathbf{\lambda}}_{k}\in\mathcal{M}.
\end{aligned}
\end{equation}

If $\mathcal{C}(\boldsymbol{\mu})\not=\emptyset$ with $k^*(\boldsymbol{\mu})=1$, we have $f^*_{\boldsymbol{\mu},\mathcal{M}}(\mathbf{w})=\min\left\{f^\mathrm{opt}_{\boldsymbol{\mu},\mathcal{M}}(\mathbf{w}),f^\mathrm{feas}_{\boldsymbol{\mu},\mathcal{M}}(\mathbf{w})\right\}$, where
$f^\mathrm{opt}_{\boldsymbol{\mu},\mathcal{M}}(\mathbf{w})=\min\left\{f^{\mathrm{opt}(2)}_{\boldsymbol{\mu},\mathcal{M}}(\mathbf{w}),\ldots,f^{\mathrm{opt}(K)}_{\boldsymbol{\mu},\mathcal{M}}(\mathbf{w})\right\}$ with
\begin{equation}
\label{eq:f_mu_opt_ext1}
\begin{aligned}
&f^{\mathrm{opt}(k)}_{\boldsymbol{\mu},\mathcal{M}}(\mathbf{w}) \\
&=\inf_{\substack{\lambda_{i,l}\in\mathbb{R}\\ i\in\{1,k\}, l\in[L]}}
\left(\sum_{i\in\{1,k\}}\sum_{l\in[L]}w_{i,l}\mathrm{d}(\mu_{i,l},\lambda_{i,l})\right),\\
&\quad\text{s.t.}~\sum_{l\in[L]}q_l\lambda_{k,l}>\sum_{l\in[L]}q_l\lambda_{1,l},~
\boldsymbol{\lambda}_k\in\mathcal{M},
\end{aligned}
\end{equation}
and
\begin{equation}
\label{eq:f_infeas_ext1}
\begin{aligned}
f^\mathrm{feas}_{\boldsymbol{\mu},\mathcal{M}}(\mathbf{w})
&=\inf_{\lambda_{1,1},\ldots,\lambda_{1,L}\in\mathbb{R}}~\sum_{l\in[L]}w_{1,l}\mathrm{d}(\mu_{1,l},\lambda_{1,l}),\\
&\quad\text{s.t.}~{\mathbf{\lambda}}_{1}\not\in\mathcal{M}.
\end{aligned}
\end{equation}
\end{lemma}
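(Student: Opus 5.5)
The plan is to mirror the proof of Lemma~\ref{lemma: counter_set_calculation}, replacing the coordinate-wise threshold with membership in $\mathcal{M}$. Throughout, write $\mathcal{A}=\mathcal{A}(\boldsymbol{\mu},\mathcal{M})$ as a finite union of sub-counter-sets indexed by the reason $k^*(\boldsymbol{\lambda})\neq k^*(\boldsymbol{\mu})$. The infimum of a weighted KL sum over a union is then the minimum of the infima over the pieces. For each piece I argue that the coordinates not involved in the event can be set equal to $\mu_{k,l}$ at zero cost. This uses $\mathrm{d}(\mu,\mu)=0$ and $\mathrm{d}\ge 0$, together with the fact that the relevant constraints only involve one or two rows of $\boldsymbol{\lambda}$.

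\textbf{Case $\mathcal{C}(\boldsymbol{\mu})=\emptyset$.} Here $\boldsymbol{\lambda}\in\mathcal{A}$ if and only if $k^*(\boldsymbol{\lambda})\neq 0$, i.e.\ some row satisfies $\boldsymbol{\lambda}_k\in\mathcal{M}$. So $\mathcal{A}\subseteq\bigcup_k\{\boldsymbol{\lambda}:\boldsymbol{\lambda}_k\in\mathcal{M}\}$, which gives $f^*\ge\min_k f^k_{\boldsymbol{\mu},\mathcal{M}}$.

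For the reverse inequality, take a near-optimal $\boldsymbol{\lambda}_k\in\mathcal{M}$ for $f^k$ and set the other rows to $\boldsymbol{\mu}_j$, which lie outside $\mathcal{M}$. The resulting instance may fail to lie in $\mathcal{S}$, for two reasons. First, $\boldsymbol{\lambda}_k$ may sit on $\partial\mathcal{M}$ rather than strictly in the interior. Second, a tie in $\bar\lambda$ may occur. I fix both by an $\epsilon$-perturbation:
\begin{itemize}
\item move $\boldsymbol{\lambda}_k$ slightly into the interior of $\mathcal{M}$;
\item if other rows were made feasible, shift them slightly so that row $k$ is the unique maximiser.
\end{itemize}
Continuity of $\mathrm{d}$ in its second argument means the cost changes by $o(1)$. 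This gives equality of the infima.

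\textbf{Case $k^*(\boldsymbol{\mu})=1$.} If $\boldsymbol{\lambda}\in\mathcal{A}$, there are two possibilities:
\begin{itemize}
\item $\boldsymbol{\lambda}_1\notin\mathcal{M}$, so row $1$ becomes infeasible; or
\item $\boldsymbol{\lambda}_1\in\mathcal{M}$ but some $k\ge2$ has $\boldsymbol{\lambda}_k\in\mathcal{M}$ and $\bar\lambda_k>\bar\lambda_1$, by uniqueness of the optimum in $\mathcal{S}$.
\end{itemize}
Dropping the extra requirement $\boldsymbol{\lambda}_1\in\mathcal{M}$ from the second branch only enlarges the set. Restricting the objective to rows $\{1\}$ and $\{1,k\}$ respectively, and using $\mathrm{d}\ge0$, gives $f^*\ge\min\{f^{\mathrm{feas}},\min_{k\ge2} f^{\mathrm{opt}(k)}\}$.

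For the reverse inequality, take near-optimal feasible points of each subproblem and complete them with $\mu$ in the other rows:
\begin{itemize}
\item For $f^{\mathrm{feas}}$: we get $\boldsymbol{\lambda}_1\notin\mathcal{M}$. Either some other feasible row becomes the unique best, or none is feasible and $k^*=0$. Either way $k^*(\boldsymbol{\lambda})\neq1$, after perturbing to break ties or to push the new best into the interior.
\item For $f^{\mathrm{opt}(k)}$: row $k$ is feasible and beats row $1$. Hence $k^*(\boldsymbol{\lambda})\neq1$ whether or not row $1$ stays feasible, and whether or not some third row beats row $k$. That third row is unchanged from $\boldsymbol{\mu}$, so it cannot beat row $1$'s original value, but it could beat $\bar\lambda_k$. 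In that case $k^*(\boldsymbol{\lambda})$ is still $\ne1$, which is all we need.
\end{itemize}

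\textbf{Main obstacle.} The delicate step is membership in $\mathcal{S}$, i.e.\ strict interiority and a unique optimum. This requires that the infimum over interior points of $\mathcal{M}$ (respectively, points strictly outside $\mathcal{M}$) equals the infimum over $\mathcal{M}$ (respectively, over $\mathcal{M}^c$). For a general closed set this can fail, for instance if $\mathcal{M}$ has isolated points or lower-dimensional pieces.

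I would first try to assume, or argue from the structural condition in this subsection, that $\mathcal{M}=\overline{\operatorname{int}\mathcal{M}}$, and note that $\mathcal{M}^c$ is open so its boundary points are limits of exterior points. With this in hand, an approximating sequence inside the admissible set combined with continuity of $\mathrm{d}(\mu,\cdot)$ closes the gap. Otherwise the lemma should be read with the subproblems posed over the appropriate closures, matching how the strict inequality ``$>$'' in (\ref{eq:f_mu_opt_ext1}) already signals an infimum that need not be attained.
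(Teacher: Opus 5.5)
Your proposal is correct and uses the same decomposition as the paper's proof. In the infeasible case you make some row feasible; otherwise you either make a competitor $k\ge 2$ feasible and better than row $1$, or push $\boldsymbol{\lambda}_1$ out of $\mathcal{M}$, copying all other rows from $\boldsymbol{\mu}$ at zero cost. It is more careful than the paper, which only exhibits these constructions, asserts rather than proves that they exhaust $\mathcal{A}(\boldsymbol{\mu},\mathcal{M})$, and never checks that the constructed instances lie in $\mathcal{S}$.

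Your regularity caveat is genuine and the paper omits it. It cannot be derived from the structural condition, which only constrains $\boldsymbol{\mu}$. For example, if $\mathcal{M}$ is a hyperplane, then $\mathcal{A}(\boldsymbol{\mu},\mathcal{M})=\emptyset$ whenever $\mathcal{C}(\boldsymbol{\mu})=\emptyset$, so $f^*_{\boldsymbol{\mu},\mathcal{M}}(\mathbf{w})=+\infty$ while each $f^k_{\boldsymbol{\mu},\mathcal{M}}(\mathbf{w})$ is finite. So a hypothesis such as $\mathcal{M}=\overline{\operatorname{int}\mathcal{M}}$ must be added. It is only needed in the branches that keep a row inside $\mathcal{M}$: in the $f^{\mathrm{feas}}$ branch, a competing row on $\partial\mathcal{M}$ can instead be pushed into the open set $\mathcal{M}^c$ at $o(1)$ cost.
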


The proof of Lemma \ref{lemma: counter_set_calculation_ext1} is given in Appendix \ref{appendix:lemma_cs_ext1_proof}. The proof follows the same structure as Lemma \ref{lemma: counter_set_calculation}, replacing constraint $\lambda_{k,l} \geq C_{\min}$ with $\boldsymbol{\lambda}_k \in \mathcal{M}$. The three optimization problems (\ref{eq:f_star_k_0_ext1}), (\ref{eq:f_mu_opt_ext1}), and (\ref{eq:f_infeas_ext1}) correspond to the same three scenarios as in Section \ref{subsec:counter_set}: making an infeasible policy feasible, making a different policy optimal, and making the current optimal policy infeasible.

We briefly discuss the computational tractability of the $\mathcal{M}$-SBFC internal optimization. The structure of the subproblems (\ref{eq:f_mu_opt_ext1})--(\ref{eq:f_infeas_ext1}) depends on the geometry of $\mathcal{M}$: for constraint sets defined by polynomial inequalities (e.g., quadratic variance bounds), the subproblems are quadratically constrained quadratic programs (QCQPs) or admit semidefinite programming (SDP) relaxations, both solvable in polynomial time. More generally, the framework maintains tractability whenever the constraint set $\mathcal{M}$ preserves convexity of the optimality subproblem (\ref{eq:f_mu_opt_ext1}); the feasibility subproblem (\ref{eq:f_infeas_ext1}) may be non-convex due to the complementary constraint $\boldsymbol{\lambda}_1 \notin \mathcal{M}$, but often admits efficient solutions via relaxation techniques. A concrete example with variance-based fairness constraints and the detailed formulations are provided in Appendix~\ref{appendix:variance_constraint}.

\subsubsection{Sample Complexity and Algorithm}

The generalization from $\mu_{k,l} \ge C_{\min}$ to $\mathbf{\mu}_k \in \mathcal{M}$ does not alter the theoretical results—only the internal optimization changes. The following corollaries parallel Theorem \ref{thm:lower_bound} and Theorem \ref{thm:opt_convergnce}.

\begin{corollary}[Lower Bound for $\mathcal{M}$-SBFC]
    \label{cor:lower_bound_m_sbfc}
For any $\delta$-PAC algorithm applied to the $\mathcal{M}$-SBFC problem and any instance $\boldsymbol{\mu}\in\mathcal{S}$ (with $\mathcal{C}(\boldsymbol{\mu}) = \{ k \mid \mathbf{\mu}_k \in \mathcal{M} \}$),
\begin{equation}\label{eq:lower_bound_m_sbfc}
    \liminf_{\delta\to 0}\frac{\mathbb{E}_{\boldsymbol{\mu}}[\tau_\delta]}{\ln(1/\delta)}\geq T^*(\boldsymbol{\mu},\mathcal{M}),
\end{equation}
where $T^*(\boldsymbol{\mu},\mathcal{M})^{-1}=\sup_{\mathbf{w}\in \Sigma_{K\times L}}\inf_{{\boldsymbol{\lambda}}\in \mathcal{A}(\boldsymbol{\mu},\mathcal{M})}\sum_{k,l}w_{k,l}\mathrm{d}(\mu_{k,l},\lambda_{k,l})$ with counter set $\mathcal{A}(\boldsymbol{\mu},\mathcal{M}) = \{ \boldsymbol{\lambda} \in \mathcal{S} \mid k^*(\boldsymbol{\lambda}, \mathcal{M}) \neq k^*(\boldsymbol{\mu}, \mathcal{M}) \}$.
\end{corollary}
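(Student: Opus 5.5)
The plan is to repeat the change-of-measure argument behind Theorem~\ref{thm:lower_bound}, noting that it never uses the coordinate-wise form of $\mathcal{C}$. Only two structural facts enter: the instance class $\mathcal{S}$ on which $\delta$-PAC is imposed (now understood with respect to $\mathcal{M}$, i.e., unique optimal feasible policy with $\mathbf{\mu}_{k^*}$ in the interior of $\mathcal{M}$ and a positive optimality gap, or no feasible policy), and the counter set $\mathcal{A}(\boldsymbol{\mu},\mathcal{M})$ of instances in $\mathcal{S}$ with a different answer.

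\textbf{Step 1 (change of measure).} Fix $\boldsymbol{\mu}\in\mathcal{S}$, a $\delta$-PAC algorithm, and any $\boldsymbol{\lambda}\in\mathcal{A}(\boldsymbol{\mu},\mathcal{M})$. Since both $\boldsymbol{\mu}$ and $\boldsymbol{\lambda}$ lie in $\mathcal{S}$, we have $\mathbb{P}_{\boldsymbol{\mu}}(\tau_\delta<\infty)=1$. For the event $E=\{\hat{k}^*_{\tau_\delta}=k^*(\boldsymbol{\mu},\mathcal{M})\}\in\mathcal{F}_{\tau_\delta}$, the PAC property gives $\mathbb{P}_{\boldsymbol{\mu}}(E)\ge 1-\delta$. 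Because $k^*(\boldsymbol{\lambda},\mathcal{M})\neq k^*(\boldsymbol{\mu},\mathcal{M})$, it also gives $\mathbb{P}_{\boldsymbol{\lambda}}(E)\le\delta$. The transportation lemma of \cite{kaufmann2016complexity} (the log-likelihood ratio identity with Wald's lemma, applied to the $KL$ arms $(k,l)$) then yields
\begin{equation*}
\sum_{k\in[K]}\sum_{l\in[L]}\mathbb{E}_{\boldsymbol{\mu}}[N_{k,l}(\tau_\delta)]\,\mathrm{d}(\mu_{k,l},\lambda_{k,l})\ \ge\ \mathrm{kl}(\delta,1-\delta).
\end{equation*}
If $\mathbb{E}_{\boldsymbol{\mu}}[\tau_\delta]=\infty$ the claim is trivial, so assume it is finite.

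\textbf{Step 2 (optimize).} Taking the infimum over $\boldsymbol{\lambda}\in\mathcal{A}(\boldsymbol{\mu},\mathcal{M})$ and writing $w_{k,l}=\mathbb{E}_{\boldsymbol{\mu}}[N_{k,l}(\tau_\delta)]/\mathbb{E}_{\boldsymbol{\mu}}[\tau_\delta]\in\Sigma_{K\times L}$, we get
\begin{equation*}
\mathrm{kl}(\delta,1-\delta)\le \mathbb{E}_{\boldsymbol{\mu}}[\tau_\delta]\inf_{\boldsymbol{\lambda}\in\mathcal{A}(\boldsymbol{\mu},\mathcal{M})}\sum_{k,l}w_{k,l}\mathrm{d}(\mu_{k,l},\lambda_{k,l})\le \mathbb{E}_{\boldsymbol{\mu}}[\tau_\delta]\,T^*(\boldsymbol{\mu},\mathcal{M})^{-1}.
\end{equation*}
This gives $\mathbb{E}_{\boldsymbol{\mu}}[\tau_\delta]\ge T^*(\boldsymbol{\mu},\mathcal{M})\,\mathrm{kl}(\delta,1-\delta)$. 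Since $\mathrm{kl}(\delta,1-\delta)\ge\ln(1/(2.4\delta))$, dividing by $\ln(1/\delta)$ and letting $\delta\to0$ gives \eqref{eq:lower_bound_m_sbfc}.

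\textbf{Main obstacle.} The delicate point is making sure $T^*(\boldsymbol{\mu},\mathcal{M})$ is well defined and the bound is meaningful; that is, the inner infimum is finite and positive for some $\mathbf{w}$. If $\mathcal{A}(\boldsymbol{\mu},\mathcal{M})=\emptyset$, the infimum is $+\infty$, $T^*=0$, and the bound is vacuous but still true. The only other place where $\mathcal{M}$ enters is the requirement that each alternative $\boldsymbol{\lambda}$ belongs to $\mathcal{S}$, since this is what justifies $\mathbb{P}_{\boldsymbol{\lambda}}(E)\le\delta$. I would handle this by stating the corollary with $\mathcal{S}$ understood relative to $\mathcal{M}$ (closedness of $\mathcal{M}$ makes the interior/gap conditions well posed), so that Step 1 applies verbatim. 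Positivity of $T^*(\boldsymbol{\mu},\mathcal{M})^{-1}$ is not needed for the inequality itself.
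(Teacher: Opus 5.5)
Your proposal is correct and follows the paper's proof. Both rerun the change-of-measure argument of Theorem~\ref{thm:lower_bound} (Lemma~1 of \cite{kaufmann2016complexity}) with the event $\{\hat{k}^*_{\tau_\delta}=k^*(\boldsymbol{\mu},\mathcal{M})\}$ and the counter set $\mathcal{A}(\boldsymbol{\mu},\mathcal{M})$, normalize the expected counts into a weight vector in $\Sigma_{K\times L}$, and let $\delta\to0$; your explicit reading of $\mathcal{S}$ relative to $\mathcal{M}$ and your handling of the degenerate cases only make precise what the paper's ``the argument is agnostic to the constraint form'' verification leaves implicit.
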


The proof of Corollary \ref{cor:lower_bound_m_sbfc} is given in Appendix \ref{appendix:thm_lb_ext1_proof}. The proof follows that of Theorem \ref{thm:lower_bound} with $\mathcal{A}(\boldsymbol{\mu})$ replaced by $\mathcal{A}(\boldsymbol{\mu},\mathcal{M})$.

The T-a-S-CS algorithm applies directly with minimal modification: compute $\mathbf{w}_t \in \mathbf{w}^*(\hat{\boldsymbol{\mu}}(t), \mathcal{M})$ using projected subgradient ascent on $f^*_{\hat{\boldsymbol{\mu}}(t),\mathcal{M}}(\mathbf{w})$, use the C-tracking rule (\ref{eq:c_tracking}), and apply the stopping rule with $Z(t) = t f^*_{\hat{\boldsymbol{\mu}}(t),\mathcal{M}}(\hat{\mathbf{w}}_t)$. The only change is in computing subgradients of $f^*$ via (\ref{eq:f_star_k_0_ext1})--(\ref{eq:f_infeas_ext1}) instead of the corresponding problems in Section \ref{subsec: opt_weights}.

\begin{corollary}[Asymptotic Optimality of $\mathcal{M}$-T-a-S-CS]
    \label{cor:opt_convergnce_m_sbfc}
For every instance ${\boldsymbol{\mu}}\in\mathcal{S}$ (with $\mathcal{C}(\boldsymbol{\mu}) = \{ k \mid \mathbf{\mu}_k \in \mathcal{M} \}$), the $\mathcal{M}$-T-a-S-CS algorithm is $\delta$-PAC and achieves
\begin{equation}
    \lim_{\delta\to 0}\frac{\mathbb{E}_{\boldsymbol{\mu}}[\tau_\delta]}{\ln(1/\delta)}=T^*({\boldsymbol{\mu}}, \mathcal{M}).
\end{equation}
\end{corollary}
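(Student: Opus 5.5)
The plan is to follow the proof of Theorem \ref{thm:opt_convergnce} line by line, replacing $f^*_{\boldsymbol{\mu}}$ by $f^*_{\boldsymbol{\mu},\mathcal{M}}$ and $\mathcal{A}(\boldsymbol{\mu})$ by $\mathcal{A}(\boldsymbol{\mu},\mathcal{M})$ throughout. Only a few properties of the inner problem are used there, and I will re-verify each of them under the structural assumptions on $\mathcal{M}$. These are: closedness of $\mathcal{M}$, $\boldsymbol{\mu}_{k^*}$ at positive distance from $\partial\mathcal{M}$, and a positive optimality gap.

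The proof splits into the usual two parts.

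\textbf{$\delta$-PAC.} The recommendation $k^*(\hat{\boldsymbol{\mu}}(\tau_\delta),\mathcal{M})$ can be wrong only if the true $\boldsymbol{\mu}$ lies in $\mathcal{A}(\hat{\boldsymbol{\mu}}(\tau_\delta),\mathcal{M})$. In that event,
\[
Z(\tau_\delta)\le \sum_{k,l}N_{k,l}(\tau_\delta)\,\mathrm{d}\bigl(\hat\mu_{k,l}(\tau_\delta),\mu_{k,l}\bigr).
\]
The mixture-martingale deviation bound (Proposition 15 of \cite{kaufmann2021mixture}) controls this quantity uniformly in $t$ with threshold $\beta(t,\delta)$. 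That bound does not depend on the shape of the counter set, so this part transfers verbatim.

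\textbf{Sample complexity.} I carry out four steps in order.

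(i) \emph{Continuity.} Show that $(\boldsymbol{\mu},\mathbf{w})\mapsto f^*_{\boldsymbol{\mu},\mathcal{M}}(\mathbf{w})$ is continuous at $(\boldsymbol{\mu},\mathbf{w})$ for $\boldsymbol{\mu}\in\mathcal{S}$. Then Berge's maximum theorem gives that $\boldsymbol{\mu}\mapsto\mathbf{w}^*(\boldsymbol{\mu},\mathcal{M})$ is upper hemicontinuous with convex values (concavity of $f^*$ in $\mathbf{w}$ follows as an infimum of linear functions), and that $T^*(\cdot,\mathcal{M})$ is continuous.

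(ii) \emph{Forced exploration.} C-tracking with $\varepsilon_t$ ensures $N_{k,l}(t)\gtrsim\sqrt{t}$. Combined with exponential-family concentration, this yields good events $\mathcal{E}_T$ on which $\hat{\boldsymbol{\mu}}(t)$ is $\xi$-close to $\boldsymbol{\mu}$ for all $t\ge\sqrt{T}$, with $\sum_T T\,\mathbb{P}(\mathcal{E}_T^c)<\infty$.

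(iii) \emph{Tracking.} On $\mathcal{E}_T$, use step (i) and the tracking lemma of \cite{garivier2016optimal} to get $\hat{\mathbf{w}}_t$ within $\epsilon$ of $\mathbf{w}^*(\boldsymbol{\mu},\mathcal{M})$, and hence
\[
Z(t)=t f^*_{\hat{\boldsymbol{\mu}}(t),\mathcal{M}}(\hat{\mathbf{w}}_t)\ge t\bigl(T^*(\boldsymbol{\mu},\mathcal{M})^{-1}-\eta\bigr).
\]

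(iv) \emph{Conclusion.} Finish with the argument of Theorem 7 of \cite{degenne2019pure}: $\tau_\delta\le \max\{\sqrt{T},\,\beta(T,\delta)/(T^{*-1}-\eta)\}$ on $\mathcal{E}_T$. Combined with the lower bound, Corollary \ref{cor:lower_bound_m_sbfc}, this gives the limit.

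\textbf{Main obstacle: step (i).} For the linear constraint, Lemma \ref{lemma: counter_set_calculation} gave explicit continuous pieces. Here I will use Lemma \ref{lemma: counter_set_calculation_ext1}, which decomposes $f^*_{\boldsymbol{\mu},\mathcal{M}}$ into finitely many pieces (\ref{eq:f_star_k_0_ext1})--(\ref{eq:f_infeas_ext1}), and show that each piece is continuous in $(\boldsymbol{\mu},\mathbf{w})$ near the true instance.

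For $\mathbf{w}$ in the interior of the simplex, I will attack this via local uniform continuity of $\mathrm{d}$ on compacts and the fact that the constraint sets do not depend on $\boldsymbol{\mu}$. That gives upper semicontinuity, as an infimum of continuous functions. For lower semicontinuity, I restrict to a compact set of $\boldsymbol{\lambda}$ using coercivity of $\mathrm{d}$.

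The delicate piece is $f^{\mathrm{feas}}$, whose constraint $\boldsymbol{\lambda}_1\notin\mathcal{M}$ is an open set. I will replace it by its closure, i.e.\ $\boldsymbol{\lambda}_1\in\overline{\mathcal{M}^c}$, which leaves the infimum unchanged. The positive distance of $\boldsymbol{\mu}_1$ from $\partial\mathcal{M}$ is then what guarantees two things: that $k^*(\hat{\boldsymbol{\mu}},\mathcal{M})=k^*(\boldsymbol{\mu},\mathcal{M})$ for $\hat{\boldsymbol{\mu}}$ near $\boldsymbol{\mu}$, so the same decomposition applies, and that $f^{\mathrm{feas}}$ stays bounded away from degeneracy.

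Boundary $\mathbf{w}$, where some $w_{k,l}=0$, are handled as in \cite{degenne2019pure}, by working with $\Sigma^{\varepsilon}$ and letting $\varepsilon\to0$ at the end.
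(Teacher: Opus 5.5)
Your proposal is correct, and its skeleton matches the paper's. That skeleton is: local constancy of $k^*(\cdot,\mathcal{M})$ (from closedness of $\mathcal{M}$, the interior condition and the gap), Berge's theorem for upper hemicontinuity of the convex-valued map $\mathbf{w}^*(\cdot,\mathcal{M})$, C-tracking, and the Track-and-Stop argument of \cite{degenne2019pure}.

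The genuine difference is how you obtain continuity of $f^*_{\boldsymbol{\mu},\mathcal{M}}$. The paper fixes $\mathbf{w}$ and gets lower hemicontinuity of $\boldsymbol{\mu}\mapsto\mathcal{A}(\boldsymbol{\mu},\mathcal{M})$ from its local constancy. It then applies the Feinberg et al.\ version of Berge's theorem, with a K-inf-compactness condition borrowed from \cite{degenne2019pure}. Finally it adds continuity in $\mathbf{w}$ from concavity and passes to joint continuity, which the outer Berge theorem requires, without further argument.

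You instead freeze the constraint sets near the true instance and prove both halves directly. Joint upper semicontinuity holds because $f^*_{\boldsymbol{\mu},\mathcal{M}}$ is an infimum of functions continuous in $(\boldsymbol{\mu},\mathbf{w})$. Joint lower semicontinuity holds because coercivity of $\mathrm{d}$ confines near-minimizers to a compact set, and their limit points lie in the closure, which leaves the infimum unchanged. This is more elementary and actually establishes the joint continuity the paper only asserts. Likewise, your steps (ii)--(iv) make explicit the good-event argument behind the bound on $\mathbb{E}_{\boldsymbol{\mu}}[\tau_\delta]$, which the paper takes from Theorem 7 of \cite{degenne2019pure}.

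Two small remarks.

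\emph{The finite decomposition is not needed.} The decomposition of Lemma~\ref{lemma: counter_set_calculation_ext1} can be skipped: the same semicontinuity argument works for the infimum over the locally constant set $\mathcal{A}(\boldsymbol{\mu},\mathcal{M})$ itself. The non-closed conditions $\boldsymbol{\lambda}_1\notin\mathcal{M}$ and $\boldsymbol{\lambda}\in\mathcal{S}$ are both absorbed by passing to closures.

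\emph{Boundary weights.} Your $\Sigma^{\varepsilon}$ device for boundary weights is vague and unnecessary. Optimal weights can have zero coordinates (cf.\ Lemma~\ref{lemma:weight_infeas}), so lower semicontinuity is genuinely needed there. But it follows from the same argument: drop the nonnegative terms with $w_{k,l}=0$, and extract limits only of the coordinates in the support of $\mathbf{w}$, which remain in a compact set.
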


The proof of Corollary \ref{cor:opt_convergnce_m_sbfc} is given in Appendix \ref{appendix:thm_conv_ext1_proof}. The proof mirrors that of Theorem \ref{thm:opt_convergnce} with modified internal optimization.

%% file: sections/4_extensions/2_penalty_fairness.tex
\subsection{Soft Fairness Constraints: $\gamma$-SBFC}
\label{subsec:second_extension}

We consider a second extension that relaxes the hard fairness constraints $\mu_{k,l}\geq C_{\text{min}}$ to soft constraints by incorporating penalties for violations. This enables trade-offs between performance and fairness through penalty parameters $\boldsymbol{\gamma}=(\gamma_1,\ldots,\gamma_L)\in\mathbb{R}_{\geq 0}^L$.

\subsubsection{Penalized Performance}

For a policy $k$ with subpopulation performances $\boldsymbol{\mu}_k=(\mu_{k,1},\ldots,\mu_{k,L})$, we define the \textit{penalized performance}:
\begin{equation}\label{eq:linear_relaxed}
\mu_k^{\gamma}=\sum_{l=1}^Lq_l\left(\mu_{k,l}+\gamma_{l}\min\left(\mu_{k,l}-C_\text{min},0\right)\right).   
\end{equation}
This can be viewed as a Lagrangian relaxation where violations of $\mu_{k,l}\geq C_{\text{min}}$ reduce the effective performance. The $\gamma$-SBFC problem seeks:
\begin{equation}\label{eq:relaxed_goal}
    k^{\gamma,*}=\arg\max_{k\in[K]}\mu_k^\gamma.
\end{equation}
Note that as $\gamma_l\to+\infty$ for all $l\in[L]$, we recover the original SBFC problem where $k^{\gamma,*}\to k^*$. Analogous to Definition \ref{def:S}, we impose the structural requirement that $k^{\gamma,*}(\boldsymbol{\mu})$ is unique and dominates other policies in penalized performance by a positive gap (i.e., $\mu_{k^{\gamma,*}}^\gamma > \mu_k^\gamma + \varepsilon$ for all $k \neq k^{\gamma,*}$ and some $\varepsilon>0$). This condition generalizes the strict inequalities in Definition \ref{def:S} to the penalized setting and is necessary for the convergence analysis in Appendix \ref{appendix:thm_conv_ext2_proof}.

\subsubsection{Counter Set Calculation}

The counter set becomes $\mathcal{A}(\boldsymbol{\mu}, \boldsymbol{\gamma})$ and the internal optimization problem (\ref{eq:internal_opt}) becomes:
\begin{equation}
    \label{eq:internal_opt_ext2}
    f_{\boldsymbol{\mu},\boldsymbol{\gamma}}^{*}(\mathbf{w})=\inf_{{\boldsymbol{\lambda}}\in\mathcal{A}(\boldsymbol{\mu}, \boldsymbol{\gamma})}\sum_{k\in[K]}\sum_{l\in[L]}w_{k,l}\mathrm{d}\left(\mu_{k,l},\lambda_{k,l}\right).
\end{equation}

\begin{lemma}
\label{lemma: counter_set_calculation_ext2}
The calculation of $f^*_{\boldsymbol{\mu},\boldsymbol{\gamma}}(\mathbf{w})$ in (\ref{eq:internal_opt_ext2}) parallels Lemma \ref{lemma: counter_set_calculation}. Assuming $k^{\gamma,*}(\boldsymbol{\mu})=1$, we have $f_{\boldsymbol{\mu},\boldsymbol{\gamma}}^{*}(\mathbf{w})=\min\left\{f^{\mathrm{opt}(2)}_{\boldsymbol{\mu},\boldsymbol{\gamma}}(\mathbf{w}),\ldots,f^{\mathrm{opt}(K)}_{\boldsymbol{\mu},\boldsymbol{\gamma}}(\mathbf{w})\right\}$, where $\forall~k\in\{2,\ldots,K\}$,
\begin{equation}
\label{eq:f_mu_opt_ext2}
\begin{aligned}
f^{\mathrm{opt}(k)}_{\boldsymbol{\mu},\boldsymbol{\gamma}}(\mathbf{w})
&=\inf_{\substack{\lambda_{i,l}\in\mathbb{R}\\ i\in\{1,k\}, l\in[L]}}
\sum_{i\in\{1,k\}}\sum_{l\in[L]}w_{i,l}\mathrm{d}(\mu_{i,l},\lambda_{i,l}),\\
&\quad\text{s.t.}~\lambda_k^\gamma>\lambda_1^\gamma,
\end{aligned}
\end{equation}
where $\lambda_k^\gamma=\sum_{l=1}^Lq_l\left(\lambda_{k,l}+\gamma_{l}\min(\lambda_{k,l}-C_\text{min},0)\right)$ and $\lambda_1^\gamma= \sum_{l=1}^Lq_l\left(\lambda_{1,l}+\gamma_{l}\min(\lambda_{1,l}-C_\text{min},0)\right)$.
\end{lemma}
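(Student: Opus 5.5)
We prove Lemma \ref{lemma: counter_set_calculation_ext2} by splitting the counter set according to which policy is best in the alternative instance. Here $\mathcal{A}(\boldsymbol{\mu},\boldsymbol{\gamma})$ is the set of instances $\boldsymbol{\lambda}$ (within the penalized analogue of $\mathcal{S}$) with $k^{\gamma,*}(\boldsymbol{\lambda})\neq 1$. Every policy is admissible in $\gamma$-SBFC, so there is no ``empty feasible set'' case and no separate feasibility subproblem. Hence
\begin{equation*}
\mathcal{A}(\boldsymbol{\mu},\boldsymbol{\gamma})=\bigcup_{k=2}^K \mathcal{A}_k,\qquad \mathcal{A}_k:=\{\boldsymbol{\lambda}:k^{\gamma,*}(\boldsymbol{\lambda})=k\},
\end{equation*}
and therefore $f^*_{\boldsymbol{\mu},\boldsymbol{\gamma}}(\mathbf{w})=\min_{k\ge 2}\inf_{\boldsymbol{\lambda}\in\mathcal{A}_k}\sum_{i,l}w_{i,l}\mathrm{d}(\mu_{i,l},\lambda_{i,l})$. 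It remains to show that each inner infimum equals $f^{\mathrm{opt}(k)}_{\boldsymbol{\mu},\boldsymbol{\gamma}}(\mathbf{w})$ in (\ref{eq:f_mu_opt_ext2}).

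For the lower bound, take any $\boldsymbol{\lambda}\in\mathcal{A}_k$. Then $\lambda_k^\gamma>\lambda_1^\gamma$, so the rows $(\boldsymbol{\lambda}_1,\boldsymbol{\lambda}_k)$ are feasible for (\ref{eq:f_mu_opt_ext2}). Every term $w_{i,l}\mathrm{d}(\mu_{i,l},\lambda_{i,l})$ with $i\notin\{1,k\}$ is nonnegative, so dropping those rows can only decrease the objective. This gives $\inf_{\mathcal{A}_k}\ge f^{\mathrm{opt}(k)}_{\boldsymbol{\mu},\boldsymbol{\gamma}}(\mathbf{w})$.

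For the upper bound, take any feasible pair $(\boldsymbol{\lambda}_1,\boldsymbol{\lambda}_k)$ of (\ref{eq:f_mu_opt_ext2}) and any $\eta>0$. Build $\boldsymbol{\lambda}$ by keeping these two rows and setting $\boldsymbol{\lambda}_i=\boldsymbol{\mu}_i$ for $i\notin\{1,k\}$. Those rows contribute zero cost. Since $\mu_1^\gamma>\mu_i^\gamma$ for all $i\ne1$, we have $\lambda_i^\gamma=\mu_i^\gamma<\mu_1^\gamma$. However, $\lambda_1^\gamma$ may have dropped below some $\mu_i^\gamma$, and then policy $i$ rather than $k$ would be best in $\boldsymbol{\lambda}$. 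This is harmless, because we only need $\boldsymbol{\lambda}\in\mathcal{A}(\boldsymbol{\mu},\boldsymbol{\gamma})$, not $\boldsymbol{\lambda}\in\mathcal{A}_k$. Both $k$ and $i$ beat policy $1$, so the best policy of $\boldsymbol{\lambda}$ is not $1$. So we bound $f^*$ directly by the cost of $\boldsymbol{\lambda}$, which equals the objective value of the pair. This yields $f^*\le\min_k f^{\mathrm{opt}(k)}$. Combined with the lower bound (each $\mathcal{A}_k\subseteq\mathcal{A}$), this proves the identity.

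The main obstacle is that the alternative instance must lie in the structured instance class, which requires a unique best policy with a positive penalized gap. Ties among the maximizers of $\lambda_i^\gamma$ must be broken by an arbitrarily small perturbation. The map $\boldsymbol{\lambda}_i\mapsto\lambda_i^\gamma$ is continuous and strictly increasing in each coordinate, since $q_l(1+\gamma_l\mathbf{1}\{\lambda_{i,l}<C_{\min}\})>0$ when $q_l>0$. So we can raise the current winner's coordinates by $\eta$. The constraint $\lambda_k^\gamma>\lambda_1^\gamma$ is strict and the map is continuous, so the condition survives small perturbations. Continuity of $\mathrm{d}(\mu,\cdot)$ then makes the cost change by $o(1)$ as $\eta\to0$. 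Because (\ref{eq:f_mu_opt_ext2}) is an infimum with a strict constraint, this density argument closes the equality. We will not claim the infimum is attained. Nonconvexity from the $\min(\cdot,0)$ kink does not affect this argument, since only continuity and monotonicity are used.
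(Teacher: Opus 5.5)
Your proof is correct and follows the same route as the paper: fix the rows outside $\{1,k\}$ at $\boldsymbol{\mu}$, reduce to the pairwise problem defining $f^{\mathrm{opt}(k)}_{\boldsymbol{\mu},\boldsymbol{\gamma}}(\mathbf{w})$, and take the minimum over $k\ge 2$. Your version is more complete than the paper's sketch, which leaves three points implicit: you prove both inequalities explicitly; you note that some third policy $i\notin\{1,k\}$ may end up best in the constructed instance, which is harmless since only $k^{\gamma,*}(\boldsymbol{\lambda})\neq 1$ is needed; and you handle membership in the structured instance class with a tie-breaking perturbation.
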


The proof of Lemma \ref{lemma: counter_set_calculation_ext2} is given in Appendix \ref{appendix:thm_cs_ext2_proof}. Note that the constraint $\lambda_k^\gamma\geq\lambda_1^\gamma$ in (\ref{eq:f_mu_opt_ext2}) involves the non-smooth $\min(\cdot,0)$ function, making the problem non-convex in general. We briefly discuss the computational tractability of the $\gamma$-SBFC internal optimization. The non-smooth penalty can be handled via mixed-integer reformulation: by introducing auxiliary variables to represent $\min(\lambda_{i,l}-C_{\min},0)$ and binary indicator variables with Big-M linearization, the internal subproblem becomes a mixed-integer quadratic program (MIQP) with $2L$ binary variables under the Gaussian family, efficiently solvable by standard solvers such as Gurobi. The detailed MIQP formulation is provided in Appendix~\ref{appendix:soft_constraint}.

\subsubsection{Sample Complexity and Algorithm}

The following corollaries parallel Theorem \ref{thm:lower_bound} and Theorem \ref{thm:opt_convergnce}.

\begin{corollary}[Lower Bound for $\gamma$-SBFC]
    \label{cor:lower_bound_ext2}
For any $\delta$-PAC algorithm applied to the $\gamma$-SBFC problem and any instance $\boldsymbol{\mu}\in\mathcal{S}$, 
\begin{equation}\label{eq:lower_bound_ext2}
    \liminf_{\delta\to 0}\frac{\mathbb{E}_{\boldsymbol{\mu}}[\tau_\delta]}{\ln(1/\delta)}\geq T^*(\boldsymbol{\mu},\boldsymbol{\gamma}),
\end{equation}
where $T^*(\boldsymbol{\mu},\boldsymbol{\gamma})^{-1}=\sup_{\mathbf{w}\in \Sigma_{K\times L}}\inf_{{\boldsymbol{\lambda}}\in \mathcal{A}(\boldsymbol{\mu},\boldsymbol{\gamma})}\sum_{k,l}w_{k,l}\mathrm{d}(\mu_{k,l},\lambda_{k,l})$ with counter set $\mathcal{A}(\boldsymbol{\mu},\boldsymbol{\gamma}) = \{ \boldsymbol{\lambda} \in \mathcal{S} \mid k^{\gamma,*}(\boldsymbol{\lambda}) \neq k^{\gamma,*}(\boldsymbol{\mu}) \}$.
\end{corollary}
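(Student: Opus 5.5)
The plan is to reuse the change-of-distribution argument behind Theorem \ref{thm:lower_bound}, with the counter set $\mathcal{A}(\boldsymbol{\mu})$ replaced by $\mathcal{A}(\boldsymbol{\mu},\boldsymbol{\gamma})$. That argument only needs two things: that the algorithm is $\delta$-PAC with respect to the target map $\boldsymbol{\lambda}\mapsto k^{\gamma,*}(\boldsymbol{\lambda})$, and that every alternative $\boldsymbol{\lambda}\in\mathcal{A}(\boldsymbol{\mu},\boldsymbol{\gamma})$ has a different correct answer from $\boldsymbol{\mu}$. The second holds by the definition of $\mathcal{A}(\boldsymbol{\mu},\boldsymbol{\gamma})$.

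First, fix a $\delta$-PAC algorithm for $\gamma$-SBFC and an alternative $\boldsymbol{\lambda}\in\mathcal{A}(\boldsymbol{\mu},\boldsymbol{\gamma})$. I will apply the transportation lemma of \cite{kaufmann2016complexity} (equivalently, the data-processing inequality for the log-likelihood ratio at the stopping time) to the event $E=\{\hat{k}_{\tau_\delta}=k^{\gamma,*}(\boldsymbol{\mu})\}$. This event is $\mathcal{F}_{\tau_\delta}$-measurable. By the PAC property, $\mathbb{P}_{\boldsymbol{\mu}}(E)\ge 1-\delta$ and $\mathbb{P}_{\boldsymbol{\lambda}}(E)\le\delta$, the latter because $k^{\gamma,*}(\boldsymbol{\lambda})\neq k^{\gamma,*}(\boldsymbol{\mu})$. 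This gives
\begin{equation*}
\sum_{k\in[K]}\sum_{l\in[L]}\mathbb{E}_{\boldsymbol{\mu}}[N_{k,l}(\tau_\delta)]\,\mathrm{d}(\mu_{k,l},\lambda_{k,l})\ \ge\ \mathrm{kl}(\delta,1-\delta).
\end{equation*}
Finiteness of $\mathbb{E}_{\boldsymbol{\mu}}[\tau_\delta]$ can be assumed, since otherwise the bound is trivial. Wald's identity then applies to the log-likelihood ratio because the sampling pair $(A_t,I_t)$ is predictable.

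Second, take the infimum over $\boldsymbol{\lambda}\in\mathcal{A}(\boldsymbol{\mu},\boldsymbol{\gamma})$ and write $\mathbb{E}_{\boldsymbol{\mu}}[N_{k,l}(\tau_\delta)]=\mathbb{E}_{\boldsymbol{\mu}}[\tau_\delta]\,w_{k,l}$, where $\mathbf{w}\in\Sigma_{K\times L}$ because $\sum_{k,l}N_{k,l}(\tau_\delta)=\tau_\delta$. Bounding the infimum at this particular $\mathbf{w}$ by the supremum over $\Sigma_{K\times L}$ yields $\mathbb{E}_{\boldsymbol{\mu}}[\tau_\delta]\,T^*(\boldsymbol{\mu},\boldsymbol{\gamma})^{-1}\ge\mathrm{kl}(\delta,1-\delta)$. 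Finally, $\mathrm{kl}(\delta,1-\delta)\ge\ln(1/(2.4\delta))$, so $\mathrm{kl}(\delta,1-\delta)/\ln(1/\delta)\to1$ as $\delta\to0$, and the $\liminf$ statement follows.

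The main obstacle is the well-posedness of the objects, not the inequality itself. The PAC guarantee is only required on the instance class, so I need $\boldsymbol{\mu}$ and each $\boldsymbol{\lambda}\in\mathcal{A}(\boldsymbol{\mu},\boldsymbol{\gamma})$ to lie in the $\gamma$-version of $\mathcal{S}$, meaning instances with a unique penalized optimum and a positive gap. The counter set is defined within that class, so its members automatically qualify. I would add a brief remark that this class is the $\gamma$-analogue of $\mathcal{S}$, and that the non-smoothness of $\min(\cdot,0)$ plays no role because the argument never optimizes over $\boldsymbol{\lambda}$ analytically.

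A second point is the case $\mathcal{A}(\boldsymbol{\mu},\boldsymbol{\gamma})=\emptyset$. There $T^*(\boldsymbol{\mu},\boldsymbol{\gamma})=0$ (the infimum over an empty set is $+\infty$, so $T^{*-1}=\infty$) and the bound is trivial. The nonempty case can be reached, for example, by raising all means of some policy $k\neq1$, since $\lambda^\gamma_k$ is increasing and unbounded in each coordinate.
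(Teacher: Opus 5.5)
Your proposal is correct and follows the paper's proof essentially step for step. Both apply the change-of-measure inequality (Lemma 1 of \cite{kaufmann2016complexity}) to the event $\{\hat{k}^*_{\tau_\delta}=k^{\gamma,*}(\boldsymbol{\mu})\}$, normalize the expected counts into a weight vector in $\Sigma_{K\times L}$, bound by the $\sup\inf$, and let $\delta\to 0$ using $\mathrm{kl}(\delta,1-\delta)/\ln(1/\delta)\to 1$; your extra remarks (taking the $\gamma$-analogue of $\mathcal{S}$ so the PAC guarantee is well defined at every alternative, the case $\mathbb{E}_{\boldsymbol{\mu}}[\tau_\delta]=\infty$, and the empty counter set) are sound clarifications that the paper leaves implicit.
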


The proof of Corollary \ref{cor:lower_bound_ext2} is given in Appendix \ref{appendix:thm_lb_ext2_proof}. The proof follows that of Theorem \ref{thm:lower_bound} with penalized performance replacing hard constraints.

The $\gamma$-T-a-S-CS algorithm applies with minimal modification: compute $\mathbf{w}_t \in \mathbf{w}^*(\hat{\boldsymbol{\mu}}(t), \boldsymbol{\gamma})$ using projected subgradient ascent on $f^*_{\hat{\boldsymbol{\mu}}(t),\boldsymbol{\gamma}}(\mathbf{w})$ via (\ref{eq:f_mu_opt_ext2}), use the C-tracking rule (\ref{eq:c_tracking}), and apply the stopping rule with $Z(t) = t f^*_{\hat{\boldsymbol{\mu}}(t),\boldsymbol{\gamma}}(\hat{\mathbf{w}}_t)$.

\begin{corollary}[Asymptotic Optimality of $\gamma$-T-a-S-CS]
\label{cor:opt_convergnce_ext2}
For every instance ${\boldsymbol{\mu}}\in\mathcal{S}$, the $\gamma$-T-a-S-CS algorithm is $\delta$-PAC and achieves
\begin{equation}
    \lim_{\delta\to 0}\frac{\mathbb{E}_{\boldsymbol{\mu}}[\tau_\delta]}{\ln(1/\delta)}=T^*({\boldsymbol{\mu}},\boldsymbol{\gamma}).
\end{equation}
\end{corollary}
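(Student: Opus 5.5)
The plan is to mirror the proof of Theorem \ref{thm:opt_convergnce}, which is itself modeled on Theorem 7 of \cite{degenne2019pure}, with $f^*_{\boldsymbol{\mu}}$ replaced everywhere by $f^*_{\boldsymbol{\mu},\boldsymbol{\gamma}}$ from Lemma \ref{lemma: counter_set_calculation_ext2}. The argument has two parts: $\delta$-PAC-ness, and the asymptotic upper bound $\limsup_{\delta\to 0}\mathbb{E}_{\boldsymbol{\mu}}[\tau_\delta]/\ln(1/\delta)\le T^*(\boldsymbol{\mu},\boldsymbol{\gamma})$. Combined with Corollary \ref{cor:lower_bound_ext2}, these give the limit.

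\textbf{$\delta$-PAC.} An error at $\tau_\delta$ means $k^{\gamma,*}(\hat{\boldsymbol{\mu}}(\tau_\delta))\neq k^{\gamma,*}(\boldsymbol{\mu})$, so $\boldsymbol{\mu}\in\mathcal{A}(\hat{\boldsymbol{\mu}}(\tau_\delta),\boldsymbol{\gamma})$. It follows that
\[
Z(\tau_\delta)\le \sum_{k,l}N_{k,l}(\tau_\delta)\,\mathrm{d}(\hat\mu_{k,l}(\tau_\delta),\mu_{k,l}).
\]
The mixture-martingale concentration of \cite{kaufmann2021mixture} (Proposition 15) bounds the probability that this sum ever exceeds $\beta(t,\delta)$ by $\delta$. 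This step does not depend on the form of the counter set.

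\textbf{Continuity of the optimal weights.} The key structural ingredient is that $\boldsymbol{\mu}\mapsto f^*_{\boldsymbol{\mu},\boldsymbol{\gamma}}(\mathbf{w})$ is jointly continuous near the true $\boldsymbol{\mu}$. By Berge's maximum theorem, this makes $\boldsymbol{\mu}\mapsto \mathbf{w}^*(\boldsymbol{\mu},\boldsymbol{\gamma})$ upper hemicontinuous with convex values and $\boldsymbol{\mu}\mapsto T^*(\boldsymbol{\mu},\boldsymbol{\gamma})$ continuous.

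\begin{itemize}
\item Stability of the best arm. The map $\boldsymbol{\lambda}_k\mapsto\lambda_k^\gamma$ is continuous (piecewise linear). The positive gap $\mu^\gamma_{1}>\mu^\gamma_k+\varepsilon$ therefore implies $k^{\gamma,*}(\boldsymbol{\mu}')=1$ in a neighborhood of $\boldsymbol{\mu}$. Hence the counter set keeps the same description $\bigcup_{k\ge2}\{\lambda_k^\gamma>\lambda_1^\gamma\}$ there.
\item Continuity of each subproblem. Each $f^{\mathrm{opt}(k)}_{\boldsymbol{\mu}',\boldsymbol{\gamma}}(\mathbf{w})$ is the infimum over a fixed set, independent of $\boldsymbol{\mu}'$, of a function jointly continuous in $(\boldsymbol{\mu}',\mathbf{w},\boldsymbol{\lambda})$. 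It is therefore upper semicontinuous.
\item Lower semicontinuity. The infimum can be restricted to a compact region, e.g.\ $\boldsymbol{\lambda}$ between $\boldsymbol{\mu}$ and a fixed point in $\{\lambda_k^\gamma\ge\lambda_1^\gamma\}$, using coordinatewise monotonicity of $\mathrm{d}$ and of $\lambda^\gamma$. This gives lower semicontinuity as well.
\end{itemize}

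I expect this continuity step to be the main obstacle. The constraint is non-convex because of $\min(\cdot,0)$, and unlike in Lemma \ref{lemma: counter_set_calculation} the closure of the feasible region is not a simple half-space, so it must be checked that the infimum over the open set $\{\lambda_k^\gamma>\lambda_1^\gamma\}$ equals the minimum over its closure. This holds because $\lambda^\gamma$ is strictly increasing in each coordinate (with $q_l>0$), so every boundary point is a limit of interior points. The hardest technical point is obtaining uniformity of the compactness bound in $\mathbf{w}$ when some $w_{k,l}$ vanish, which is handled by the $\varepsilon_t$-forced exploration.

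\textbf{Sample complexity.} The remainder follows the proof of Theorem \ref{thm:opt_convergnce} verbatim.
\begin{enumerate}
\item C-tracking ensures $N_{k,l}(t)\ge\sqrt{t}-KL$, so on a good event $\mathcal{E}_T$ we have $\hat{\boldsymbol{\mu}}(t)\to\boldsymbol{\mu}$ with $\sum_T T\,\mathbb{P}(\mathcal{E}_T^c)<\infty$.
\item Upper hemicontinuity and the tracking lemma of \cite{garivier2016optimal} give $\inf_{\mathbf{w}\in\mathbf{w}^*(\boldsymbol{\mu},\boldsymbol{\gamma})}\|N(t)/t-\mathbf{w}\|_\infty\to0$. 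Continuity of $f^*$ then gives $Z(t)=t f^*_{\hat{\boldsymbol{\mu}}(t),\boldsymbol{\gamma}}(\hat{\mathbf{w}}_t)\ge t\,(T^*(\boldsymbol{\mu},\boldsymbol{\gamma})^{-1}-\eta)$ for large $t$ on $\mathcal{E}_T$.
\item Comparing with $\beta(t,\delta)=\ln(1/\delta)+O(\ln\ln t)$ bounds $\mathbb{E}[\tau_\delta]$ by $(T^*(\boldsymbol{\mu},\boldsymbol{\gamma})^{-1}-\eta)^{-1}\ln(1/\delta)(1+o(1))$ plus a constant.
\item Letting $\eta\to0$ gives the upper bound.
\end{enumerate}
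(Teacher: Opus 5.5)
Your proposal is correct and follows essentially the paper's route. The strict penalized gap makes $k^{\gamma,*}$, and hence $\mathcal{A}(\boldsymbol{\mu},\boldsymbol{\gamma})$, locally constant, which yields continuity of $f^*_{\boldsymbol{\mu},\boldsymbol{\gamma}}$; Berge's theorem then gives an upper hemicontinuous, convex-valued $\mathbf{w}^*(\boldsymbol{\mu},\boldsymbol{\gamma})$, and C-tracking with the GLR threshold feeds into the Track-and-Stop analysis of \cite{degenne2019pure}. The only differences are that you prove continuity by hand rather than through the Feinberg--Berge theorem (upper semicontinuity as an infimum over a locally fixed set, lower semicontinuity by compact restriction), and that you write out the good-event argument instead of citing Theorem~7.

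One correction. The $\varepsilon_t$-forced exploration cannot supply uniformity in $\mathbf{w}$, because Berge needs continuity on all of $\Sigma_{K\times L}$, boundary included. You do not need it, though. Clip to $\lambda_{k,l}\ge\mu_{k,l}$ and $\lambda_{1,l}\le\mu_{1,l}$, then slide toward $\boldsymbol{\mu}$ until $\lambda_k^\gamma=\lambda_1^\gamma$; this confines a minimizer to a box that depends only on $\boldsymbol{\mu}$, uniformly over $\mathbf{w}$. Your ``fixed point'' therefore has to be chosen this way, not arbitrarily.
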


The proof of Corollary \ref{cor:opt_convergnce_ext2} is given in Appendix \ref{appendix:thm_conv_ext2_proof}. The proof mirrors that of Theorem \ref{thm:opt_convergnce} with modified internal optimization.

%% file: sections/4_extensions/3_conclusion.tex
\subsection{Conclusion}
\label{subsec:ext_conclusion}
In this section, we summarize the two extensions from the viewpoint of the counter set. The use of the counter set in the T-a-S-CS algorithmic framework is indeed powerful in that it flexibly allows various generalizations of the SBFC problem. To incorporate a new setting, the idea is to analyze any specific instance $\boldsymbol{\mu}\in\mathcal{S}$ under the setting by explicitly answering ``which policy is selected as the best", and ``how to modify $\boldsymbol{\mu}$ into $\boldsymbol{\lambda}$, yielding a different best policy", i.e., to have $k^*(\boldsymbol{\lambda})\not=k^*(\boldsymbol{\mu})$. The problem setting (i.e., the form of constraint and the level of fairness) is transformed and incorporated into calculation of the sample complexity and the construction of the algorithm, completely through the counter set analysis.

%% file: sections/5_numerical/1_aspects.tex
\section{Computational and Numerical Study}
\label{sec:computational_numerical}

In the previous sections, we established the theoretical foundations of the SBFC problem and its extensions, including asymptotic optimality guarantees and sample complexity lower bounds. This section focuses on practical implementation and empirical validation. Section \ref{subsec:practical_implementation} discusses practical implementation strategies for the T-a-S-CS algorithm, including finite-sample considerations, computational speedup techniques, and batch sampling. Section \ref{subsec:numerical_experiments} presents numerical experiments designed to verify theoretical results and demonstrate the algorithm's practical performance.

\subsection{Practical Implementation Strategies}
\label{subsec:practical_implementation}

While the T-a-S-CS algorithm provides strong theoretical guarantees, practical implementations benefit from computational speedup techniques. We first discuss the inherent finite-sample conservativeness of the stopping criterion, and then present two complementary speedup strategies, n-step update and batch sampling.

\textbf{Finite-sample Conservativeness and Choice of Stopping Criterion.}
\label{subsubsec:asymptotic_finite_sample}
Theorem~\ref{thm:opt_convergnce} establishes that T-a-S-CS attains the lower bound $T^*(\boldsymbol{\mu})$ asymptotically as $\delta \to 0$. In practice where we set finite values of $\delta$, however, Track-and-Stop procedures can stop noticeably later than $T^*(\boldsymbol{\mu})\ln(1/\delta)$. To explain, this is because the stopping thresholds required by the proofs, e.g., in \citet[Proposition~21]{kaufmann2021mixture}, carry lower-order slack terms ($\ln\ln t$, $\log K$, and constants). These slack terms are necessary for uniform correctness, become dominated by $\ln(1/\delta)$ as we take the limit $\delta \to 0$ and therefore do not affect the asymptotic properties, but are loose on any specific values of $\delta$. Further, the over-conservativeness of the resulting stopping criterion, even when the stylised stopping threshold of $\ln((\ln t+1)/\delta)$ is adopted, has been documented for the original Track-and-Stop algorithm \citep{garivier2016optimal,russac2021b}. Specifically, over-conservative means that the stopping criterion results in much lower empirical probability of error than the risk assessment, and therefore requiring more samples than necessary to stop the procedure.

Therefore, asymptotic optimality and finite-sample conservativeness coexist, describing different aspects of the same algorithm. We further conduct experiments in Section~\ref{subsubsec:asymptotic_optimality} to empirically disentangle the two aspects, and to identify which features of Theorem~\ref{thm:opt_convergnce} survive at practical precision.

\textbf{n-Step Weight Update.} In practice, it is not necessary to solve the external optimization problem (\ref{eq:external_opt}) to full convergence at each time step $t$. Instead, at each iteration, we execute three steps: (\textit{i}) \textbf{solve} the internal optimization problem (e.g., (\ref{eq:gaussian_internal_opt}) for Gaussian SBFC) once using the current estimate $\hat{\boldsymbol{\mu}}(t)$ to obtain a subgradient; (\textit{ii}) \textbf{update} the weights via 
$n$ step of projected subgradient ascent ($\mathbf{w}_{t-1} \to \mathbf{w}_t$); and (\textit{iii}) \textbf{sample} the next observation according to the C-Tracking rule using $\mathbf{w}_t$. This $n$-step update strategy reduces computational cost while maintaining convergence guarantees (as the estimates $\hat{\boldsymbol{\mu}}(t) \to \boldsymbol{\mu}$ drive $\mathbf{w}_t \to \mathbf{w}^*(\boldsymbol{\mu})$). Numerical experiments in Section \ref{subsubsec:asymptotic_optimality} confirm that this approach achieves desirable sample complexity.

\textbf{Batch Sampling.} The T-a-S-CS algorithm is inherently sequential: each sampling decision depends on the updated estimates from all previous observations, so the algorithm must wait for each sample outcome before determining the next action. In many practical settings---such as clinical trials, A/B testing, or field experiments---collecting a single observation can take considerable time, and this waiting cost dominates computational cost. Batch sampling mitigates the waiting cost by exploiting the fact that, when the estimate $\hat{\boldsymbol{\mu}}(t)$ is close to convergence, successive weight updates $\mathbf{w}_t \to \mathbf{w}_{t+1}$ become increasingly small, and since the sampling decision $(k_t, l_t)$ is discrete (determined by the C-Tracking rule), small changes in $\mathbf{w}$ may not alter the sampling decision. Rather than recomputing weights after every sample, we compute $\mathbf{w}_t$ at iteration $t$, draw a batch of $B$ samples \emph{in parallel} using the same weights, update estimates $\hat{\boldsymbol{\mu}}(t+B)$ after the batch, and recompute weights $\mathbf{w}_{t+B}$ for the next batch. Since samples within each batch are drawn independently given $\mathbf{w}_t$ and can be collected simultaneously, this reduces the number of sequential sampling rounds at the cost of a moderate increase in sample complexity. 

%% file: sections/5_numerical/2_experiments.tex
\subsection{Numerical Experiments}
\label{subsec:numerical_experiments}

This section presents numerical experiments designed to validate theoretical results and demonstrate the practical performance of the T-a-S-CS algorithm. Section~\ref{subsubsec:asymptotic_optimality} empirically verifies the $O(\ln(1/\delta))$ scaling rate predicted by Theorem~\ref{thm:opt_convergnce} and characterizes the finite-sample gap between the empirical stopping time and the asymptotic prediction $T^*(\boldsymbol{\mu})\ln(1/\delta)$ (cf.\ Section~\ref{subsubsec:asymptotic_finite_sample}). Section~\ref{subsubsec:robustness} shifts focus from the stopping rule to the allocation rule, comparing T-a-S-CS against the classical T-a-S algorithm in a fixed-budget regime at operationally relevant scales ($K$ up to 100) under distributional misspecification, to demonstrate the value of subgroup-level allocation and its implicit optimality-feasibility balancing. Section~\ref{subsubsec:extension_comparison} compares the SBFC and $\gamma$-SBFC problem formulations in terms of sample complexity, computational cost, and flexibility.

\textbf{General Setup.} Unless otherwise stated, all experiments use the following implementation: each policy-subpopulation pair $(k,\ell)$ is initialized with 5 samples; weights are updated via one step of projected subgradient ascent with stepsize $\alpha=1$ at each iteration (see Section \ref{subsec:practical_implementation}); the stopping rule uses the generalized likelihood ratio (GLR) statistic with threshold $\beta(t,\delta)=\ln((1+\ln t)/\delta)$; and each configuration is replicated 3000 times to compute the average stopping time $\hat{\tau}_\delta$ with $95\%$ confidence intervals and the empirical probability of correct selection $\hat{P}_{\boldsymbol{\mu}}$. \footnote{All experiments were run on a single AWS EC2 \texttt{c7i.8xlarge}
instance (32 vCPUs, 64~GB RAM) with Gurobi 12.0 (WLS license).
Each replication was executed as an independent process, parallelized
across vCPUs via GNU \texttt{parallel}. Aggregated over the six
settings reported in Sections~\ref{subsubsec:asymptotic_optimality}--%
\ref{subsubsec:extension_comparison}, the total $18{,}000$ replications required approximately $2{,}400$ CPU-hours, dominated by the MIQP-based $\gamma$-variant in Section~\ref{subsubsec:extension_comparison}
(mean $\approx 41$~min per replication) and the operational scale study in Section~\ref{subsubsec:robustness}
(mean $\approx 9$~min per replication); all other settings averaged under one minute per replication.}

%% file: sections/5_numerical/2_experiments/1_asymptotic.tex
\subsubsection{Asymptotic Scaling and the Finite-Sample Gap}
\label{subsubsec:asymptotic_optimality}

Theorem~\ref{thm:opt_convergnce} establishes that the T-a-S-CS algorithm achieves asymptotic optimality: $\lim_{\delta\to 0}\frac{\mathbb{E}_{\boldsymbol{\mu}}[\tau_\delta]}{\ln(1/\delta)}=T^*(\boldsymbol{\mu})$. This asymptotic statement characterizes the {rate} at which the expected stopping time grows with $\ln(1/\delta)$. As discussed in Section~\ref{subsubsec:asymptotic_finite_sample}, Track-and-Stop procedures tend to stop later than the asymptotic prediction at practical precision levels. This section empirically examines the interplay between the two aspects: we verify the asymptotic scaling rate and characterize the magnitude of the finite-sample gap.

\textbf{Problem Instance.} We consider an instance with $K=5$ policies and $L=3$ subpopulations under equal weights $q_l = 1/3$ and fairness constraint $C_{\min} = 0$. The mean matrix $\boldsymbol{\mu}$ is given in Appendix \ref{appendix: scaling_problem_instance}. Observations are Gaussian with $\sigma = 0.5$.

\textbf{Experiment Design.}
Using the instance above, we test 10 geometrically-spaced values $\delta \in [0.0005, 0.2]$ with 3000 independent replications each. For each $\delta$, we record the empirical mean stopping time $\hat{\tau}_\delta$ and examine the normalized ratio $\hat{\tau}_\delta / \ln(1/\delta)$, which Theorem~\ref{thm:opt_convergnce} predicts converges to $T^*(\boldsymbol{\mu})$ as $\delta \to 0$.

Figure \ref{fig:exp_a_scaling} presents the key scaling results. The left panel plots the normalized ratio $\hat{\tau}_\delta / \ln(1/\delta)$ against $\delta$. As $\delta$ decreases, the ratio declines toward the theoretical constant $T^*(\boldsymbol{\mu}) \approx 57.32$ (dashed line), consistent with the asymptotic convergence predicted by Theorem~\ref{thm:opt_convergnce}. At every finite $\delta$ tested, the empirical ratio remains above $T^*(\boldsymbol{\mu})$. This is expected: The stopping threshold $\beta(t,\delta) = \ln((1+\ln t)/\delta)$ carries lower-order terms ($\ln\ln t$ and constants) that inflate the stopping time beyond $T^*(\boldsymbol{\mu})\ln(1/\delta)$ at moderate precision levels. In addition, the $n$-step weight update described in Section~\ref{subsec:practical_implementation} can contribute to the gap between $\hat{\tau}_\delta/\ln(1/\delta)$ and $T^*(\boldsymbol{\mu})$. The right panel confirms that $\hat{\tau}_\delta$ scales linearly with $\ln(1/\delta)$, validating the $O(\ln(1/\delta))$ growth rate.

\begin{figure}[htbp]
  \centering
  \begin{minipage}[t]{0.48\textwidth}
    \centering
    \includegraphics[width=\textwidth]{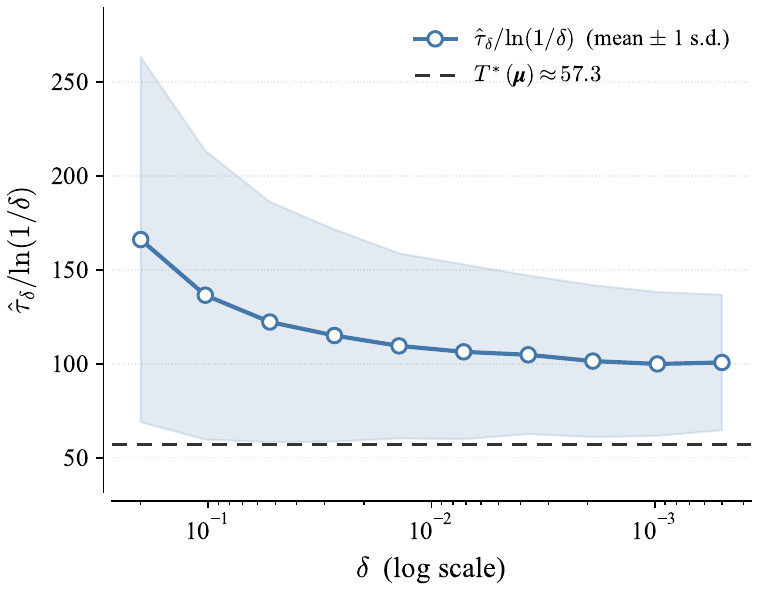}
  \end{minipage}
  \hfill
  \begin{minipage}[t]{0.48\textwidth}
    \centering
    \includegraphics[width=\textwidth]{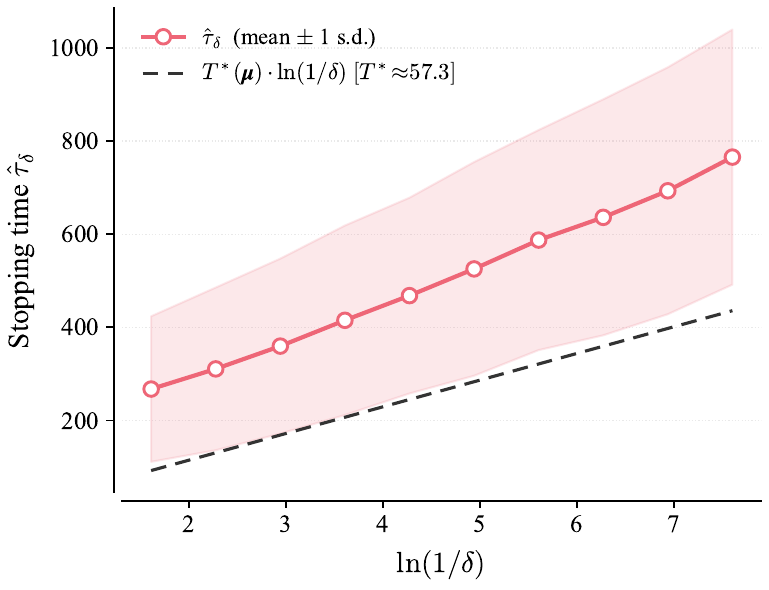}
  \end{minipage}
  \caption{Asymptotic scaling analysis. Left: Normalized stopping time $\hat{\tau}_\delta / \ln(1/\delta)$ versus $\delta$, with the asymptotic constant $T^*(\boldsymbol{\mu}) \approx 57.32$ (dashed line). The gap above $T^*(\boldsymbol{\mu})$ reflects finite-sample conservativeness (Section~\ref{subsubsec:asymptotic_finite_sample}). Right: Stopping time $\hat{\tau}_\delta$ versus $\ln(1/\delta)$, demonstrating the predicted linear scaling.}
  \label{fig:exp_a_scaling}
\end{figure}

Figure \ref{fig:exp_a_heatmap} displays the empirical sampling allocation at $\delta = 0.0005$. The algorithm concentrates samples on the two hardest-to-resolve comparisons: the pair $(a_3, \text{subpop.~1})$ receives the highest weight to verify the binding feasibility constraint, while $(a_1, \cdot)$ and $(a_2, \cdot)$ receive moderate weight for optimality discrimination. Policies $a_4$ and $a_5$ are allocated near-zero weight, consistent with their rapid elimination.

\begin{figure}[htbp]
  \centering
  \includegraphics[width=0.55\textwidth]{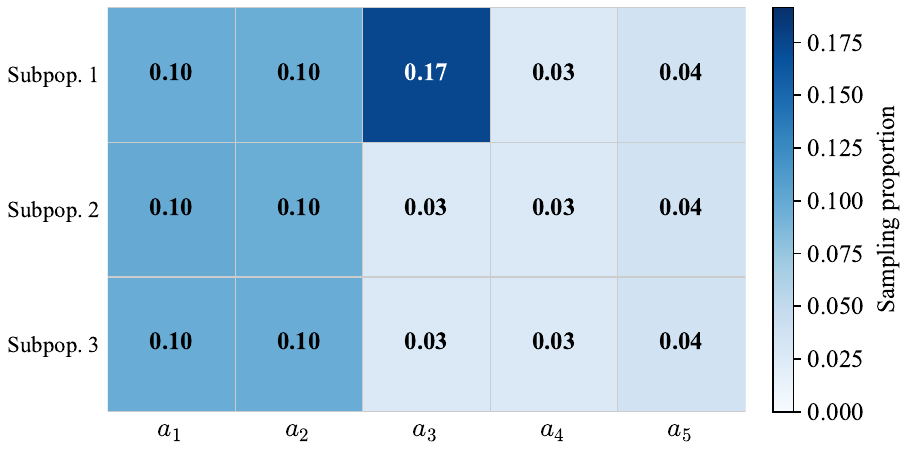}
  \caption{Empirical sampling proportions at $\delta = 0.0005$, displayed as a $5 \times 3$ heatmap (rows = policies, columns = subpopulations). Cell values are mean sampling proportions over 3000 replications.}
  \label{fig:exp_a_heatmap}
\end{figure}

Detailed numerical results are reported in Table~\ref{table:exp_a_results} (Appendix~\ref{appendix:asymptotic_scaling_details}). Most notably, all configurations achieve empirical correctness probability at least $\hat{P}_{\boldsymbol{\mu}} = 0.9997$, still well exceeding the required $1-\delta$ guarantee; this remains a direct manifestation of the over-conservativeness discussed in Section~\ref{subsubsec:asymptotic_finite_sample}. Additional results from a complementary perspective that involves varying the problem instance and changing the feasibility gap are provided in Appendix~\ref{appendix:instance_sensitivity}.

%% file: sections/5_numerical/2_experiments/2_robustness.tex
\subsubsection{Allocation Efficiency at Operational Scale}
\label{subsubsec:robustness}

The preceding experiment reveals that the stopping criterion of T-a-S-CS is over-conservative at practical precision levels. For large-scale problems with many candidate policies, this conservativeness can make the sequential stopping rule impractical: the algorithm may demand a prohibitively long sampling horizon before it stops. However, the over-conservative stopping rule does not diminish the value of the \emph{allocation rule} itself---the weight vector $\mathbf{w}^*$ that the algorithm computes and tracks still encodes an efficient distribution of sampling effort across policy-subpopulation pairs.  Moreover, as shown in Section~\ref{subsec:counter_set}, $\mathbf{w}^*$ implicitly balances the optimality-feasibility trade-off (Section~\ref{subsec:opt_feas_tradeoff}) by maximizing the minimum confidence between verifying that the selected policy is optimal and verifying that it is feasible. This section isolates and evaluates the allocation rule of T-a-S-CS at operationally relevant scales ($K$ up to 100), where the stopping criterion is set aside and performance is measured by the probability of correct selection after a pre-specified number of sampling rounds. As the baseline, we use the classical Track-and-Stop (T-a-S) algorithm \citep{kaufmann2016complexity}, which allocates samples at the policy level based only on overall performance gaps and does not incorporate subpopulation-level information. By comparing T-a-S-CS against T-a-S, we directly quantify the benefit of the subpopulation-level allocation and its implicit optimality-feasibility balancing.

\medskip
\noindent\textbf{Problem Instance.}
We use $L = 3$ subpopulations with equal weights $q_\ell = 1/3$ and fairness threshold $C_{\min} = 0.2$. True observations follow $X_{k,\ell} \sim \text{Bernoulli}(\mu_{k,\ell})$. The optimal policy has $\boldsymbol{\mu}_1 = (0.45, 0.45, 0.45)$, with small gaps to sub-optimal feasible competitors ($\mu_{k,\ell} \sim U[0.32, 0.42]$ for all $\ell$). Infeasible policies are drawn from two groups: those with high overall performance ($\mu_{k,1} \sim U[0.00, 0.10]$, $\mu_{k,\ell} \sim U[0.45, 0.70]$ for $\ell \geq 2$) and those with low overall performance ($\mu_{k,1} \sim U[0.00, 0.10]$, $\mu_{k,\ell} \sim U[0.20, 0.35]$ for $\ell \geq 2$). 

This instance also introduces a distributional mismatch: the true data-generating process is Bernoulli, while the T-a-S-CS algorithm uses the Gaussian KL-divergence ($\sigma^2 = 1$) for weight computation. This deliberate misspecification tests whether the allocation rule remains effective when the distributional family is not correctly specified---a common practical scenario where the Gaussian approximation is preferred for its computational tractability (Appendix~\ref{appendix:gaussian_sbfc}).

\medskip
\noindent\textbf{Experiment Design.}
We test $K \in \{50, 100\}$ and compare two allocation strategies. The first is T-a-S-CS with Gaussian KL-divergence, batch size $B = 10$, with 3 subgradient steps per batch. At each step, the tracking rule selects a specific policy-subpopulation pair $(k, \ell)$ to sample. The second is T-a-S with Gaussian KL-divergence, which allocates at the policy level using the classical Track-and-Stop weights, determined by overall performance gaps $\bar{\mu}_{k^*} - \bar{\mu}_k$ alone. At each step, the tracking rule selects a policy $k$ to sample, and a subpopulation $\ell$ is drawn uniformly at random. Both strategies are run for a fixed budget of $T = K \times L \times 40$ rounds: $T = 6{,}000$ at $K=50$ and $T = 12{,}000$ at $K=100$. At the end of $T$ rounds, each algorithm recommends $\hat{k}^* = \arg\max_{k \in \hat{\mathcal{C}}(\hat{\boldsymbol{\mu}}(T))} \hat{\bar{\mu}}_k$. Each configuration is replicated 3000 times.

\medskip
\noindent\textbf{Results.}
Table~\ref{table:part_b_pcs} reports the empirical probability of correct selection (PCS) at the end of the sampling horizon, with 95\% binomial confidence intervals. Figure~\ref{fig:part_b_pcs} shows the PCS trajectory over sampling rounds at $K \in \{50, 100\}$. The subpopulation-level allocation of T-a-S-CS dramatically outperforms the policy-level allocation of T-a-S at every checkpoint. At $K=50$, T-a-S-CS reaches $95.9\%$ PCS while T-a-S achieves only $52.5\%$; at $K=100$, the gap widens to $+57.3$ percentage points ($96.3\%$ vs.\ $39.0\%$). Notably, T-a-S performance \emph{degrades} as $K$ grows, because its policy-level weights spread effort across increasingly many policies without any mechanism to target the subpopulations where feasibility violations occur. In contrast, T-a-S-CS allocates at the $(k, \ell)$ level and implicitly balances the optimality-feasibility trade-off (Section~\ref{subsec:opt_feas_tradeoff}), concentrating samples on the subpopulations that matter most for resolving each policy's feasibility status.

\begin{table}[ht]
\centering
\small
\renewcommand{\arraystretch}{1.2}
\begin{tabular}{rrcccr}
\toprule
$K$ & $T$ & $\hat{P}_{\mathrm{CS}}$ T-a-S-CS ($\pm$ 95\% CI) &
             $\hat{P}_{\mathrm{CS}}$ T-a-S ($\pm$ 95\% CI) & Gain \\
\midrule
50  & $6{,}000$  & $0.959 \pm 0.007$ & $0.525 \pm 0.018$ & $+0.434$ \\
100 & $12{,}000$ & $0.963 \pm 0.007$ & $0.390 \pm 0.018$ & $+0.573$ \\
\bottomrule
\end{tabular}
\caption{Empirical PCS after $T$ sampling rounds (3000 replications, $\pm$ 95\% binomial CI). The ``Gain'' column reports $\hat{P}_{\mathrm{CS}}(\text{T-a-S-CS}) - \hat{P}_{\mathrm{CS}}(\text{T-a-S})$.}
\label{table:part_b_pcs}
\end{table}

\begin{figure}[htbp]
  \centering
  \includegraphics[width=\textwidth]{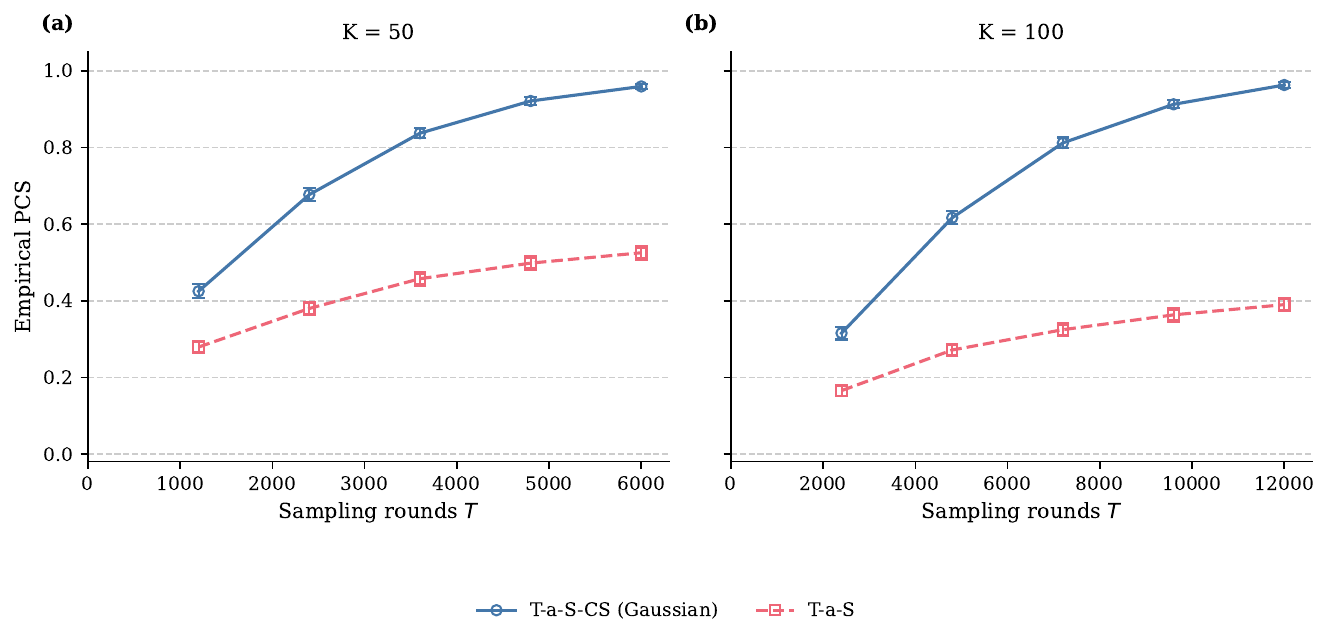}
  \caption{Empirical PCS of T-a-S-CS (Gaussian) and T-a-S over sampling rounds at $K=50$ (left) and $K=100$ (right). Error bars show 95\% binomial CIs at each checkpoint. The subpopulation-level allocation of T-a-S-CS dramatically outperforms the policy-level allocation of T-a-S, which cannot direct effort toward feasibility verification.}
  \label{fig:part_b_pcs}
\end{figure}

These results demonstrate that the subpopulation-level allocation derived from the lower-bound optimization retains substantial practical value even when (i) the stopping criterion is not used, (ii) the distributional family is misspecified, and (iii) the number of candidate policies is large. The policy-level allocation of classical T-a-S, which does not account for subpopulation structure or feasibility constraints, is fundamentally inadequate for the SBFC problem. The ability of T-a-S-CS to balance optimality verification and feasibility verification at the $(k, \ell)$ level is the key driver of its practical advantage.

%% file: sections/5_numerical/2_experiments/3_extension_comparison.tex
\subsubsection{Hard Constraints vs.\ Soft Constraints}
\label{subsubsec:extension_comparison}

Section \ref{subsec:second_extension} introduces the $\gamma$-SBFC formulation with soft fairness constraints as an alternative to the hard constraint formulation. Appendix \ref{appendix:soft_constraint} shows that under Gaussian observations, the SBFC subproblem is a convex QP while the $\gamma$-SBFC subproblem is a mixed-integer QP (MIQP); both can be efficiently solved using standard solvers such as Gurobi. This experiment empirically compares SBFC (hard constraints) and $\gamma$-SBFC (soft constraints) to address the question: How does the choice between hard and soft constraints affect sample complexity, and how does the penalty parameter $\gamma$ influence algorithm behavior?

\textbf{Problem Instance.} We consider $K=10$ policies and $L=3$ subpopulations with equal weights $q_\ell = 1/3$. Table~\ref{table:extension_instance} (Appendix~\ref{appendix:extension_instance}) reports the mean matrix. The fairness constraint requires $\mu_{k,\ell} \geq C_{\min} = 0.2$. Of the ten policies, five are feasible (policies 1--3, 9, 10) and five are infeasible (policies 4--8). Policy~1 is the optimal feasible policy with $\bar\mu_1 = 0.550$. Policy~4 achieves the highest overall performance $\bar\mu_4 = 0.700$ but is infeasible due to $\mu_{4,1} = 0.05 < 0.2$. Observations are Gaussian with $\sigma = 0.5$. We test $\delta = 0.2$ with 3{,}000 independent replications.

\textbf{Two Formulations.} We compare: (1) \textbf{SBFC (hard constraints)}, which requires $\mu_{k,\ell} \geq C_{\min}$ and selects policy~1 as optimal; and (2) \textbf{$\gamma$-SBFC (soft constraints)}, which penalizes violations through the modified performance $\mu_k^\gamma$ defined in \eqref{eq:linear_relaxed}. For policy~4, $\mu_4^\gamma = 0.700 - \tfrac{1}{3}\gamma \cdot 0.15$ and $\mu_1^\gamma = 0.550$, yielding a critical threshold $\gamma^* = 3.0$: for $\gamma < \gamma^*$, policy~4 is optimal under $\gamma$-SBFC; for $\gamma > \gamma^*$, policy~1 is optimal (same as SBFC). We test $\gamma \in \{0.5, 1.0, 5, 10\}$, placing two values below $\gamma^*$ and two above.

\textbf{Theoretical Sample Complexity $T^*$ vs.\ $\gamma$.}
Before examining empirical results, we analyze how the theoretical sample complexity lower bound $T^*(\boldsymbol{\mu}, \gamma)$ varies with the penalty parameter $\gamma$. For the instance above, we solve the external optimization problem $\sup_{\mathbf{w}} f^*_{\boldsymbol{\mu},\gamma}(\mathbf{w})$ from Corollary \ref{cor:lower_bound_ext2} using the MIQP formulation (\ref{eq:gamma_miqp}) with projected subgradient ascent. Figure \ref{fig:T_star_vs_gamma} shows $T^*(\boldsymbol{\mu}, \gamma)$ as a function of $\gamma$, with the SBFC baseline $T^*(\boldsymbol{\mu}) \approx 170$ for comparison. The plot reveals three regimes. For $\gamma < \gamma^* = 3.0$, the optimal penalized policy is $a_4$ (infeasible under hard constraints), and sample complexity increases as $\gamma \to \gamma^*$ because the penalized performance gap shrinks ($T^*(\boldsymbol{\mu}, 0.5) \approx 506$, $T^*(\boldsymbol{\mu}, 1.0) \approx 993$). At the critical threshold $\gamma = \gamma^*$, the penalized performances of policies~4 and~1 coincide, so $T^*(\boldsymbol{\mu}, \gamma) \to \infty$. For $\gamma > \gamma^*$, the optimal policy switches to $a_1$ (consistent with SBFC), and the gap widens as $\gamma$ grows, driving $T^*$ back down to approach (but stays above) the SBFC baseline $T^*(\boldsymbol{\mu}) = 170$.

\textbf{Results.} 
Table \ref{table:extension_comparison} reports empirical stopping times and correctness, and Figure \ref{fig:weights_extension} (Appendix~\ref{appendix:extension_weights}) shows the sampling proportions. All formulations achieve $\hat{P}_{\boldsymbol{\mu}} \geq 0.99 \geq 1 - \delta$, confirming valid $\delta$-PAC guarantees for both hard and soft constraints. Consistent with the theoretical analysis in Figure~\ref{fig:T_star_vs_gamma}, the empirical stopping times of $\gamma$-SBFC are uniformly larger than those of SBFC, and the penalty parameter $\gamma$ controls which policy is selected: for $\gamma < \gamma^*$, the algorithm selects the infeasible but high-performing policy~4, while for $\gamma > \gamma^*$ it selects policy~1, the same as SBFC. In summary, $\gamma$-SBFC incurs higher sample complexity and greater computational cost per iteration (MIQP vs.\ QP), but offers additional flexibility by allowing the decision-maker to calibrate, through $\gamma$, the degree to which fairness violations are tolerated.

\begin{figure}[ht]
\centering
\begin{minipage}[t]{0.46\textwidth}
    \centering
    \vspace{0pt}
    \includegraphics[width=\linewidth]{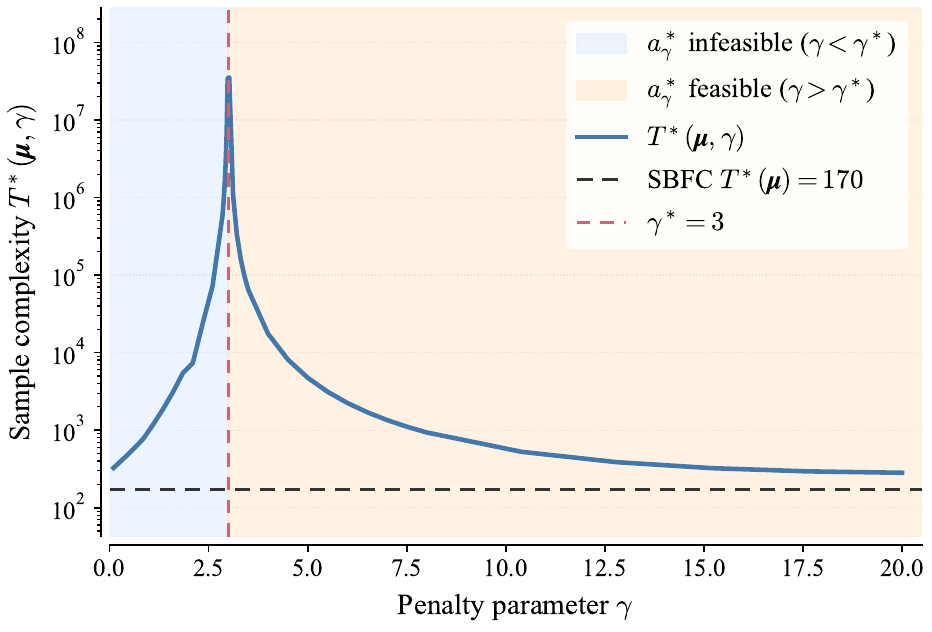}
    \captionof{figure}{Theoretical sample complexity $T^*(\boldsymbol{\mu}, \gamma)$ vs.\ penalty $\gamma$. Dashed line: SBFC baseline; vertical line at $\gamma^* = 3$: switch of the optimal penalized policy from policy~4 (infeasible) to policy~1 (feasible).}
    \label{fig:T_star_vs_gamma}
\end{minipage}\hfill
\begin{minipage}[t]{0.51\textwidth}
    \centering
    \vspace{0pt}
    \small
    \renewcommand{\arraystretch}{1.25}
    \setlength{\tabcolsep}{4pt}
    \begin{tabular}{lcccc}
    \toprule
    Formulation & $a^*$ & $\hat{\tau}_\delta$ & Std & $\hat{P}_{\boldsymbol{\mu}}$ \\
    \midrule
    SBFC & $a_1$ & $853 \pm 5$ & 271 & 0.999 \\
    $\gamma$-SBFC ($\gamma=0.5$) & $a_4$ & $969 \pm 10$ & 550 & 1.000 \\
    $\gamma$-SBFC ($\gamma=1.0$) & $a_4$ & $1{,}473 \pm 17$ & 946 & 0.998 \\
    $\gamma$-SBFC ($\gamma=5$) & $a_1$ & $3{,}179 \pm 38$ & 2{,}104 & 0.991 \\
    $\gamma$-SBFC ($\gamma=10$) & $a_1$ & $1{,}080 \pm 8$ & 440 & 0.997 \\
    \bottomrule
    \end{tabular}
    \vspace{0.6em}
    \captionof{table}{Performance comparison between SBFC and $\gamma$-SBFC formulations ($K=10$, $L=3$, $\delta = 0.2$, 3{,}000 replications). Column $a^*$ indicates the true optimal policy; $\hat{\tau}_\delta$ reports mean $\pm$ SE.}
    \label{table:extension_comparison}
\end{minipage}
\end{figure}

%% file: sections/5_numerical/3_real_demonstration.tex
\subsection{Real-World Demonstration: The International Stroke Trial}
\label{subsec:real_scenario}

As discussed in the Introduction, a commitment-stage decision-maker must choose a single, universal policy whose performance is auditable in every pre-specified subpopulation. Clinical regulators make this logic explicit: a recommended intervention must perform adequately in vulnerable subpopulations, not only on average~\citep{ema2019subgroup}. In this section, we apply our SBFC framework to the International Stroke Trial (IST)~\citep{ist1997lancet,sandercock2011ist}, a landmark study of antithrombotic therapy in acute ischaemic stroke. We show the effectiveness of T-a-S-CS in identifying the subpopulation-feasible optimum on genuine clinical data.

\medskip
\noindent\textbf{Problem framing.}
IST used a $2 \times 2$ factorial design, which we treat as $K = 4$ candidate policies: \emph{Aspirin only}, \emph{Heparin only}, \emph{Both}, and \emph{Neither}. Subpopulations are defined by age, giving $L = 2$: patients \emph{younger than 80} ($q_1 = 0.7177$) and \emph{aged 80 or older} ($q_2 = 0.2823$) from the full trial cohort ($N = 19{,}435$). The outcome $X_{k,\ell}$ is a binary indicator equal to $1$ for a safe 14-day outcome (alive, free of recurrent stroke and major bleeding), and $0$ otherwise. The fairness constraint requires $\mu_{k,\ell} \geq 5/6$ for all $\ell$. The original IST employed uniform randomization across all arms
without subpopulation targeting. We use the trial data as a
nonparametric simulator: at each round, when the algorithm
allocates a sample to cell~$(k,\ell)$, one patient record is
drawn with replacement from the corresponding
(treatment, age group) cell and its 14-day composite outcome is
returned. This setup asks a retrospective question: given the
same patient population, how much more efficiently could a
sequential, fairness-aware allocation have identified the
subpopulation-feasible optimum compared with the uniform
design actually used?

\medskip
\noindent\textbf{Cell means and feasibility structure.}
Table~\ref{table:ist_means} reports the empirical cell means $\mu_{k,\ell}$ and feasibility status under $C_{\min} = 5/6$. The policy with the highest overall mean, \emph{Neither}, fails the threshold for the older group ($\mu_{\text{Neither},\geq80} = 0.8248 < 5/6$). Only \emph{Aspirin only} clears the feasibility threshold for both groups. Thus, the subpopulation-feasible optimum is $k^* = \text{Aspirin only}$, illustrating how sub population fairness can overturn the unconstrained ranking (Case \ref{example:2} from Section~\ref{subsec:opt_feas_tradeoff}).

\begin{table}[ht]
\centering
\small
\renewcommand{\arraystretch}{1.15}
\begin{tabular}{lcccc}
\toprule
Policy & Age $<$ 80 ($q_1{=}0.7177$) & Age $\geq$ 80 ($q_2{=}0.2823$) & $\bar\mu_k$ & Feasible \\
\midrule
Aspirin only & $0.8925$ & $\mathbf{0.8429}$ & $0.8785$ & $\checkmark$ \\
Heparin only & $0.8751$ & $\mathbf{0.7980}$ & $0.8534$ & $\times$ \\
Both         & $0.8705$ & $\mathbf{0.7843}$ & $0.8462$ & $\times$ \\
Neither      & $0.9013$ & $\mathbf{0.8248}$ & $0.8797$ & $\times$ \\
\bottomrule
\end{tabular}
\caption{Cell means $\mu_{k,\ell}$ on the IST sample ($N = 19{,}435$). Only \emph{Aspirin only} satisfies the fairness constraint $C_{\min} = 5/6$ across both age groups. \emph{Neither} has the highest overall mean $\bar\mu_k$ but is subpopulation-infeasible.}
\label{table:ist_means}
\end{table}

\medskip
\noindent\textbf{Sampling model and experiment design.}
The performance gaps across policies in the IST data are small
(0.5--1.3 percentage points in the binding elderly subpopulation),
so the GLR stopping criterion requires a sample size substantially
larger than would be available in a realistic acute stroke trial.
We therefore evaluate the allocation rule under a fixed number of samples; the contribution of T-a-S-CS in this setting is the efficiency in sample allocation. To mimic sequential deployment, we simulate a sequential trial by sampling via bootstrap with replacement from the empirical cells. Since outcomes are known to be binary, here we use the correctly specified Bernoulli KL divergences for T-a-S-CS. We evaluate allocation rules over a fixed budget of $T = 6{,}000$ rounds, and use Uniform sampling and T-a-S as baselines. We measure empirical probability of correct selection (PCS) across $3{,}000$ replications at five checkpoints.

\medskip
\noindent\textbf{Results.}
Table~\ref{table:ist_pcs} shows the PCS trajectories. T-a-S-CS dominates both baselines at every checkpoint, finishing $7.1$ percentage points above arm-level T-a-S and $10.3$ percentage points above Uniform at $T = 6{,}000$. Equivalently, T-a-S-CS matches the terminal PCS of arm-level T-a-S (Uniform) using roughly $40\%$ ($50\%$) fewer samples.


\begin{table}[ht]
\centering
\small
\renewcommand{\arraystretch}{1.15}
\begin{tabular}{rccc}
\toprule
$T_c$ & $\hat P_{\mathrm{CS}}$ T-a-S-CS ($\pm$ 95\% CI) & $\hat P_{\mathrm{CS}}$ T-a-S (arm) ($\pm$ 95\% CI) & $\hat P_{\mathrm{CS}}$ Uniform ($\pm$ 95\% CI) \\
\midrule
$3{,}000$ & $0.620 \pm 0.017$ & $0.568 \pm 0.018$ & $0.554 \pm 0.018$ \\
$3{,}750$ & $0.667 \pm 0.017$ & $0.614 \pm 0.017$ & $0.572 \pm 0.018$ \\
$4{,}500$ & $0.692 \pm 0.017$ & $0.637 \pm 0.017$ & $0.596 \pm 0.018$ \\
$5{,}250$ & ${0.723 \pm 0.016}$ & $0.647 \pm 0.017$ & $0.615 \pm 0.017$ \\
$6{,}000$ & ${0.737 \pm 0.016}$ & $0.666 \pm 0.017$ & $0.634 \pm 0.017$ \\
\bottomrule
\end{tabular}
\caption{Empirical PCS after $T_c$ sampling rounds ($3{,}000$ replications, $\pm$ 95\% binomial CI). T-a-S-CS identifies the feasible optimal policy substantially faster than both baselines. See Appendix~\ref{appendix:ist} for the corresponding feasibility structure, data construction and sample allocation.}
\label{table:ist_pcs}
\end{table}

%% file: appendices/A_lowerbound/1_theorem_proof.tex
\subsection{Proof of Theorem \ref{thm:lower_bound}}
\label{appendix:thm_lb_proof}

The proof of Theorem \ref{thm:lower_bound} is based on the change of measure inequality presented in Lemma 1 of \cite{kaufmann2016complexity}. We first restate the lemma for clarity, adapting its binary relative entropy notation $\mathrm{d}(\cdot, \cdot)$ to $\mathrm{kl}(\cdot, \cdot)$.

\begin{lemma}[Lemma 1 in \cite{kaufmann2016complexity}]
\label{lem:lemma1_restated}
Let $\nu$ and $\nu'$ be two bandit models with $K$ arms such that for all $a$, the distributions $\nu_a$ and $\nu_a'$ are mutually absolutely continuous. For any almost-surely finite stopping time $\sigma$ with respect to $(\mathcal{F}_t)$,
$$
\sum_{a=1}^{K}\mathbb{E}_{\nu}[N_{a}(\sigma)]KL(\nu_{a},\nu_{a}^{\prime}) \ge \sup_{\mathcal{E}\in\mathcal{F}_{\sigma}} \mathrm{kl}(\mathbb{P}_{\nu}(\mathcal{E}),\mathbb{P}_{\nu^{\prime}}(\mathcal{E})),
$$
where $KL(\nu_a, \nu_a')$ is the Kullback-Leibler divergence between the arm distributions and $\mathrm{kl}(x,y) := x\log(x/y)+(1-x)\log((1-x)/(1-y))$ is the binary relative entropy (KL divergence between Bernoulli distributions).
\end{lemma}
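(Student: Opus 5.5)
The plan is the standard two-step argument: compute the expected log-likelihood ratio at the stopping time by a Wald-type identity, then lower bound it by the data-processing inequality for the event $\mathcal{E}$. First, if $\mathbb{E}_\nu[N_a(\sigma)]=\infty$ for some $a$ with $KL(\nu_a,\nu_a')>0$, the left-hand side is $+\infty$ and there is nothing to prove. We may therefore assume that every term with $KL(\nu_a,\nu'_a)>0$ has $\mathbb{E}_\nu[N_a(\sigma)]<\infty$. Arms with zero divergence have $\nu_a=\nu'_a$ and contribute nothing to the likelihood ratio.

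Let $Y_{a,s}$ denote the $s$-th observation drawn from arm $a$. By mutual absolute continuity, the densities $f_a=\mathrm{d}\nu_a/\mathrm{d}\xi$ and $f'_a=\mathrm{d}\nu'_a/\mathrm{d}\xi$ exist with respect to a common dominating measure $\xi$. Define the log-likelihood ratio
\[
L_t=\sum_{a=1}^K\sum_{s=1}^{N_a(t)}\ln\frac{f_a(Y_{a,s})}{f'_a(Y_{a,s})}.
\]
The sampling rule's choices are $\mathcal{F}_{t-1}$-measurable, possibly with independent internal randomization whose law is the same under both models. Hence the density of the restriction of $\mathbb{P}_\nu$ to $\mathcal{F}_t$ with respect to $\mathbb{P}_{\nu'}$ is exactly $e^{L_t}$. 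The same holds at the a.s.\ finite stopping time, giving $\mathrm{d}\mathbb{P}_\nu|_{\mathcal{F}_\sigma}/\mathrm{d}\mathbb{P}_{\nu'}|_{\mathcal{F}_\sigma}=e^{L_\sigma}$.

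\textbf{Step 1 (Wald identity).} We show
\[
\mathbb{E}_\nu[L_\sigma]=\sum_a \mathbb{E}_\nu[N_a(\sigma)]\,KL(\nu_a,\nu'_a).
\]
For each arm, write $\sum_{s\ge1}\mathbf{1}\{N_a(\sigma)\ge s\}\ln(f_a/f'_a)(Y_{a,s})$. The event $\{N_a(\sigma)\ge s\}$ is determined before the $s$-th draw of arm $a$, and that draw is independent of this event with law $\nu_a$. Each summand therefore has expectation $\mathbb{P}_\nu(N_a(\sigma)\ge s)\,KL(\nu_a,\nu'_a)$. Interchanging the sum and the expectation is the delicate point. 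I would justify it by applying the same argument to the positive and negative parts of $\ln(f_a/f'_a)$, which have finite expectation because $KL<\infty$ and the negative part is bounded in expectation by $1/e$, and then invoking monotone convergence with $\mathbb{E}_\nu[N_a(\sigma)]<\infty$. This step is the main obstacle, since it needs care when $\sigma$ is unbounded. If needed, I would first prove the identity for $\sigma\wedge T$ and let $T\to\infty$.

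\textbf{Step 2 (data processing).} Since $\mathbb{E}_\nu[L_\sigma]=KL(\mathbb{P}_\nu|_{\mathcal{F}_\sigma},\mathbb{P}_{\nu'}|_{\mathcal{F}_\sigma})$, we apply the data-processing inequality to the measurable map $\omega\mapsto\mathbf{1}_{\mathcal{E}}(\omega)$ for any $\mathcal{E}\in\mathcal{F}_\sigma$. The pushforward laws are Bernoulli with parameters $\mathbb{P}_\nu(\mathcal{E})$ and $\mathbb{P}_{\nu'}(\mathcal{E})$, which gives
\[
KL\bigl(\mathbb{P}_\nu|_{\mathcal{F}_\sigma},\mathbb{P}_{\nu'}|_{\mathcal{F}_\sigma}\bigr)\ge \mathrm{kl}\bigl(\mathbb{P}_\nu(\mathcal{E}),\mathbb{P}_{\nu'}(\mathcal{E})\bigr).
\]
Alternatively, one can obtain this directly from Jensen's inequality applied to $\mathbb{E}_\nu[L_\sigma\mid\mathcal{E}]$ and $\mathbb{E}_\nu[L_\sigma\mid\mathcal{E}^c]$, using $\mathbb{E}_\nu[e^{-L_\sigma}\mathbf{1}_{\mathcal{E}}]=\mathbb{P}_{\nu'}(\mathcal{E})$.

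Combining the two steps and taking the supremum over $\mathcal{E}\in\mathcal{F}_\sigma$ yields the lemma. In our setting the lemma is applied to the $KL$ bandit arms indexed by pairs $(k,l)$, with $KL(\nu_a,\nu'_a)=\mathrm{d}(\mu_{k,l},\lambda_{k,l})$.
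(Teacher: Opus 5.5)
Your proposal is correct and is essentially the standard proof from \cite{kaufmann2016complexity}, which the paper only restates and cites without reproducing. That proof also combines Wald's identity $\mathbb{E}_\nu[L_\sigma]=\sum_a\mathbb{E}_\nu[N_a(\sigma)]KL(\nu_a,\nu'_a)$ with the bound $\mathbb{E}_\nu[L_\sigma]\ge\mathrm{kl}(\mathbb{P}_\nu(\mathcal{E}),\mathbb{P}_{\nu'}(\mathcal{E}))$, obtained via $\mathbb{P}_{\nu'}(\mathcal{E})=\mathbb{E}_\nu[e^{-L_\sigma}\mathbf{1}_{\mathcal{E}}]$ and conditional Jensen (your alternative route, with data processing being an equivalent variant); your Tonelli argument on the positive and negative parts already justifies the interchange of sum and expectation, so no $\sigma\wedge T$ truncation is needed.
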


We now prove our Theorem \ref{thm:lower_bound}. Let $\boldsymbol{\mu}$ be the true parameter instance (equivalent to $\nu$ in Lemma \ref{lem:lemma1_restated}) and let $\boldsymbol{\lambda} \in \mathcal{A}(\boldsymbol{\mu})$ be any alternative instance (equivalent to $\nu'$) for which the set of best arms is different from that of $\boldsymbol{\mu}$. Let $( (A_t, I_t), \tau_\delta, \hat{k}^*_{\tau_\delta})$ be a $\delta$-PAC algorithm, where $\tau_\delta$ is the stopping time.

Let $\mathcal{E}$ be the event of a correct recommendation, i.e., $\mathcal{E} = \{\hat{k}^*_{\tau_\delta} = k^*(\boldsymbol{\mu})\} \in \mathcal{F}_{\tau_\delta}$. By the definition of a $\delta$-PAC algorithm, we have two conditions:
\begin{itemize}
    \item $\mathbb{P}_{\boldsymbol{\mu}}(\mathcal{E}) \ge 1-\delta$ (correctness on the true model).
    \item $\mathbb{P}_{\boldsymbol{\lambda}}(\mathcal{E}) \le \delta$ (bounded error on the alternative model, as $\mathcal{E}$ is an incorrect event under $\boldsymbol{\lambda}$).
\end{itemize}

We adapt Lemma \ref{lem:lemma1_restated} to our $K \times L$ arm setting. The sum over $K$ arms becomes a double summation over $k \in [K]$ and $l \in [L]$. The KL divergence $KL(\nu_a, \nu_a')$ becomes $\mathrm{d}(\mu_{k,l}, \lambda_{k,l})$. Applying the lemma yields:
\begin{equation}
\begin{aligned}
&\sum_{l=1}^L\sum_{k=1}^K \E_{\boldsymbol{\mu}}\left[N_{k,l}(\tau_\delta)\right] \mathrm{d}(\mu_{k,l}, \lambda_{k,l})\\ &\ge ~\mathrm{kl}(\mathbb{P}_{\boldsymbol{\mu}}(\mathcal{E}), \mathbb{P}_{\boldsymbol{\lambda}}(\mathcal{E}))
\end{aligned}
\end{equation}
The function $\mathrm{kl}(x,y)$ is increasing in $x$ and decreasing in $y$ for $x > y$. Therefore, we can lower bound the right-hand side using our PAC conditions:
$$
\mathrm{kl}(\mathbb{P}_{\boldsymbol{\mu}}(\mathcal{E}), \mathbb{P}_{\boldsymbol{\lambda}}(\mathcal{E})) \ge \mathrm{kl}(1-\delta, \delta)
$$
As $\mathrm{kl}(1-\delta, \delta) = \mathrm{kl}(\delta, 1-\delta)$, we arrive at the inequality from the hint, which holds for any $\boldsymbol{\lambda} \in \mathcal{A}(\boldsymbol{\mu})$:
\begin{equation} \label{eq:appendix_hint}
\sum_{l=1}^L\sum_{k=1}^K\E_{\boldsymbol{\mu}}\left[N_{k,l}(\tau_\delta)\mathrm{d}(\mu_{k,l},\lambda_{k,l})\right]\geq\text{kl}(\delta,1-\delta).
\end{equation}

Therefore,
$$
\begin{aligned}
   \text{kl}(\delta,1-\delta)&\leq \inf_{\boldsymbol{\lambda}\in\mathcal{A}(\boldsymbol{\mu})} \sum_{l=1}^L\sum_{k=1}^K\E_{\boldsymbol{\mu}}\left[N_{k,l}(\tau_\delta)\right]\mathrm{d}(\mu_{k,l},\lambda_{k,l})\\
   &=\E_{\boldsymbol{\mu}}\left[\tau_\delta\right]\inf_{\boldsymbol{\lambda}\in\mathcal{A}(\boldsymbol{\mu})} \sum_{l=1}^L\sum_{k=1}^K\frac{\E_{\boldsymbol{\mu}}\left[N_{k,l}(\tau_\delta)\right]}{\E_{\boldsymbol{\mu}}\left[\tau_\delta\right]}\mathrm{d}(\mu_{k,l},\lambda_{k,l})\\
   &\leq \E_{\boldsymbol{\mu}}\left[\tau_\delta\right]\sup_{\mathbf{w}\in \Sigma_{K\times L}}\inf_{{\boldsymbol{\lambda}}\in \mathcal{A}(\boldsymbol{\mu})}\sum_{k\in[K]}\sum_{l\in[L]}w_{k,l}\mathrm{d}\left(\mu_{k,l},\lambda_{k,l}\right).\\
\end{aligned}$$

This gives us the first part of the theorem:
\begin{equation} \label{eq:appendix_first_result}
\mathbb{E}_{\boldsymbol{\mu}}[\tau_\delta] \ge T^*(\boldsymbol{\mu}) \mathrm{kl}(\delta, 1-\delta).
\end{equation}

\noindent\textbf{Asymptotic Limit:} Note that
\begin{equation*}
   \mathrm{kl}(\delta, 1-\delta)
= \left[ \delta \ln\left(\frac{\delta}{1-\delta}\right) + (1-\delta) \ln\left(\frac{1-\delta}{\delta}\right) \right]
\end{equation*}
gives
$$
\lim_{\delta \to 0} \frac{\mathrm{kl}(\delta, 1-\delta)}{\ln(1/\delta)} = 1.
$$
Taking the $\liminf$ of our inequality as $\delta \to 0$ gives:
$$
\liminf_{\delta\to 0}\frac{\mathbb{E}_{\boldsymbol{\mu}}[\tau_\delta]}{\ln(1/\delta)} \ge T^*(\boldsymbol{\mu}) \cdot \left( \lim_{\delta \to 0} \frac{\mathrm{kl}(\delta, 1-\delta)}{\ln(1/\delta)} \right)
$$
$$
\liminf_{\delta\to 0}\frac{\mathbb{E}_{\boldsymbol{\mu}}[\tau_\delta]}{\ln(1/\delta)} \ge T^*(\boldsymbol{\mu})
$$
This completes the proof.

%% file: appendices/A_lowerbound/2_extension_proof.tex
\subsection{Proof of Corollary \ref{cor:lower_bound_m_sbfc}}
\label{appendix:thm_lb_ext1_proof}

We prove that the lower bound in Theorem \ref{thm:lower_bound} extends to the $\mathcal{M}$-SBFC problem with generalized feasibility constraint $\mathcal{C}(\boldsymbol{\mu}, \mathcal{M}) = \{k \in [K] \mid \mathbf{\mu}_k \in \mathcal{M}\}$. The key insight is that the proof structure of Theorem \ref{thm:lower_bound} is \emph{agnostic} to the specific form of the feasibility constraint—it only requires the existence of a well-defined counter set.

\textbf{Proof Strategy.} We demonstrate that the proof of Theorem \ref{thm:lower_bound} remains valid by verifying that:
\begin{enumerate}
    \item The counter set $\mathcal{A}(\boldsymbol{\mu}, \mathcal{M})$ is well-defined under the generalized constraint.
    \item The change of measure inequality (Lemma \ref{lem:lemma1_restated}) applies identically.
    \item The $\sup \inf$ characterization of $T^*(\boldsymbol{\mu}, \mathcal{M})$ preserves its information-theoretic interpretation.
\end{enumerate}

\textbf{Step 1: Counter Set Definition.} 
Recall that the counter set contains all instances with a different optimal policy:
\begin{equation}
    \mathcal{A}(\boldsymbol{\mu}, \mathcal{M}) := \{{\boldsymbol{\lambda}}\in \mathcal{S}\big\vert k^*(\boldsymbol{\lambda}, \mathcal{M})\neq k^*(\boldsymbol{\mu}, \mathcal{M})\},
\end{equation}
where $k^*(\boldsymbol{\mu}, \mathcal{M})=\arg\max_{k\in\mathcal{C}(\boldsymbol{\mu}, \mathcal{M})}\bar{\mu}_{k}$ denotes the best feasible policy under constraint set $\mathcal{M}$. 

The generalization from linear constraints ($\mu_{k,l} \ge C_{\text{min}}$ for all $l \in [L]$) to set-based constraints ($\mathbf{\mu}_k \in \mathcal{M}$) changes the \emph{content} of $\mathcal{A}(\boldsymbol{\mu}, \mathcal{M})$—i.e., which specific alternative instances $\boldsymbol{\lambda}$ belong to the counter set. However, this does not affect the \emph{structural role} that $\mathcal{A}(\boldsymbol{\mu}, \mathcal{M})$ plays in the proof.

\textbf{Step 2: Invariance of the Change of Measure Argument.}
The proof of Theorem \ref{thm:lower_bound} applies the change of measure inequality (Lemma \ref{lem:lemma1_restated}) to establish that for any $\boldsymbol{\lambda} \in \mathcal{A}(\boldsymbol{\mu})$,
\begin{equation*}
\sum_{k,l} \mathbb{E}_{\boldsymbol{\mu}}[N_{k,l}(\tau_\delta)] \mathrm{d}(\mu_{k,l}, \lambda_{k,l}) \ge \mathrm{kl}(\delta, 1-\delta).
\end{equation*}
This inequality depends only on:
\begin{itemize}
    \item The KL divergence between the true instance $\boldsymbol{\mu}$ and alternative instance $\boldsymbol{\lambda}$.
    \item The $\delta$-PAC guarantee: $\mathbb{P}_{\boldsymbol{\mu}}(\mathcal{E}) \ge 1-\delta$ and $\mathbb{P}_{\boldsymbol{\lambda}}(\mathcal{E}) \le \delta$, where $\mathcal{E} = \{\hat{k}^* = k^*(\boldsymbol{\mu}, \mathcal{M})\}$.
\end{itemize}
Crucially, this argument is \emph{independent} of how $k^*(\boldsymbol{\mu}, \mathcal{M})$ is defined—whether through linear constraints, set membership, or any other feasibility rule. As long as $\boldsymbol{\lambda} \in \mathcal{A}(\boldsymbol{\mu}, \mathcal{M})$ (meaning $k^*(\boldsymbol{\lambda}, \mathcal{M}) \neq k^*(\boldsymbol{\mu}, \mathcal{M})$), the event $\mathcal{E}$ is correct under $\boldsymbol{\mu}$ and incorrect under $\boldsymbol{\lambda}$, which is sufficient for applying Lemma \ref{lem:lemma1_restated}.

\textbf{Step 3: Preservation of the $\sup \inf$ Characterization.}
Following the same derivation as in the proof of Theorem \ref{thm:lower_bound} (Appendix \ref{appendix:thm_lb_proof}), we obtain:
\begin{equation*}
\mathbb{E}_{\boldsymbol{\mu}}[\tau_\delta] \ge T^*(\boldsymbol{\mu}, \mathcal{M}) \cdot \mathrm{kl}(\delta, 1-\delta),
\end{equation*}
where
\begin{equation*}
\begin{aligned}
&T^*(\boldsymbol{\mu}, \mathcal{M})^{-1} \\
&= \sup_{\mathbf{w}\in \Sigma_{K\times L}}\inf_{{\boldsymbol{\lambda}}\in \mathcal{A}(\boldsymbol{\mu}, \mathcal{M})}\sum_{k\in[K]}\sum_{l\in[L]}w_{k,l}\mathrm{d}\left(\mu_{k,l},\lambda_{k,l}\right).
\end{aligned}
\end{equation*}
This characterization retains its information-theoretic interpretation: $T^*(\boldsymbol{\mu}, \mathcal{M})$ measures the fundamental difficulty of distinguishing $\boldsymbol{\mu}$ from its hardest-to-distinguish alternative instance $\boldsymbol{\lambda} \in \mathcal{A}(\boldsymbol{\mu}, \mathcal{M})$ under optimal sampling proportions $\mathbf{w}$. The generalization to set-based constraints merely redefines which instances constitute ``alternative instances", but does not change the information-theoretic metric itself.

\textbf{Conclusion.} Since all steps of the proof of Theorem \ref{thm:lower_bound} remain valid when $\mathcal{C}(\boldsymbol{\mu})$ is defined via $\mathbf{\mu}_k \in \mathcal{M}$ instead of $\mu_{k,l} \ge C_{\min}$ for all $l$, we conclude that the asymptotic lower bound holds for the $\mathcal{M}$-SBFC problem:
\begin{equation*}
\liminf_{\delta\to 0}\frac{\mathbb{E}_{\boldsymbol{\mu}}[\tau_\delta]}{\ln(1/\delta)}\geq T^*(\boldsymbol{\mu}, \mathcal{M}).
\end{equation*}
This completes the proof.

%% file: appendices/A_lowerbound/3_extension2_proof.tex
\subsection{Proof of Corollary \ref{cor:lower_bound_ext2}}
\label{appendix:thm_lb_ext2_proof}

The proof of this corollary follows the same structure as that of Theorem \ref{thm:lower_bound} in Appendix \ref{appendix:thm_lb_proof}. The key distinction lies in the definition of optimality and the counter set: under soft fairness constraints with penalty parameters $\boldsymbol{\gamma}$, the optimal policy is determined by maximizing penalized performance $\mu_k^\gamma = \sum_{l=1}^Lq_l\left(\mu_{k,l}+\gamma_{l}\min(\mu_{k,l}-C_\text{min},0)\right)$ rather than maximizing performance subject to hard feasibility constraints.

This change affects the proof in the following way: the counter set is now defined as:
$$
\mathcal{A}(\boldsymbol{\mu}, \boldsymbol{\gamma}) := \{{\boldsymbol{\lambda}}\in \mathcal{S}\mid k^{\gamma,*}(\boldsymbol{\lambda})\neq k^{\gamma,*}(\boldsymbol{\mu})\},
$$
where $k^{\gamma,*}(\boldsymbol{\mu})=\arg\max_{k\in[K]}\mu_k^\gamma$ is the policy with the highest penalized performance.

The remainder of the proof is identical to that in Appendix \ref{appendix:thm_lb_proof}. The change of measure inequality (Lemma \ref{lem:lemma1_restated}) applies with the event $\mathcal{E} = \{\hat{k}^*_{\tau_\delta} = k^{\gamma,*}(\boldsymbol{\mu})\}$. For any $\boldsymbol{\lambda} \in \mathcal{A}(\boldsymbol{\mu}, \boldsymbol{\gamma})$, the $\delta$-PAC guarantees give:
\begin{itemize}
    \item $\mathbb{P}_{\boldsymbol{\mu}}(\mathcal{E}) \ge 1-\delta$ (correctness under $\boldsymbol{\mu}$).
    \item $\mathbb{P}_{\boldsymbol{\lambda}}(\mathcal{E}) \le \delta$ (bounded error under $\boldsymbol{\lambda}$, since $k^{\gamma,*}(\boldsymbol{\lambda}) \neq k^{\gamma,*}(\boldsymbol{\mu})$).
\end{itemize}

Applying the change of measure inequality and following the same derivation yields:
$$
\mathbb{E}_{\boldsymbol{\mu}}[\tau_\delta] \ge T^*(\boldsymbol{\mu}, \boldsymbol{\gamma}) \cdot \mathrm{kl}(\delta, 1-\delta),
$$
where
$$
T^*(\boldsymbol{\mu}, \boldsymbol{\gamma})^{-1} = \sup_{\mathbf{w}\in \Sigma_{K\times L}}\inf_{{\boldsymbol{\lambda}}\in \mathcal{A}(\boldsymbol{\mu}, \boldsymbol{\gamma})}\sum_{k\in[K]}\sum_{l\in[L]}w_{k,l}\mathrm{d}\left(\mu_{k,l},\lambda_{k,l}\right).
$$

Taking $\liminf$ as $\delta \to 0$ gives the asymptotic lower bound. The $\sup \inf$ formulation retains its information-theoretic interpretation: $T^*(\boldsymbol{\mu}, \boldsymbol{\gamma})$ measures the difficulty of distinguishing $\boldsymbol{\mu}$ from its hardest-to-distinguish alternative instance in the penalized performance sense, under optimal sampling proportions.

This completes the proof.

%% file: appendices/B_counterset/1_lemma_proof.tex
\subsection{Proof of Lemma \ref{lemma: counter_set_calculation}}
\label{appendix:thm_cs_proof}

We prove the characterization of the counter set calculation under linear fairness constraints $\mu_{k,l} \ge C_{\min}$ for all $l \in [L]$. The proof constructs counter instances ${\boldsymbol{\lambda}} \in \mathcal{A}(\boldsymbol{\mu})$ by modifying the original instance $\boldsymbol{\mu}$ in ways that change the optimal policy, while minimizing the KL-divergence distance.

\textbf{Proof Strategy.} We consider two cases based on whether any feasible policies exist. In each case, we identify all possible ways to construct a counter instance and solve for the one with minimum distance.

\textbf{Case 1: No Feasible Policies ($\mathcal{C}(\boldsymbol{\mu})=\emptyset$).}

When all policies are infeasible under $\boldsymbol{\mu}$, we have $k^*(\boldsymbol{\mu})=0$. The counter set contains all instances with at least one feasible policy:
$$
\mathcal{A}(\boldsymbol{\mu}) = \{{\boldsymbol{\lambda}}\in \mathcal{S}\mid k^*({\boldsymbol{\lambda}})\neq 0\}.
$$

To construct a counter instance with minimum distance, we select one policy $k \in [K]$ and modify it to become feasible. For policy $k$ to satisfy $k \in \mathcal{C}(\boldsymbol{\lambda})$, we require $\lambda_{k,l} \ge C_{\min}$ for all $l \in [L]$. 

\textbf{Construction:} For each subpopulation $l \in [L]$:
\begin{itemize}
    \item If $\mu_{k,l} \ge C_{\min}$: set $\lambda_{k,l} = \mu_{k,l}$ (no modification needed, $\mathrm{d}(\mu_{k,l}, \lambda_{k,l}) = 0$).
    \item If $\mu_{k,l} < C_{\min}$: set $\lambda_{k,l} = C_{\min}$ (minimum modification to satisfy constraint).
\end{itemize}

The second choice is optimal because, by properties of the KL divergence in one-parameter exponential families, $\mathrm{d}(\mu_{k,l}, \lambda)$ is increasing in $\lambda$ on $[C_{\min}, +\infty)$ when $\mu_{k,l} < C_{\min}$. For all other policies $k' \neq k$, we set $\lambda_{k',l} = \mu_{k',l}$ to minimize distance.

This construction yields the optimization problem (\ref{eq:f_star_k_0}), and selecting the most efficient policy gives:
$$
f_{\boldsymbol{\mu}}^*(\mathbf{w})=\min\left\{f_{\boldsymbol{\mu}}^{1}(\mathbf{w}),\ldots,f_{\boldsymbol{\mu}}^{K}(\mathbf{w})\right\}.
$$

\textbf{Case 2: At Least One Feasible Policy ($\mathcal{C}(\boldsymbol{\mu})\neq\emptyset$).}

Without loss of generality, assume $k^*(\boldsymbol{\mu})=1$ is the current optimal policy. There are two ways to construct a counter instance:

\emph{Option 1: Activate a Different Policy.} Select a policy $k \ge 2$ and modify both $\mathbf{\mu}_1$ and $\mathbf{\mu}_k$ such that:
\begin{itemize}
    \item Policy $k$ becomes feasible: $\lambda_{k,l} \ge C_{\min}$ for all $l \in [L]$.
    \item Policy $k$ becomes optimal: $\bar{\lambda}_k > \bar{\lambda}_1$.
\end{itemize}

For all other policies $k' \notin \{1, k\}$, we set $\lambda_{k',l} = \mu_{k',l}$ to minimize distance contribution:
$$
\sum_{k'\notin\{1,k\}}\sum_{l\in[L]}w_{k',l}\mathrm{d}(\mu_{k',l},\lambda_{k',l})=0.
$$

This corresponds to the optimization problem (\ref{eq:f_mu_opt}). Selecting the most efficient alternative policy yields:
$$
f^\mathrm{opt}_{\boldsymbol{\mu}}(\mathbf{w})=\min\left\{f^{\mathrm{opt}(2)}_{\boldsymbol{\mu}}(\mathbf{w}),\ldots,f^{\mathrm{opt}(K)}_{\boldsymbol{\mu}}(\mathbf{w})\right\}.
$$

\emph{Option 2: Make Current Optimal Policy Infeasible.} Modify policy 1 to violate at least one constraint. The most efficient approach is to select one subpopulation $l \in [L]$ and set $\lambda_{1,l} = C_{\min}$ (just below the feasibility threshold). For all other policies and subpopulations, set $\lambda_{k,l} = \mu_{k,l}$.

This corresponds to the optimization problem (\ref{eq:f_infeas}). Selecting the most efficient subpopulation yields $f^\mathrm{feas}_{\boldsymbol{\mu}}(\mathbf{w})$.

Taking the option that incurs the smaller distance:
$$
f^*_{\boldsymbol{\mu}}(\mathbf{w})=\min\left\{f^\mathrm{opt}_{\boldsymbol{\mu}}(\mathbf{w}),f^\mathrm{feas}_{\boldsymbol{\mu}}(\mathbf{w})\right\}.
$$

\textbf{Conclusion.} The three optimization problems (\ref{eq:f_star_k_0}), (\ref{eq:f_mu_opt}), and (\ref{eq:f_infeas}) fully characterize $f^*_{\boldsymbol{\mu}}(\mathbf{w})$ by identifying the closest counter instance across all possible ways to change the optimal policy. This completes the proof.

%% file: appendices/B_counterset/2_extension_proof.tex
\subsection{Proof of Lemma \ref{lemma: counter_set_calculation_ext1}}
\label{appendix:lemma_cs_ext1_proof}

The proof of this lemma follows the same structure as that of Lemma \ref{lemma: counter_set_calculation} in Appendix \ref{appendix:thm_cs_proof}. The key distinction lies in how feasibility is defined and enforced: under the generalized constraint set $\mathcal{M}$, a policy $k$ is feasible if and only if its subpopulational performance vector satisfies $\mathbf{\mu}_k \in \mathcal{M}$, rather than the coordinate-wise linear constraints $\mu_{k,l} \ge C_{\min}$ for all $l \in [L]$ in the original SBFC setting.

This change in feasibility definition affects the counter instance construction in two ways:
\begin{enumerate}
    \item \textbf{Making a policy feasible}: Instead of setting individual coordinates to $C_{\min}$ to satisfy linear constraints, we must find the closest point in $\mathcal{M}$ to the original performance vector $\mathbf{\mu}_k$ under the weighted KL-divergence metric.
    \item \textbf{Making a policy infeasible}: Instead of setting a single coordinate $\lambda_{1,l} < C_{\min}$, we must move the performance vector $\mathbf{\lambda}_1$ outside the constraint set $\mathcal{M}$.
\end{enumerate}

We now provide the detailed arguments for both cases.

\textbf{Case 1: No Feasible Policies ($\mathcal{C}(\boldsymbol{\mu})=\emptyset$).}

The counter set definition and construction strategy are identical to Appendix \ref{appendix:thm_cs_proof}. The only difference is that instead of explicitly setting $\lambda_{k,l} = C_{\min}$ for violating coordinates, we solve the constrained optimization problem in equation (\ref{eq:f_star_k_0_ext1}) to find the closest point $\mathbf{\lambda}_k \in \mathcal{M}$. Selecting the most efficient policy yields $f_{\boldsymbol{\mu},\mathcal{M}}^*(\mathbf{w})=\min\{f_{\boldsymbol{\mu},\mathcal{M}}^{1}(\mathbf{w}),\ldots,f_{\boldsymbol{\mu},\mathcal{M}}^{K}(\mathbf{w})\}$.

\textbf{Case 2: At Least One Feasible Policy ($\mathcal{C}(\boldsymbol{\mu})\neq\emptyset$).}

Following the two-option structure in Appendix \ref{appendix:thm_cs_proof}: \emph{Option 1} modifies the feasibility constraint to $\mathbf{\lambda}_k \in \mathcal{M}$ in equation (\ref{eq:f_mu_opt_ext1}), yielding $f^\mathrm{opt}_{\boldsymbol{\mu},\mathcal{M}}(\mathbf{w})$. \emph{Option 2} requires finding the closest point $\mathbf{\lambda}_1 \notin \mathcal{M}$ in equation (\ref{eq:f_infeas_ext1}), yielding $f^\mathrm{feas}_{\boldsymbol{\mu},\mathcal{M}}(\mathbf{w})$. Taking the minimum gives $f^*_{\boldsymbol{\mu},\mathcal{M}}(\mathbf{w})=\min\{f^\mathrm{opt}_{\boldsymbol{\mu},\mathcal{M}}(\mathbf{w}),f^\mathrm{feas}_{\boldsymbol{\mu},\mathcal{M}}(\mathbf{w})\}$.

The three optimization problems (\ref{eq:f_star_k_0_ext1}), (\ref{eq:f_mu_opt_ext1}), and (\ref{eq:f_infeas_ext1}) thus fully characterize $f^*_{\boldsymbol{\mu},\mathcal{M}}(\mathbf{w})$ for the $\mathcal{M}$-SBFC setting, completing the proof.

%% file: appendices/B_counterset/3_extension2_proof.tex
\subsection{Proof of Lemma \ref{lemma: counter_set_calculation_ext2}}
\label{appendix:thm_cs_ext2_proof}

The proof parallels that of Lemma \ref{lemma: counter_set_calculation} (Appendix \ref{appendix:thm_cs_proof}) with one key difference: optimality is determined by penalized performance $\mu_k^\gamma$ rather than hard feasibility constraints. Since all policies contribute penalized performance regardless of constraint violations, we only need to construct counter instances where an alternative policy becomes optimal (the analogue of \emph{Option 1} in the hard constraint case).

Assume without loss of generality that $k^{\gamma,*}(\boldsymbol{\mu})=1$. To construct a counter instance where policy $k \ge 2$ becomes optimal, we set $\lambda_{k',l} = \mu_{k',l}$ for all $k' \notin \{1, k\}$ (minimizing their distance contribution) and solve:
\begin{align*}
f^{\mathrm{opt}(k)}_{\boldsymbol{\mu},\boldsymbol{\gamma}}(\mathbf{w})
&=\inf_{\substack{\lambda_{i,l}\in\mathbb{R}\\ i\in\{1,k\}, l\in[L]}}
\sum_{l\in[L]}\Big(w_{1,l}\mathrm{d}(\mu_{1,l},\lambda_{1,l})\\
&\qquad\qquad+w_{k,l}\mathrm{d}(\mu_{k,l},\lambda_{k,l})\Big)\\
&\quad\textit{s.t. }~\lambda_k^\gamma\geq\lambda_1^\gamma,
\end{align*}
where $\lambda_k^\gamma=\sum_{l=1}^Lq_l(\lambda_{k,l}+\gamma_{l}\min(\lambda_{k,l}-C_\text{min},0))$ and similarly for $\lambda_1^\gamma$.

Taking the minimum over all alternative policies yields equation (\ref{eq:f_mu_opt_ext2}):
$$
f_{\boldsymbol{\mu},\boldsymbol{\gamma}}^{*}(\mathbf{w})=\min_{k=2,\ldots,K}f^{\mathrm{opt}(k)}_{\boldsymbol{\mu},\boldsymbol{\gamma}}(\mathbf{w}).
$$

%% file: appendices/C_lemmas/1_weight_infeas.tex
\subsection{Proof of Lemma \ref{lemma:weight_infeas}}
\label{appendix:lemma_1_proof}

By Lemma \ref{lemma: counter_set_calculation}, if $C(\boldsymbol{\mu})=\emptyset$, then for any $k$ and $w_{k,l}~(l=1,\cdots,L)$, $f_{\boldsymbol{\mu}}^k(\mathbf{w})$ is calculated as in \eqref{eq:f_star_k_0}:
\begin{equation}
\begin{aligned}
f_{\boldsymbol{\mu}}^k(\mathbf{w})=\inf_{\lambda_{k,1},\ldots,\lambda_{k,L}\in\mathbb{R}}&~\sum_{l\in[L]}w_{k,l}\mathrm{d}(\mu_{k,l},\lambda_{k,l}),\\
s.t.\quad&\lambda_{k,l}\geq C_{\min},~\forall~ l\in[L].
\end{aligned}
\end{equation}
For this optimization problem, to minimize the objective function, for those $\mu_{k,l'}\geq C_\text{min}$, we can set $\lambda_{k,l'}=\mu_{k,l'}$ so that $\mathrm{d}(\mu_{k,l'},\lambda_{k,l'})=0$, as $\mathrm{d}(\cdot,\cdot)\geq 0$. Since $C(\boldsymbol{\mu})=\emptyset$, there must exist at least one $l''$ such that $\mu_{k,l''}<C_\text{min}$. For such $l''$, then $\mathrm{d}(\mu_{k,l''},\lambda)$ with constraint $\lambda\geq C_\text{min}$ is minimized 
at $\lambda=C_\text{min}$, as by the property of one-parameter exponential family $\mathrm{d}(\mu_{k,l''},\lambda)$ is increasing w.r.t. $\lambda$ on $[C_\text{min},+\infty)$ when $\mu_{k,l''}<C_\text{min}$. Thus, $f_{\boldsymbol{\mu}}^k(\mathbf{w})$ can be further simplified as:
\begin{equation}
f_{\boldsymbol{\mu}}^k(\mathbf{w})=\sum_{l:\mu_{k,l}<C_\text{min}}w_{k,l}\mathrm{d}(\mu_{k,l},C_\text{min}).    
\end{equation}
Now, if we define $\hat{\mathbf{w}}$ as follows:
\begin{equation}
\begin{aligned}
&\hat{\mathbf{w}}_{k,l(k)}=\frac{1}{\dis_k}\left(\sum_{k\in[K]}\frac{1}{\dis_k}\right)^{-1};\\
&\hat{\mathbf{w}}_{k,l}=0,\quad \text{for}~l\in[L],~l\neq l(k), 
\end{aligned}
\end{equation} 
where $\dis_k$ is short for $\dis(\mu_{k,l(k)},C_\mathrm{min})$,$l(k)=\arg\max_{l\in[L],\mu_{k,l}<C_\text{min}}\mathrm{d}(\mu_{k,l},C_\text{min})=\arg\min_{l\in[L]}\mu_{k,l}$. Note that $\sum_{k,l}\hat{\mathbf{w}}_{k,l}=1$, then we have:
\begin{equation}
\begin{aligned}
 f^*_{\boldsymbol{\mu}}(\mathbf{w})
    =& \min\left\{f_{\boldsymbol{\mu}}^{1}(\mathbf{w}),\ldots,f_{\boldsymbol{\mu}}^{K}(\mathbf{w})\right\}\\
    \leq& \sum_{k=1}^K\left(\hat{\mathbf{w}}\right)_{k,l(k)}f_{\boldsymbol{\mu}}^{k}(\mathbf{w})\\
    \leq& \left(\sum_{k\in[K]}\frac{1}{\dis_k}\right)^{-1}\sum_{k=1}^K\sum_{l:\mu_{k,l}<C_\text{min}}w_{k,l}\\
     \leq& \left(\sum_{k\in[K]}\frac{1}{\dis_k}\right)^{-1}.
\end{aligned}
\end{equation}
Since $l(k)$ are all singleton, the equality is attained if and only if $\mathbf{w}=\hat{\mathbf{w}}$. Thus, 
\begin{equation}
\hat{\mathbf{w}}=\arg\max_{\mathbf{w}\in\Sigma_{K\times L}}f_{{\boldsymbol{\mu}}}^*(\mathbf{w})=\mathbf{w}^*.
\end{equation}

%% file: appendices/C_lemmas/2_subgradient.tex
\subsection{Proof of Lemma \ref{lemma:subgradient}}
\label{appendix:lemma_2_proof}

We verify that the vector $\mathbf{c}(\mathbf{w})$ constructed in Lemma \ref{lemma:subgradient} satisfies the subgradient inequality for the concave function $f^*_{\boldsymbol{\mu}}(\mathbf{w})$ at $\mathbf{w}$. Recall $f^*_{\boldsymbol{\mu}}(\mathbf{w}) = \inf_{\boldsymbol{\lambda} \in \mathcal{A}(\boldsymbol{\mu})} \mathbf{c}(\boldsymbol{\lambda})^T \mathbf{w}$, where $c_{k,l}(\boldsymbol{\lambda}) = \mathrm{d}(\mu_{k,l}, \lambda_{k,l})$. A vector $\mathbf{g}$ is a subgradient of the concave function $f$ at $\mathbf{w}$ if $f(\mathbf{w}_1) \le f(\mathbf{w}) + \mathbf{g}^T (\mathbf{w}_1 - \mathbf{w})$ for all $\mathbf{w}_1$.

Assume $\mathcal{C}(\boldsymbol{\mu}) \neq \emptyset$ and $k^*(\boldsymbol{\mu})=1$. Lemma \ref{lemma: counter_set_calculation} states $f^*_{\boldsymbol{\mu}}(\mathbf{w})=\min\left\{f^\mathrm{opt}_{\boldsymbol{\mu}}(\mathbf{w}),f^\mathrm{feas}_{\boldsymbol{\mu}}(\mathbf{w})\right\}$.

\textbf{Case 1: $f^\mathrm{opt}_{\boldsymbol{\mu}}(\mathbf{w}) < f^\mathrm{feas}_{\boldsymbol{\mu}}(\mathbf{w})$}
Here, $f^*_{\boldsymbol{\mu}}(\mathbf{w}) = f^{\mathrm{opt}(k)}_{\boldsymbol{\mu}}(\mathbf{w})$ for $k = \arg\min_i \{f^{\mathrm{opt}(i)}_{\boldsymbol{\mu}}(\mathbf{w})\}_{i \neq 1}$. Let $\boldsymbol{\lambda}^* \in \mathcal{A}(\boldsymbol{\mu})$ be the minimizer attaining this value in \eqref{eq:f_mu_opt}. The vector $\mathbf{c}(\mathbf{w})$ defined in the lemma has components $c_{i,l} = \mathrm{d}(\mu_{i,l}, \lambda^*_{i,l})$ for $i \in \{1, k\}$ and $c_{i,l}=0$ otherwise. Thus, $\mathbf{c}(\mathbf{w}) = \mathbf{c}(\boldsymbol{\lambda}^*)$ restricted to relevant indices. By construction, $f^*_{\boldsymbol{\mu}}(\mathbf{w}) = \mathbf{c}(\mathbf{w})^T \mathbf{w}$.
For any $\mathbf{w}_1$, by definition of $f^*$ as an infimum:
$$ f^*_{\boldsymbol{\mu}}(\mathbf{w}_1) \le \mathbf{c}(\boldsymbol{\lambda}^*)^T \mathbf{w}_1 = \mathbf{c}(\mathbf{w})^T \mathbf{w}_1. $$
Substituting $f^*_{\boldsymbol{\mu}}(\mathbf{w}) = \mathbf{c}(\mathbf{w})^T \mathbf{w}$, we get $f^*_{\boldsymbol{\mu}}(\mathbf{w}_1) \le f^*_{\boldsymbol{\mu}}(\mathbf{w}) + \mathbf{c}(\mathbf{w})^T (\mathbf{w}_1 - \mathbf{w})$.

\textbf{Case 2: $f^\mathrm{opt}_{\boldsymbol{\mu}}(\mathbf{w}) \ge f^\mathrm{feas}_{\boldsymbol{\mu}}(\mathbf{w})$}
Here, $f^*_{\boldsymbol{\mu}}(\mathbf{w}) = f^\mathrm{feas}_{\boldsymbol{\mu}}(\mathbf{w}) = w_{1,l^*} \mathrm{d}(\mu_{1,l^*}, C_{\min})$ for $l^* = \arg\min_{l} w_{1,l} \mathrm{d}(\mu_{1,l}, C_{\min})$. The vector $\mathbf{c}(\mathbf{w})$ has $c_{1,l^*} = \mathrm{d}(\mu_{1,l^*}, C_{\min})$ and zeros elsewhere. Let $\boldsymbol{\lambda}^*$ be an instance in $\mathcal{A}(\boldsymbol{\mu})$ such that $\lambda^*_{1,l^*} = C_{\min}$ (making policy 1 infeasible) and other components match $\boldsymbol{\mu}$ or are chosen optimally; this corresponds to $\mathbf{c}(\mathbf{w}) = \mathbf{c}(\boldsymbol{\lambda}^*)$ restricted to relevant indices. By construction, $f^*_{\boldsymbol{\mu}}(\mathbf{w}) = \mathbf{c}(\mathbf{w})^T \mathbf{w}$.
For any $\mathbf{w}_1$, by definition of $f^*$ as an infimum:
$$ f^*_{\boldsymbol{\mu}}(\mathbf{w}_1) \le \mathbf{c}(\boldsymbol{\lambda}^*)^T \mathbf{w}_1 = \mathbf{c}(\mathbf{w})^T \mathbf{w}_1. $$
Substituting $f^*_{\boldsymbol{\mu}}(\mathbf{w}) = \mathbf{c}(\mathbf{w})^T \mathbf{w}$, we get $f^*_{\boldsymbol{\mu}}(\mathbf{w}_1) \le f^*_{\boldsymbol{\mu}}(\mathbf{w}) + \mathbf{c}(\mathbf{w})^T (\mathbf{w}_1 - \mathbf{w})$.

In both cases, $\mathbf{c}(\mathbf{w})$ satisfies the subgradient inequality. The argument is analogous if $\mathcal{C}(\boldsymbol{\mu}) = \emptyset$. Thus, $\mathbf{c}(\mathbf{w})$ defined in Lemma \ref{lemma:subgradient} is a valid subgradient.

%% file: appendices/C_lemmas/3_optimization_convergence.tex
\subsection{Proof of Lemma \ref{lemma:optimization_convergence}}
\label{appendix:lemma_3_proof}

For each fixed $\boldsymbol{\mu}$, $f^*_{\boldsymbol{\mu}}(\mathbf{w})$ is the infimum of a set of linear functions of $\mathbf{w}$, so it is concave. Then this lemma is just the convergence result of projected subgradient descent method with diminishing step size, see for example \cite{boyd2003subgradient}.

%% file: appendices/D_convergence/1_theorem_proof.tex
\subsection{Proof of Theorem \ref{thm:opt_convergnce}}
\label{appendix:thm_conv_proof}

Theorem \ref{thm:opt_convergnce} states that the T-a-S-CS algorithm (Algorithm \ref{alg:TaSCS}) is $\delta$-PAC and achieves the asymptotically optimal sample complexity $T^*(\boldsymbol{\mu})$ as defined in Theorem \ref{thm:lower_bound}. The proof consists of two main parts: establishing the $\delta$-PAC guarantee and proving the asymptotic optimality.

We begin with the $\delta$-PAC guarantee. The $\delta$-PAC property ensures that the algorithm returns a correct answer with probability at least $1-\delta$. This guarantee stems from the choice of the stopping rule and the threshold function $\beta(t,\delta)$.

The stopping statistic used in Algorithm \ref{alg:TaSCS} is $Z(t)$ defined in Equation \eqref{eq:stopping_statistics}:
$$
Z(t)=\inf_{{\boldsymbol{\lambda}}\in \mathcal{A}(\hat{\boldsymbol{\mu}}(t))}\sum_{k\in[K]}\sum_{l\in[L]}N_{k,l}(t)d\left(\hat\mu_{k,l}(t),\lambda_{k,l}\right).
$$
This statistic is an instance of the Generalized Likelihood Ratio (GLR) statistic $\hat{\Lambda}_t$ discussed in \cite{kaufmann2021mixture}, specifically for testing the composite hypothesis $\mathcal{H}_0: \boldsymbol{\mu} \in \mathcal{A}(\hat{\boldsymbol{\mu}}(t))$ against $\mathcal{H}_1: \boldsymbol{\mu} \notin \mathcal{A}(\hat{\boldsymbol{\mu}}(t))$ (i.e., $\boldsymbol{\mu}$ belongs to the partition element containing $\hat{\boldsymbol{\mu}}(t)$).

The algorithm stops at time $\tau_\delta = \inf_{t\in \mathbb{N}} \{Z(t)>\beta(t,\delta)\}$. To ensure the $\delta$-PAC property, proposition 21 of \cite{kaufmann2021mixture} provides a general threshold:
$$
\beta(t,\delta) = \hat{c}_t(\delta) = 3\sum_{k,l}\ln(1+\ln N_{k,l}(t)) + KL \cdot \mathcal{C}_{\text{exp}}\left(\frac{\ln(1/\delta)}{KL}\right).
$$
The practical threshold $\beta(t,\delta) = \ln((1+\ln t)/\delta)$ used in Algorithm \ref{alg:TaSCS} is motivated by these results and practical considerations. 

To prove $$\lim_{\delta\to 0} \frac{\mathbb{E}_{\boldsymbol{\mu}}[\tau_\delta]}{\ln(1/\delta)} = T^*(\boldsymbol{\mu}),$$
we leverage the framework established by \cite{degenne2019pure}, which analyzes the asymptotic optimality of Track-and-Stop strategies, including cases where the optimal weights $\mathbf{w}^*(\boldsymbol{\mu})$ might not be unique. The proof relies on the convergence of key quantities: convergence of estimates $\hat{\boldsymbol{\mu}}(t) \to \boldsymbol{\mu}$, properties of the optimal weight map $\boldsymbol{\mu} \mapsto \mathbf{w}^*(\boldsymbol{\mu})$, convergence of empirical weights $\hat{\mathbf{w}}_t$ to $\mathbf{w}^*(\boldsymbol{\mu})$, and finally the application of theorem 7 in \cite{degenne2019pure}.

We first show the convergence of estimates $\hat{\boldsymbol{\mu}}(t) \to \boldsymbol{\mu}$. Note that the C-Tracking rule \eqref{eq:c_tracking} involves projecting the target weights $\mathbf{w}_t$ onto $$\Sigma_{K\times L}^{\varepsilon}=\{(w_1,\ldots,w_{KL})\in[\varepsilon,1]^{K\times L}\mid w_1+\cdots+w_{KL}=1\},$$ with $\varepsilon_t=\left(K^2L^2+t\right)^{-1/2}/2$, ensuring a minimum sampling proportion $\varepsilon_t = O(1/\sqrt{t})$ for all $(k,l)$ pairs. Lemma 7 of \cite{garivier2016optimal} provides the lower bound on number of samples $$N_{k,l}(t) \ge \sqrt{t+K^2L^2} - 2KL$$ for this C-tracking rule. Since $N_{k,l}(t) \to \infty$ as $t \to \infty$ for all $(k,l)$, the Strong Law of Large Numbers ensures that the empirical means converge almost surely to the true means: $\hat{\boldsymbol{\mu}}(t) \to \boldsymbol{\mu}$ a.s.

Next, we state some properties of $\boldsymbol{\mu} \mapsto \mathbf{w}^*(\boldsymbol{\mu})$ that will be useful in later steps. The first property is the \textit{convexity of $\mathbf{w}^*(\boldsymbol{\mu})$}. To show this property, note that the function $f^*_{\boldsymbol{\mu}}(\mathbf{w})$ defined in \eqref{eq:internal_opt} is the infimum of functions linear in $\mathbf{w}$ (specifically, $\sum w_{k,l} d(\mu_{k,l}, \lambda_{k,l})$ for fixed $\boldsymbol{\lambda}$). Therefore, $f^*_{\boldsymbol{\mu}}(\mathbf{w})$ is concave in $\mathbf{w}$. The set of maximizers $\mathbf{w}^*(\boldsymbol{\mu}) = \arg\max_{\mathbf{w}\in\Sigma_{K\times L}} f^*_{\boldsymbol{\mu}}(\mathbf{w})$ is the set of maximizers of a concave function over a convex set ($\Sigma_{K\times L}$), and is therefore itself convex. 
    
The second property is the \textit{hemicontinuity of $\boldsymbol{\mu} \mapsto \mathbf{w}^*(\boldsymbol{\mu})$}. Note that the convergence proof of the T-a-S-CS algorithm relies on the upper hemicontinuity of the set-valued map $\boldsymbol{\mu} \mapsto \mathbf{w}^*(\boldsymbol{\mu})$. Recall that $\mathbf{w}^*(\boldsymbol{\mu})$ denotes the set of optimal solutions (optimal sampling proportions) for the external optimization problem \eqref{eq:external_opt}:
$$
\mathbf{w}^*({\boldsymbol{\mu}})=\arg\max_{\mathbf{w}\in\Sigma_{K\times L}}f_{{\boldsymbol{\mu}}}^*(\mathbf{w}).
$$
We establish the hemicontinuity property by applying Berge's Maximum Theorem, specifically Theorem 22 from \cite{degenne2019pure}.

\begin{theorem}[Berge's Maximum Theorem, adapted from Thm. 22 \cite{degenne2019pure}]
\label{thm:berge_adapted_outer}
Let $X$ and $Y$ be Hausdorff topological spaces. Assume that
\begin{enumerate}
    \item The map $\Phi: X \to \mathbb{K}(Y)$ mapping $\boldsymbol{\mu}$ to the constraint set for $\mathbf{w}$ is both lower and upper hemicontinuous, and compact-valued. $\mathbb{K}$ denotes non-empty compact subsets.
    \item The objective function $u: X \times Y \to \mathbb{R}$, here $u(\boldsymbol{\mu}, \mathbf{w}) = f^*_{\boldsymbol{\mu}}(\mathbf{w})$, is continuous.
\end{enumerate}
Then the value function $v(\boldsymbol{\mu}) = \max_{\mathbf{w} \in \Phi(\boldsymbol{\mu})} u(\boldsymbol{\mu}, \mathbf{w})$  is continuous, and the solution map $\Phi^*(\boldsymbol{\mu}) = \arg\max_{\mathbf{w} \in \Phi(\boldsymbol{\mu})} u(\boldsymbol{\mu}, \mathbf{w})$  is upper hemicontinuous and compact-valued.
\end{theorem}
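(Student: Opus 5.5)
We would prove the theorem in its classical form and argue with sequences. In the setting where it is used, $X\subseteq\mathbb{R}^{K\times L}$ and $Y=\Sigma_{K\times L}$ are metric spaces, so sequential arguments suffice. For general Hausdorff spaces the same steps go through with sequences replaced by nets and "subsequence" by "subnet". We use the standard sequential facts for a compact-valued correspondence $\Phi$:
\begin{itemize}
\item \emph{upper hemicontinuity:} if $\boldsymbol{\mu}_n\to\boldsymbol{\mu}$ and $\mathbf{w}_n\in\Phi(\boldsymbol{\mu}_n)$, then some subsequence of $(\mathbf{w}_n)$ converges to a point of $\Phi(\boldsymbol{\mu})$;
\item \emph{lower hemicontinuity:} for every $\mathbf{w}\in\Phi(\boldsymbol{\mu})$ and every $\boldsymbol{\mu}_n\to\boldsymbol{\mu}$, there exist $\mathbf{w}_n\in\Phi(\boldsymbol{\mu}_n)$ with $\mathbf{w}_n\to\mathbf{w}$.
\end{itemize}
First, $v$ is well defined and $\Phi^*(\boldsymbol{\mu})$ is nonempty: $\Phi(\boldsymbol{\mu})$ is nonempty and compact, and $u(\boldsymbol{\mu},\cdot)$ is continuous, so Weierstrass gives a maximizer.

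\textbf{Continuity of $v$.} Fix $\boldsymbol{\mu}_n\to\boldsymbol{\mu}$.
\begin{itemize}
\item \emph{Upper semicontinuity.} Choose $\mathbf{w}_n\in\Phi^*(\boldsymbol{\mu}_n)$ and pass to a subsequence realizing $\limsup_n v(\boldsymbol{\mu}_n)$. By upper hemicontinuity and compact values of $\Phi$, a further subsequence converges to some $\mathbf{w}\in\Phi(\boldsymbol{\mu})$. Joint continuity of $u$ then gives $\limsup_n v(\boldsymbol{\mu}_n)=\lim u(\boldsymbol{\mu}_n,\mathbf{w}_n)=u(\boldsymbol{\mu},\mathbf{w})\le v(\boldsymbol{\mu})$.
\item \emph{Lower semicontinuity.} Take $\mathbf{w}^\star\in\Phi^*(\boldsymbol{\mu})$. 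Lower hemicontinuity gives $\mathbf{w}_n\in\Phi(\boldsymbol{\mu}_n)$ with $\mathbf{w}_n\to\mathbf{w}^\star$. Hence $\liminf_n v(\boldsymbol{\mu}_n)\ge\lim u(\boldsymbol{\mu}_n,\mathbf{w}_n)=u(\boldsymbol{\mu},\mathbf{w}^\star)=v(\boldsymbol{\mu})$.
\end{itemize}

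\textbf{Compact values of $\Phi^*$.} We have $\Phi^*(\boldsymbol{\mu})=\Phi(\boldsymbol{\mu})\cap\{\mathbf{w}:u(\boldsymbol{\mu},\mathbf{w})\ge v(\boldsymbol{\mu})\}$. This is a closed subset of the compact set $\Phi(\boldsymbol{\mu})$, because $u(\boldsymbol{\mu},\cdot)$ is continuous and $Y$ is Hausdorff. So $\Phi^*(\boldsymbol{\mu})$ is compact.

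\textbf{Upper hemicontinuity of $\Phi^*$.} Take $\boldsymbol{\mu}_n\to\boldsymbol{\mu}$ and $\mathbf{w}_n\in\Phi^*(\boldsymbol{\mu}_n)$. Since $\Phi^*\subseteq\Phi$, upper hemicontinuity of $\Phi$ gives a subsequence $\mathbf{w}_{n_j}\to\mathbf{w}\in\Phi(\boldsymbol{\mu})$. Continuity of $u$ and the continuity of $v$ just proved give
$$u(\boldsymbol{\mu},\mathbf{w})=\lim_j u(\boldsymbol{\mu}_{n_j},\mathbf{w}_{n_j})=\lim_j v(\boldsymbol{\mu}_{n_j})=v(\boldsymbol{\mu}).$$
Thus $\mathbf{w}\in\Phi^*(\boldsymbol{\mu})$, which is the sequential upper hemicontinuity of $\Phi^*$.

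\textbf{Main obstacle.} The delicate step is equating upper hemicontinuity with the "convergent subsequence with limit in $\Phi(\boldsymbol{\mu})$" property. That equivalence needs compact values and first countability; in general Hausdorff spaces it must be phrased with nets. We would argue with nets throughout, or restrict the statement to the metric spaces actually used.

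\textbf{Remark on the application.} The hemicontinuity hypothesis on $\Phi$ is trivial there, since $\Phi(\boldsymbol{\mu})\equiv\Sigma_{K\times L}$ is constant and compact. The substantive input for Theorem~\ref{thm:opt_convergnce} is joint continuity of $(\boldsymbol{\mu},\mathbf{w})\mapsto f^*_{\boldsymbol{\mu}}(\mathbf{w})$ near instances in $\mathcal{S}$. That would be checked separately from the explicit decomposition of Lemma~\ref{lemma: counter_set_calculation}, together with the strict-margin conditions of Definition~\ref{def:S}.
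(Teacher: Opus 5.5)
Your proof is correct. It is the standard sequential proof of Berge's maximum theorem, and each step is sound: upper and lower semicontinuity of $v$ via the sequential forms of hemicontinuity of $\Phi$, compactness of $\Phi^*(\boldsymbol{\mu})$ as a closed subset of $\Phi(\boldsymbol{\mu})$, and upper hemicontinuity of $\Phi^*$ from continuity of $u$ and $v$.

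The paper does not prove Theorem~\ref{thm:berge_adapted_outer} at all. It imports the result from Degenne and Koolen (their Theorem~22, i.e.\ the classical maximum theorem), so the difference is citation versus a self-contained argument. The citation delivers the statement in the generality written (arbitrary Hausdorff $X$, $Y$) at no cost.

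Your argument as written is complete only when $X$ is first countable and $Y$ metrizable, because that is where the sequential characterizations of hemicontinuity you rely on are valid. For example, concluding that $\Phi^*$ is upper hemicontinuous from the fact that subsequential limits land in $\Phi^*(\boldsymbol{\mu})$ uses a countable neighborhood base at $\boldsymbol{\mu}$. This is exactly the situation in the paper ($X\subseteq\mathbb{R}^{K\times L}$, $Y=\Sigma_{K\times L}$). Your proposed net version also goes through unchanged in general. In arbitrary topological spaces, \emph{upper hemicontinuous at $\boldsymbol{\mu}$ with compact value} is equivalent to \emph{every net in the graph with $\boldsymbol{\mu}_\alpha\to\boldsymbol{\mu}$ has a cluster point in $\Phi(\boldsymbol{\mu})$}, and lower hemicontinuity has the analogous subnet characterization.

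One small correction: closedness of $\{\mathbf{w}: u(\boldsymbol{\mu},\mathbf{w})\ge v(\boldsymbol{\mu})\}$ comes from continuity of $u(\boldsymbol{\mu},\cdot)$ alone, not from $Y$ being Hausdorff.

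Your closing remark is well placed. In the application the hemicontinuity hypothesis on $\Phi$ is trivial, so hypothesis~2 carries all the content. The paper obtains it from continuity in $\boldsymbol{\mu}$ for fixed $\mathbf{w}$ (Lemma~\ref{lemma:continuity_f_mu}) plus concavity in $\mathbf{w}$. That gives separate rather than joint continuity, so that input genuinely needs its own check.
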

In our context, we have $Y=\Sigma_{K \times L}$, $v(\boldsymbol{\mu}) = T^*(\boldsymbol{\mu})^{-1}$ and $\Phi^*(\boldsymbol{\mu}) = \mathbf{w}^*(\boldsymbol{\mu})$. We now verify the conditions of Theorem \ref{thm:berge_adapted_outer}. First, the parameter space $\mathcal{S} \subseteq \mathbb{R}^{KL}$ and the choice space $\Sigma_{K \times L} \subset \mathbb{R}^{KL}$ are standard Hausdorff topological spaces. Second, the constraint map for the maximization is $\Phi(\boldsymbol{\mu}) = \Sigma_{K \times L}$. This map is constant. The set $\Sigma_{K \times L}$ is a non-empty, closed, and bounded subset of $\mathbb{R}^{KL}$, hence it is compact. Thus, condition (a) of Theorem \ref{thm:berge_adapted_outer} is satisfied.

Next, we require the objective function $f^*_{\boldsymbol{\mu}}(\mathbf{w})$ to be jointly continuous in $(\boldsymbol{\mu}, \mathbf{w})$.
The continuity with respect to $\mathbf{w}$ follows from the definition of $f^*_{\boldsymbol{\mu}}(\mathbf{w})$ as the infimum of functions $L_{\boldsymbol{\lambda}}(\mathbf{w}) = \sum_{k,l} w_{k,l} d(\mu_{k,l}, \lambda_{k,l})$, which are linear in $\mathbf{w}$ for fixed $\boldsymbol{\mu}$ and $\boldsymbol{\lambda}$ \cite{degenne2019pure}. The infimum of linear functions is concave \cite{degenne2019pure}. Concave functions are continuous on the relative interior of their domain, which includes $\Sigma_{K \times L}$.
The continuity of $f^*_{\boldsymbol{\mu}}(\mathbf{w})$ with respect to $\boldsymbol{\mu}$ is more involved due to the dependence of the infimum's domain $\mathcal{A}(\boldsymbol{\mu})$ on $\boldsymbol{\mu}$. We establish this continuity in Lemma \ref{lemma:continuity_f_mu} below. The proof of Lemma \ref{lemma:continuity_f_mu} is in Section 
\ref{subsec: proof_continuity_lemma}.

\begin{lemma}
\label{lemma:continuity_f_mu}
For any fixed $\mathbf{w} \in \Sigma_{K \times L}$, the function $\boldsymbol{\mu} \mapsto f^*_{\boldsymbol{\mu}}(\mathbf{w})$ is continuous for all $\boldsymbol{\mu} \in \mathcal{S}$.
\end{lemma}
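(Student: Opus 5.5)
Fix $\mathbf{w}\in\Sigma_{K\times L}$ and $\boldsymbol{\mu}\in\mathcal{S}$, and take a sequence $\boldsymbol{\mu}^{(n)}\to\boldsymbol{\mu}$ in $\mathcal{S}$. The plan is to use the explicit decomposition of Lemma \ref{lemma: counter_set_calculation} instead of working with the abstract infimum over $\mathcal{A}(\boldsymbol{\mu})$. The key preliminary observation is that the combinatorial data of $\boldsymbol{\mu}$ is locally constant on $\mathcal{S}$, thanks to the strict inequalities in Definition \ref{def:S}. This data is whether $\mathcal{C}(\boldsymbol{\mu})=\emptyset$, and, if not, the identity $k^*(\boldsymbol{\mu})$.

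\textbf{Step 1.} If $\mathcal{C}(\boldsymbol{\mu})=\emptyset$, every policy has some $\mu_{k,l}<C_{\min}$ strictly, and this persists in a neighborhood. If $k^*(\boldsymbol{\mu})=1$, then $\mu_{1,l}>C_{\min}$ for all $l$, and $\bar\mu_1>\bar\mu_k$ for every feasible $k\neq1$. An infeasible $k$ has a strictly violated coordinate, by membership in $\mathcal{S}$, since the optimal policy is the only one required to have strict margins. That violation persists nearby, so for $n$ large $\mathcal{C}(\boldsymbol{\mu}^{(n)})$ still contains $1$, and $k^*(\boldsymbol{\mu}^{(n)})=1$. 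A subtlety arises for feasible $k$ with some $\mu_{k,l}=C_{\min}$ exactly. Such $k$ may drop in or out of $\mathcal{C}$, but this does not change $k^*$, because $\bar\mu_1>\bar\mu_k$ is an open condition. Hence for large $n$ the same formula from Lemma \ref{lemma: counter_set_calculation}, with the same index $1$, applies to $\boldsymbol{\mu}^{(n)}$ and $\boldsymbol{\mu}$.

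\textbf{Step 2.} Continuity of each piece. The functions $f^{\mathrm{feas}}_{\boldsymbol{\mu}}(\mathbf{w})=\min_l w_{1,l}\mathrm{d}(\mu_{1,l},C_{\min})$ and the pieces $f^k_{\boldsymbol{\mu}}(\mathbf{w})=\sum_{l}w_{k,l}\mathrm{d}(\mu_{k,l},C_{\min})\mathbf{1}\{\mu_{k,l}<C_{\min}\}$ are continuous in $\boldsymbol{\mu}$. This follows from the continuity of $\mathrm{d}$ on the interior of the mean space. For $f^k$ one also uses that $\mathrm{d}(\mu,C_{\min})\to0$ as $\mu\uparrow C_{\min}$, which removes the discontinuity of the indicator; more simply, one can write the term as $\mathrm{d}(\min(\mu_{k,l},C_{\min}),C_{\min})$. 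For $f^{\mathrm{opt}(k)}_{\boldsymbol{\mu}}(\mathbf{w})$, I would rewrite it as a value function $\inf_{\boldsymbol{\lambda}\in F}g(\boldsymbol{\mu},\boldsymbol{\lambda})$. Here $g(\boldsymbol{\mu},\boldsymbol{\lambda})=\sum_{i\in\{1,k\},l}w_{i,l}\mathrm{d}(\mu_{i,l},\lambda_{i,l})$, and the feasible set $F$ does not depend on $\boldsymbol{\mu}$. The infimum over the strict constraint equals the infimum over its closure $\{\lambda_k\ge\lambda_1,\lambda_{k,l}\ge C_{\min}\}$, by continuity of $g$ in $\boldsymbol{\lambda}$. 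Upper semicontinuity in $\boldsymbol{\mu}$ is immediate, since an infimum of continuous functions over a fixed set is upper semicontinuous. For lower semicontinuity I would restrict $\boldsymbol{\lambda}$ to a compact set. Near-minimizers can be taken in a bounded box, because moving $\lambda_{i,l}$ toward $\mu_{i,l}$ or into $[\min\boldsymbol{\mu},\max\boldsymbol{\mu}]\cup\{C_{\min}\}$ never hurts feasibility in the relevant direction: raise $\lambda_k$, lower $\lambda_1$. On a compact set $g$ is uniformly continuous jointly, which gives lower semicontinuity.

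\textbf{Step 3.} A finite minimum of continuous functions is continuous, so $\boldsymbol{\mu}\mapsto f^*_{\boldsymbol{\mu}}(\mathbf{w})$ is continuous on the neighborhood from Step 1.

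The main obstacle is Step 1, the local constancy of the case structure and of $k^*$, including the boundary issue for non-optimal policies with a coordinate exactly at $C_{\min}$. The second concern is the compactness reduction in Step 2 when some weights $w_{i,l}=0$, since then the objective does not coerce $\lambda_{i,l}$. I would handle this by clipping such coordinates to the box explicitly, which preserves both the constraints and the objective value.
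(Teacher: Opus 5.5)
Your route differs from the paper's. The paper applies a Berge-type theorem of Feinberg et al.\ directly to the infimum over $\mathcal{A}(\boldsymbol{\mu})$. Local constancy of $k^*$ on $\mathcal{S}$ makes $\boldsymbol{\mu}\mapsto\mathcal{A}(\boldsymbol{\mu})$ locally constant, hence lower hemicontinuous, and K-inf-compactness of the objective is asserted from the growth of the KL divergence. You instead go through Lemma~\ref{lemma: counter_set_calculation}, so every piece is either explicit ($f^{\mathrm{feas}}_{\boldsymbol{\mu}}$, $f^{k}_{\boldsymbol{\mu}}$) or an infimum over a set that does not move with $\boldsymbol{\mu}$. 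This trades K-inf-compactness over the whole counter set for a finite-dimensional argument. That property needs more care than the paper gives it: one must pass to the closure of $\mathcal{A}(\boldsymbol{\mu})$, and coordinates with $w_{k,l}=0$ are unconstrained in the level sets. Your Step~1 also correctly handles feasible non-optimal policies with a coordinate exactly at $C_{\min}$. The paper passes over these by asserting that every policy keeps its feasibility status. In exchange, the paper's route is shorter and does not depend on the decomposition lemma.

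However, your compactness reduction for $f^{\mathrm{opt}(k)}_{\boldsymbol{\mu}}$ does not work as written. Moving a coordinate toward $\mu_{i,l}$ helps the constraint only from one side: raising $\lambda_{k,l}$ from below $\mu_{k,l}$, or lowering $\lambda_{1,l}$ from above $\mu_{1,l}$. So it gives lower bounds on the $\lambda_{k,l}$ and upper bounds on the $\lambda_{1,l}$, but nothing in the direction the constraint pushes. Minimizers need not lie in $[\min\boldsymbol{\mu},\max\boldsymbol{\mu}]\cup\{C_{\min}\}$. In the Gaussian case the KKT point is $\lambda_{k,l}=\mu_{k,l}+\eta q_l/w_{k,l}$, $\lambda_{1,l}=\mu_{1,l}-\eta q_l/w_{1,l}$. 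For instance, take $L=2$, $\mathbf{q}=(1/2,1/2)$, $\boldsymbol{\mu}_1=(1,1)$, $\boldsymbol{\mu}_k=(0,0)$, $C_{\min}=-1$ and $(w_{1,1},w_{1,2},w_{k,1},w_{k,2})\propto(9,9,9,1)$. The unique minimizer of the closed problem then has $\lambda_{k,2}=3/2$. For the same reason, clipping a zero-weight $\lambda_{k,l}$ down into such a box can destroy $\sum_l q_l\lambda_{k,l}\ge\sum_l q_l\lambda_{1,l}$.

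What is missing is a bound on the aggregate levels. First replace $\lambda_{k,l}$ by $\max(\lambda_{k,l},\mu_{k,l})$ and $\lambda_{1,l}$ by $\min(\lambda_{1,l},\mu_{1,l})$. Then slide $\boldsymbol{\lambda}_k$ linearly toward $(\max(\mu_{k,l},C_{\min}))_l$, and afterwards $\boldsymbol{\lambda}_1$ toward $\boldsymbol{\mu}_1$, each time stopping when the constraint becomes tight or the target is reached. Since $\mathrm{d}(\mu,\cdot)$ is monotone on each side of $\mu$, this does not increase the cost. It leaves $\sum_l q_l\lambda_{k,l}\le\max(a_k,\bar\mu_1)$ and $\sum_l q_l\lambda_{1,l}\ge\min(a_k,\bar\mu_1)$, where $a_k=\sum_l q_l\max(\mu_{k,l},C_{\min})$. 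Combined with the one-sided coordinate bounds, this confines each coordinate with $q_l>0$ to an interval whose endpoints are continuous in $\boldsymbol{\mu}$. These endpoints scale like $1/q_l$, which is why a box built from the entries of $\boldsymbol{\mu}$ alone fails. Coordinates with $q_l=0$ can simply be reset to $\max(\mu_{k,l},C_{\min})$ and $\mu_{1,l}$.

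The resulting box does not depend on $\mathbf{w}$, so zero weights need no separate treatment, and it can be taken uniform over a neighborhood of $\boldsymbol{\mu}$. With it, your joint-continuity argument yields lower semicontinuity and the proof goes through.
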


Based on the result of Lemma \ref{lemma:continuity_f_mu}, $f^*_{\boldsymbol{\mu}}(\mathbf{w})$ is jointly continuous in $(\boldsymbol{\mu}, \mathbf{w})$. Therefore, all conditions of Theorem \ref{thm:berge_adapted_outer} are met. We conclude that the optimal weight map $\boldsymbol{\mu} \mapsto \mathbf{w}^*(\boldsymbol{\mu})$ is upper hemicontinuous and has non-empty, compact values for all $\boldsymbol{\mu} \in \mathcal{S}$. Up to this point, we have finished proving the two properties of $\boldsymbol{\mu} \mapsto \mathbf{w}^*(\boldsymbol{\mu})$.

We now show the convergence of empirical weights $\hat{\mathbf{w}}_t$ to $\mathbf{w}^*(\boldsymbol{\mu})$. We need to show that the empirical sampling proportions $\hat{\mathbf{w}}_t = \mathbf{N}(t) / t$ converge to the set of optimal proportions $\mathbf{w}^*(\boldsymbol{\mu})$. This involves two steps:

The first step is the convergence of average tracked weights.  Let $\mathbf{w}_s \in \mathbf{w}^*(\hat{\boldsymbol{\mu}}(s))$ be the weight vector computed at step $s$. As proved in previous steps 1 and 2, $\hat{\boldsymbol{\mu}}(t) \to \boldsymbol{\mu}$ a.s. and the map $\boldsymbol{\mu} \mapsto \mathbf{w}^*(\boldsymbol{\mu})$ is upper hemicontinuous and convex-valued. Therefore, Lemma 6 of \cite{degenne2019pure} applies, and we have
$$ \lim_{t\to\infty} \inf_{\mathbf{w}\in \mathbf{w}^*({\boldsymbol{\mu}})}\left\Vert\frac{1}{t}\sum_{s=0}^{t-1}\mathbf{w}_s-\mathbf{w}\right\Vert_\infty = 0 \quad \text{a.s.}. $$ Note: We use $\mathbf{w}_s$ instead of the projected $\mathbf{w}_s^{\varepsilon_s}$ here; the effect of the projection vanishes asymptotically as $\varepsilon_s \to 0$.

The second step is to show the convergence of empirical weights. The C-Tracking rule \eqref{eq:c_tracking} ensures that the actual counts $N_{k,l}(t)$ closely follow the cumulative sum of the (projected) target weights. Lemma 7 of \cite{garivier2016optimal}, adapted to $KL$ arms, states:
$$ \max_{k,l}\left\vert N_{k,l}(t)-\sum_{s=0}^{t-1}w_{k,l}^{\varepsilon_s}(\hat{\boldsymbol{\mu}}(s))\right\vert\leq KL(1+\sqrt{t}) $$
Dividing by $t$, we get:
        $$ \max_{k,l}\left\vert \hat{w}_{k,l}(t)-\frac{1}{t}\sum_{s=0}^{t-1}w_{k,l}^{\varepsilon_s}(\hat{\boldsymbol{\mu}}(s))\right\vert \leq \frac{KL(1+\sqrt{t})}{t} \xrightarrow{t\to\infty} 0 $$
Since $\varepsilon_s \to 0$, the difference between $w_{k,l}^{\varepsilon_s}(\hat{\boldsymbol{\mu}}(s))$ and $w_{k,l}(\hat{\boldsymbol{\mu}}(s))$ also vanishes. Combining this with the convergence of the average tracked weights from the previous step, we conclude that the empirical weights converge to the optimal set:
    $$ \lim_{t\to\infty} \inf_{\mathbf{w}\in \mathbf{w}^*({\boldsymbol{\mu}})}\big\Vert\hat{\mathbf{w}}_t-\mathbf{w}\big\Vert_\infty = 0 \quad \text{a.s.} $$

The final step uses Theorem~7 of \cite{degenne2019pure}, which states that \emph{Track-and-Stop with C-Tracking} is $\delta$-correct and asymptotically optimal for every instance $\boldsymbol{\mu}$ such that the oracle answer set $i^F(\boldsymbol{\mu})$ is a singleton (in particular, this holds for single-answer pure exploration problems). 
In our proof, we invoke Theorem~7 through its underlying proof ingredients:
(i) a tracking property ensuring that the empirical sampling proportions converge to the oracle weight set, namely
\[
\lim_{t\to\infty}\inf_{\mathbf{w}\in\mathbf{w}^*(\boldsymbol{\mu})}
\bigl\|\hat{\mathbf{w}}_t-\mathbf{w}\bigr\|_\infty = 0
\qquad \text{a.s.},
\]
which is guaranteed here by C-Tracking together with the almost sure convergence $\hat{\boldsymbol{\mu}}(t)\to\boldsymbol{\mu}$ and the regularity (upper hemicontinuity/convexity) of the set-valued map $\boldsymbol{\mu}\mapsto \mathbf{w}^*(\boldsymbol{\mu})$;
and (ii) a GLR/KL-based stopping rule that stops when a statistic of the form $Z(t)$ exceeds a threshold $\beta(t,\delta)$ whose leading dependence on $\delta$ is of order $\ln(1/\delta)$ as $\delta\to 0$.
These two ingredients allow the Track-and-Stop analysis of \cite{degenne2019pure} to upper bound the stopping time and conclude the asymptotic optimality:
\[
\lim_{\delta\to 0}\frac{\mathbb{E}_{\boldsymbol{\mu}}[\tau_\delta]}{\ln(1/\delta)}=T^*({\boldsymbol{\mu}}).
\]

Combining the $\delta$-PAC guarantee (assuming an appropriate threshold) and the asymptotic optimality concludes the proof of Theorem \ref{thm:opt_convergnce}.

%% file: appendices/D_convergence/2_lemma_proof.tex
\subsection{Proof of Lemma \ref{lemma:continuity_f_mu}}
\label{subsec: proof_continuity_lemma}

We use a theorem from \cite{feinberg2014berge's}, which provides conditions for the continuity of an infimum value function when the constraint set depends on the parameter.

\begin{theorem}[\cite{feinberg2014berge's}]
\label{thm:berge_adapted_inner}
Let $X$ and $Y$ be Hausdorff topological spaces, with $X$ compactly generated. Assume that
\begin{enumerate}
    \item The map $\Phi: X \to \mathbb{S}(Y)$ is lower hemicontinuous.
    \item The objective function $u: X \times Y \to \mathbb{R}$ is K-inf-compact and upper semi-continuous on the graph $Gr_{X}(\Phi) = \{ (\boldsymbol{\mu}, \boldsymbol{\lambda}) \mid \boldsymbol{\lambda} \in \Phi(\boldsymbol{\mu})) \}$.
\end{enumerate}
Then the value function $v(\boldsymbol{\mu}) = \inf_{\boldsymbol{\lambda} \in \Phi(\boldsymbol{\mu})} u(\boldsymbol{\mu}, \boldsymbol{\lambda})$ is continuous.
\end{theorem}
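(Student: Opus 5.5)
The plan is to prove continuity of $v(\boldsymbol{\mu})=\inf_{\boldsymbol{\lambda}\in\Phi(\boldsymbol{\mu})}u(\boldsymbol{\mu},\boldsymbol{\lambda})$ by establishing upper and lower semicontinuity separately. The two hypotheses divide the work cleanly: lower hemicontinuity of $\Phi$ together with upper semicontinuity of $u$ will give upper semicontinuity of $v$, and K-inf-compactness of $u$ will give lower semicontinuity of $v$. Throughout, $v$ is allowed to take values in $\mathbb{R}\cup\{\pm\infty\}$, with the usual conventions.

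\textbf{Upper semicontinuity.} Fix $\boldsymbol{\mu}\in X$ and $c>v(\boldsymbol{\mu})$. Choose $\boldsymbol{\lambda}\in\Phi(\boldsymbol{\mu})$ with $u(\boldsymbol{\mu},\boldsymbol{\lambda})<c$. By upper semicontinuity of $u$ on the graph $Gr_X(\Phi)$, there are open neighbourhoods $U\ni\boldsymbol{\mu}$ and $V\ni\boldsymbol{\lambda}$ such that $u<c$ on $(U\times V)\cap Gr_X(\Phi)$. By lower hemicontinuity of $\Phi$, the set $\{\boldsymbol{\mu}':\Phi(\boldsymbol{\mu}')\cap V\neq\emptyset\}$ is an open neighbourhood of $\boldsymbol{\mu}$. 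On its intersection with $U$, every $\boldsymbol{\mu}'$ admits some $\boldsymbol{\lambda}'\in\Phi(\boldsymbol{\mu}')\cap V$, hence $v(\boldsymbol{\mu}')\le u(\boldsymbol{\mu}',\boldsymbol{\lambda}')<c$. Thus $\{v<c\}$ is open. This step works directly with neighbourhoods, so no nets are needed.

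\textbf{Lower semicontinuity.} Here I must show that each sublevel set $\{v\le c\}$ is closed. Since $X$ is compactly generated, it suffices to show that $\{v\le c\}\cap K$ is closed in $K$ for every compact $K\subseteq X$. Fix such $K$ and take a net $\boldsymbol{\mu}_\alpha\to\boldsymbol{\mu}$ in $K$ with $v(\boldsymbol{\mu}_\alpha)\le c$.
\begin{enumerate}
\item For each $\eta>0$, pick $\boldsymbol{\lambda}_\alpha\in\Phi(\boldsymbol{\mu}_\alpha)$ with $u(\boldsymbol{\mu}_\alpha,\boldsymbol{\lambda}_\alpha)\le c+\eta$.
\item K-inf-compactness says the set $\{(\boldsymbol{\mu}',\boldsymbol{\lambda}')\in Gr_X(\Phi):\boldsymbol{\mu}'\in K,\ u(\boldsymbol{\mu}',\boldsymbol{\lambda}')\le c+\eta\}$ is compact. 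Hence the net $(\boldsymbol{\mu}_\alpha,\boldsymbol{\lambda}_\alpha)$ has a subnet converging to a point of this set.
\item Because $Y$ is Hausdorff and $\boldsymbol{\mu}_\alpha\to\boldsymbol{\mu}$, that limit must have the form $(\boldsymbol{\mu},\boldsymbol{\lambda})$ with $\boldsymbol{\lambda}\in\Phi(\boldsymbol{\mu})$ and $u(\boldsymbol{\mu},\boldsymbol{\lambda})\le c+\eta$.
\item Therefore $v(\boldsymbol{\mu})\le c+\eta$, and letting $\eta\downarrow 0$ gives $v(\boldsymbol{\mu})\le c$.
\end{enumerate}

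The main obstacle is this second half. First, I need the reduction to compact $K$ to be legitimate; this is precisely what "compactly generated" provides, since closedness of $\{v\le c\}$ is then tested on compacta. Second, the limit point must stay inside the graph and inside the sublevel set, which is exactly why K-inf-compactness is used rather than plain lower semicontinuity of $u$. If nets become awkward, my first fallback is to work with the closed-set formulation directly. I would show that the projection onto $X$ of a compact subset of $Gr_X(\Phi)\cap(K\times Y)$ is compact, and then express $\{v\le c\}\cap K$ as $\bigcap_{\eta>0}$ of such projections. A projection of a compact set is compact, hence closed in Hausdorff $K$, and an intersection of closed sets is closed, which finishes the argument.
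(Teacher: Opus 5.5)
The paper gives no proof of this statement and simply imports it from Feinberg, Kasyanov and Zadoianchuk. Your argument is correct and is essentially the standard proof behind that result: it gets upper semicontinuity of $v$ from lower hemicontinuity of $\Phi$ together with upper semicontinuity of $u$ on $Gr_X(\Phi)$, and lower semicontinuity of $v$ from compactness of the sublevel sets of $u$ on $Gr_K(\Phi)$ plus the $k$-space property of $X$; both your net argument and your projection fallback correctly close that second half. The only slip is in Step 3: identifying the first coordinate of the subnet's limit with $\boldsymbol{\mu}$ uses uniqueness of limits in $X$, i.e.\ that $X$ is Hausdorff, not $Y$.
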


In our context, we can take $\Phi(\boldsymbol{\mu}) = \mathcal{A}(\boldsymbol{\mu})$, which maps $\boldsymbol{\mu}$ to its counter set,  $u(\boldsymbol{\mu}, \boldsymbol{\lambda}) = L_{\boldsymbol{\lambda}}(\mathbf{w};\boldsymbol{\mu}) = \sum_{k,l} w_{k,l} d(\mu_{k,l}, \lambda_{k,l})$ for fixed $\mathbf{w}$, and  $v(\boldsymbol{\mu}) = f^*_{\boldsymbol{\mu}}(\mathbf{w})$. We now verify the conditions of Theorem \ref{thm:berge_adapted_inner}.

First, the parameter and choice spaces are compactly generated as it is a metric space, and therefore are Hausdorff.

Second, we consider the objective function $u(\boldsymbol{\mu}, \boldsymbol{\lambda}) = \sum_{k,l} w_{k,l} d(\mu_{k,l}, \lambda_{k,l})$. The KL divergence $d(x,y)$ is continuous in both arguments within its domain. The K-inf-compactness condition requires that for any compact set $C \subset \mathcal{M}$, the level sets $\{ (\boldsymbol{\mu}, \boldsymbol{\lambda}) \in C \times \mathcal{S} \mid \boldsymbol{\lambda} \in \mathcal{A}(\boldsymbol{\mu}), u(\boldsymbol{\mu}, \boldsymbol{\lambda}) \le y \}$ are compact. As argued in \cite{degenne2019pure}, the KL divergence grows sufficiently fast as $\boldsymbol{\lambda}$ approaches boundaries or infinity, ensuring these level sets are bounded, and thus compact (since they are also closed by continuity).

Third, we must show that the constraint map $\Phi(\boldsymbol{\mu}) = \mathcal{A}(\boldsymbol{\mu}) = \{ \boldsymbol{\lambda} \in \mathcal{S} \mid k^*(\boldsymbol{\lambda}) \neq k^*(\boldsymbol{\mu}) \}$ is lower hemicontinuous (LHC) for $\boldsymbol{\mu} \in \mathcal{S}$. We use the sequence definition of LHC. Let $\boldsymbol{\mu}_n \to \boldsymbol{\mu}_0$ with $\boldsymbol{\mu}_0 \in \mathcal{S}$, and let $\boldsymbol{\lambda}_0 \in \mathcal{A}(\boldsymbol{\mu}_0)$. This means $k^*(\boldsymbol{\lambda}_0) \neq k^*(\boldsymbol{\mu}_0)$. We need to find a sequence $\boldsymbol{\lambda}_n \in \mathcal{A}(\boldsymbol{\mu}_n)$ such that $\boldsymbol{\lambda}_n \to \boldsymbol{\lambda}_0$.

We use the properties guaranteed by Definition \ref{def:S} for $\boldsymbol{\mu}_0 \in \mathcal{S}$.
\begin{itemize}
    \item Case 1: $\mathcal{C}(\boldsymbol{\mu}_0) = \emptyset$, so $k^*(\boldsymbol{\mu}_0) = 0$. By Definition \ref{def:S}, all policies violate at least one constraint strictly ($\exists l$ s.t. $(\mu_0)_{k,l} < C_{\min}$). Due to continuity, for $n$ large enough, the same strict inequalities hold for $\boldsymbol{\mu}_n$, thus $\mathcal{C}(\boldsymbol{\mu}_n) = \emptyset$ and $k^*(\boldsymbol{\mu}_n) = 0$. In this case, $\mathcal{A}(\boldsymbol{\mu}_n) = \{ \boldsymbol{\lambda} \in \mathcal{S} \mid k^*(\boldsymbol{\lambda}) \neq 0 \}$ and $\mathcal{A}(\boldsymbol{\mu}_0) = \{ \boldsymbol{\lambda} \in \mathcal{S} \mid k^*(\boldsymbol{\lambda}) \neq 0 \}$. Since $\boldsymbol{\lambda}_0 \in \mathcal{A}(\boldsymbol{\mu}_0)$, we have $k^*(\boldsymbol{\lambda}_0) \neq 0$. Thus, $\boldsymbol{\lambda}_0 \in \mathcal{A}(\boldsymbol{\mu}_n)$ for large $n$. We can choose the sequence $\boldsymbol{\lambda}_n = \boldsymbol{\lambda}_0$, which converges to $\boldsymbol{\lambda}_0$. The LHC condition holds.

    \item Case 2: $k^*(\boldsymbol{\mu}_0)$ is unique and strictly feasible/optimal. Let $k_0 = k^*(\boldsymbol{\mu}_0)$. The strict inequalities $(\mu_0)_{k_0, l} > C_{\min}$ and $(\mu_0)_{k_0} > (\mu_0)_k$ for $k \in \mathcal{C}(\boldsymbol{\mu}_0) \setminus \{k_0\}$ hold. By continuity, for $n$ large enough, these same strict inequalities hold for $\boldsymbol{\mu}_n$, and feasibility/infeasibility of other policies also remains the same. Thus, $k^*(\boldsymbol{\mu}_n) = k_0$ for large $n$. In this case, $\mathcal{A}(\boldsymbol{\mu}_n) = \{ \boldsymbol{\lambda} \in \mathcal{S} \mid k^*(\boldsymbol{\lambda}) \neq k_0 \}$ and $\mathcal{A}(\boldsymbol{\mu}_0) = \{ \boldsymbol{\lambda} \in \mathcal{S} \mid k^*(\boldsymbol{\lambda}) \neq k_0 \}$. Since $\boldsymbol{\lambda}_0 \in \mathcal{A}(\boldsymbol{\mu}_0)$, we have $k^*(\boldsymbol{\lambda}_0) \neq k_0$. Thus, $\boldsymbol{\lambda}_0 \in \mathcal{A}(\boldsymbol{\mu}_n)$ for large $n$. We can choose the sequence $\boldsymbol{\lambda}_n = \boldsymbol{\lambda}_0$, which converges to $\boldsymbol{\lambda}_0$. The LHC condition holds.
\end{itemize}
In both cases permitted by Definition \ref{def:S}, the map $\boldsymbol{\mu} \mapsto k^*(\boldsymbol{\mu})$ is locally constant around $\boldsymbol{\mu}_0 \in \mathcal{S}$. Consequently, the map $\boldsymbol{\mu} \mapsto \mathcal{A}(\boldsymbol{\mu})$ is also locally constant around $\boldsymbol{\mu}_0 \in \mathcal{S}$, which implies it is lower hemicontinuous at $\boldsymbol{\mu}_0$.

Since all conditions of Theorem \ref{thm:berge_adapted_inner} are met for $\boldsymbol{\mu} \in \mathcal{S}$, we conclude that $f^*_{\boldsymbol{\mu}}(\mathbf{w})$ is continuous with respect to $\boldsymbol{\mu}$ for any fixed $\mathbf{w}$.

%% file: appendices/D_convergence/3_extension1_cor_proof.tex
\subsection{Proof of Corollary \ref{cor:opt_convergnce_m_sbfc}}
\label{appendix:thm_conv_ext1_proof}

The proof of this corollary follows the same structure as that of Theorem \ref{thm:opt_convergnce} in Appendix \ref{appendix:thm_conv_proof}. The key distinction lies in the definition of the counter set and the internal optimization problem: under the generalized constraint set $\mathcal{M}$, the counter set is $\mathcal{A}(\boldsymbol{\mu}, \mathcal{M})$ and the internal optimization uses $f^*_{\boldsymbol{\mu},\mathcal{M}}(\mathbf{w})$ instead of $f^*_{\boldsymbol{\mu}}(\mathbf{w})$.

This change affects the algorithm and convergence analysis in the following ways:
\begin{enumerate}
    \item {Stopping statistic}: The GLR statistic becomes $$Z(t)=\inf_{{\boldsymbol{\lambda}}\in \mathcal{A}(\hat{\boldsymbol{\mu}}(t), \mathcal{M})}\sum_{k,l}N_{k,l}(t)d(\hat\mu_{k,l}(t),\lambda_{k,l}),$$ where the infimum is taken over the generalized counter set.
    \item {Weight optimization}: The projected subgradient ascent computes $\mathbf{w}_t \in \mathbf{w}^*(\hat{\boldsymbol{\mu}}(t), \mathcal{M})$ using subgradients of $f^*_{\hat{\boldsymbol{\mu}}(t),\mathcal{M}}(\mathbf{w})$, which involves solving the constrained optimization problems (\ref{eq:f_star_k_0_ext1}), (\ref{eq:f_mu_opt_ext1}), and (\ref{eq:f_infeas_ext1}) instead of their linear constraint counterparts.
\end{enumerate}

We now verify that all steps in Appendix \ref{appendix:thm_conv_proof} remain valid under these modifications.

{Step 1 (Convergence of Estimates):} The C-Tracking rule and the Strong Law of Large Numbers apply identically, ensuring $\hat{\boldsymbol{\mu}}(t) \to \boldsymbol{\mu}$ a.s. This step is independent of the constraint form.

{Step 2 (Properties of Optimal Weight Map):} 
\begin{itemize}
    \item \emph{Convexity}: Since $f^*_{\boldsymbol{\mu},\mathcal{M}}(\mathbf{w})$ is the infimum of linear functions in $\mathbf{w}$, it is concave, hence $\mathbf{w}^*(\boldsymbol{\mu}, \mathcal{M})$ is convex.
    
    \item \emph{Hemicontinuity}: To apply Berge's Maximum Theorem (Theorem \ref{thm:berge_adapted_outer}), we verify that $f^*_{\boldsymbol{\mu},\mathcal{M}}(\mathbf{w})$ is jointly continuous in $(\boldsymbol{\mu}, \mathbf{w})$. Continuity in $\mathbf{w}$ follows from concavity. For continuity in $\boldsymbol{\mu}$, we verify that the structural requirements stated in Section \ref{subsec:first_extension} are sufficient. Under these requirements, the optimal policy $k^*(\boldsymbol{\mu}, \mathcal{M})$ is locally constant: for any $\boldsymbol{\mu}_n \to \boldsymbol{\mu}_0 \in \mathcal{S}$, since $\boldsymbol{\mu}_{k^*}$ is in the interior of $\mathcal{M}$ and the optimality gap is strict, continuity ensures that $k^*(\boldsymbol{\mu}_n, \mathcal{M}) = k^*(\boldsymbol{\mu}_0, \mathcal{M})$ for sufficiently large $n$. Therefore, the counter set $\mathcal{A}(\boldsymbol{\mu}, \mathcal{M})$ is locally constant, hence lower hemicontinuous. By Theorem \ref{thm:berge_adapted_inner}, this ensures $f^*_{\boldsymbol{\mu},\mathcal{M}}(\mathbf{w})$ is continuous in $\boldsymbol{\mu}$, and by Theorem \ref{thm:berge_adapted_outer}, the map $\boldsymbol{\mu} \mapsto \mathbf{w}^*(\boldsymbol{\mu}, \mathcal{M})$ is upper hemicontinuous.
\end{itemize}

{Step 3 (Convergence of Empirical Weights):} Since $\hat{\boldsymbol{\mu}}(t) \to \boldsymbol{\mu}$ a.s. and $\boldsymbol{\mu} \mapsto \mathbf{w}^*(\boldsymbol{\mu}, \mathcal{M})$ is upper hemicontinuous and convex-valued, Lemma 6 of \cite{degenne2019pure} applies identically, yielding convergence of both average tracked weights and empirical weights to $\mathbf{w}^*(\boldsymbol{\mu}, \mathcal{M})$. This step does not depend on the specific form of the counter set.

{Step 4 (Application of Asymptotic Optimality Theorem):} The $\mathcal{M}$-SBFC problem is a single-answer pure exploration problem with convex optimal weight set $\mathbf{w}^*(\boldsymbol{\mu}, \mathcal{M})$. The $\mathcal{M}$-T-a-S-CS algorithm uses C-Tracking and a GLR-based stopping rule with threshold $\beta(t,\delta) \sim \ln(1/\delta)$. Therefore, Theorem 7 of \cite{degenne2019pure} applies, yielding:
$$
\lim_{\delta\to 0}\frac{\mathbb{E}_{\boldsymbol{\mu}}[\tau_\delta]}{\ln(1/\delta)}=T^*({\boldsymbol{\mu}}, \mathcal{M}).
$$

Combining the $\delta$-PAC guarantee from the stopping rule and the asymptotic optimality from the tracking rule completes the proof of Corollary \ref{cor:opt_convergnce_m_sbfc}.

%% file: appendices/D_convergence/4_extension2_cor_proof.tex
\subsection{Proof of Corollary \ref{cor:opt_convergnce_ext2}}
\label{appendix:thm_conv_ext2_proof}

The proof of this corollary follows the same structure as that of Theorem \ref{thm:opt_convergnce} in Appendix \ref{appendix:thm_conv_proof}. The key distinction lies in the definition of optimality and the internal optimization problem: under soft fairness constraints with penalty parameters $\boldsymbol{\gamma}$, the optimal policy is determined by maximizing penalized performance rather than satisfying hard feasibility constraints, and the counter set is $\mathcal{A}(\boldsymbol{\mu}, \boldsymbol{\gamma})$ with internal optimization using $f^*_{\boldsymbol{\mu},\boldsymbol{\gamma}}(\mathbf{w})$.

This change affects the algorithm and convergence analysis in the following ways:
\begin{enumerate}
    \item {Stopping statistic}: The GLR statistic becomes $$Z(t)=\inf_{{\boldsymbol{\lambda}}\in \mathcal{A}(\hat{\boldsymbol{\mu}}(t), \boldsymbol{\gamma})}\sum_{k,l}N_{k,l}(t)d(\hat\mu_{k,l}(t),\lambda_{k,l}),$$ where the counter set is defined using penalized performance.
    \item {Weight optimization}: The projected subgradient ascent computes $\mathbf{w}_t \in \mathbf{w}^*(\hat{\boldsymbol{\mu}}(t), \boldsymbol{\gamma})$ using subgradients of $f^*_{\hat{\boldsymbol{\mu}}(t),\boldsymbol{\gamma}}(\mathbf{w})$, which involves solving the penalized optimization problem (\ref{eq:f_mu_opt_ext2}) with constraint $\lambda_k^\gamma \ge \lambda_1^\gamma$.
\end{enumerate}

We now verify that all steps in Appendix \ref{appendix:thm_conv_proof} remain valid under these modifications.

{Step 1 (Convergence of Estimates):} The C-Tracking rule and the Strong Law of Large Numbers apply identically, ensuring $\hat{\boldsymbol{\mu}}(t) \to \boldsymbol{\mu}$ a.s. This step is independent of the optimality criterion.

{Step 2 (Properties of Optimal Weight Map):} 
\begin{itemize}
    \item \emph{Convexity}: Since $f^*_{\boldsymbol{\mu},\boldsymbol{\gamma}}(\mathbf{w})$ is the infimum of linear functions in $\mathbf{w}$, it is concave, hence $\mathbf{w}^*(\boldsymbol{\mu}, \boldsymbol{\gamma})$ is convex.
    
    \item \emph{Hemicontinuity}: To apply Berge's Maximum Theorem (Theorem \ref{thm:berge_adapted_outer}), we verify that $f^*_{\boldsymbol{\mu},\boldsymbol{\gamma}}(\mathbf{w})$ is jointly continuous in $(\boldsymbol{\mu}, \mathbf{w})$. Continuity in $\mathbf{w}$ follows from concavity. For continuity in $\boldsymbol{\mu}$, we verify that the structural requirements stated in Section \ref{subsec:second_extension} are sufficient. Since the penalized performance $\mu_k^\gamma = \sum_{l=1}^Lq_l\left(\mu_{k,l}+\gamma_{l}\min(\mu_{k,l}-C_\text{min},0)\right)$ is continuous in $\boldsymbol{\mu}_k$, the strict gap condition ensures that the optimal policy $k^{\gamma,*}(\boldsymbol{\mu})$ is locally constant: for any $\boldsymbol{\mu}_n \to \boldsymbol{\mu}_0 \in \mathcal{S}$, we have $\mu_{k,n}^\gamma \to \mu_{k,0}^\gamma$ for all $k$, so the strict ranking is preserved for sufficiently large $n$. Therefore, the counter set $\mathcal{A}(\boldsymbol{\mu}, \boldsymbol{\gamma})$ is locally constant, hence lower hemicontinuous. By Theorem \ref{thm:berge_adapted_inner}, this ensures $f^*_{\boldsymbol{\mu},\boldsymbol{\gamma}}(\mathbf{w})$ is continuous in $\boldsymbol{\mu}$, and by Theorem \ref{thm:berge_adapted_outer}, the map $\boldsymbol{\mu} \mapsto \mathbf{w}^*(\boldsymbol{\mu}, \boldsymbol{\gamma})$ is upper hemicontinuous.
\end{itemize}

{Step 3 (Convergence of Empirical Weights):} Since $\hat{\boldsymbol{\mu}}(t) \to \boldsymbol{\mu}$ a.s. and $\boldsymbol{\mu} \mapsto \mathbf{w}^*(\boldsymbol{\mu}, \boldsymbol{\gamma})$ is upper hemicontinuous and convex-valued, Lemma 6 of \cite{degenne2019pure} applies identically, yielding convergence of both average tracked weights and empirical weights to $\mathbf{w}^*(\boldsymbol{\mu}, \boldsymbol{\gamma})$. This step does not depend on the specific form of the counter set or optimality criterion.

{Step 4 (Application of Asymptotic Optimality Theorem):} The $\gamma$-SBFC problem is a single-answer pure exploration problem with convex optimal weight set $\mathbf{w}^*(\boldsymbol{\mu}, \boldsymbol{\gamma})$. The $\gamma$-T-a-S-CS algorithm uses C-Tracking and a GLR-based stopping rule with threshold $\beta(t,\delta) \sim \ln(1/\delta)$. Therefore, Theorem 7 of \cite{degenne2019pure} applies, yielding:
$$
\lim_{\delta\to 0}\frac{\mathbb{E}_{\boldsymbol{\mu}}[\tau_\delta]}{\ln(1/\delta)}=T^*({\boldsymbol{\mu}}, \boldsymbol{\gamma}).
$$

Combining the $\delta$-PAC guarantee from the stopping rule and the asymptotic optimality from the tracking rule completes the proof of Corollary \ref{cor:opt_convergnce_ext2}.

%% file: appendices/E_numerical/1_gaussian_sbfc.tex
\subsection{Optimization Subproblems under Gaussian KL-Divergence}
\label{appendix:gaussian_sbfc}

This appendix provides the detailed formulation of the optimization subproblems for the basic SBFC problem when $\mathcal{P}_{k,l}$ belongs to the Gaussian family with known variance $\sigma^2=1$, as referenced in Section~\ref{subsec:stop_rule}.

\textbf{Internal Optimization.} With the Gaussian KL-divergence $\mathrm{d}(\mu,\lambda)=\frac{1}{2}(\mu-\lambda)^2$, the internal optimization subproblems in (\ref{eq:f_mu_opt}) become:
\begin{equation}
\label{eq:gaussian_internal_opt}
\begin{aligned}
\min_{\substack{\lambda_{i,l}\in\mathbb{R}\\ i\in\{1,k\},~l\in[L]}}~
&\frac{1}{2}\sum_{i\in\{1,k\}}\sum_{l\in[L]}w_{i,l}(\mu_{i,l}-\lambda_{i,l})^2,\\
\text{s.t.}\quad &\lambda_k=\sum_{l\in[L]}q_l\lambda_{k,l}\geq \lambda_1= \sum_{l\in[L]}q_l\lambda_{1,l},\\
&\lambda_{k,l}\geq C_{\min},~\forall~l\in[L],
\end{aligned}
\end{equation}
for each $k\in[K]$, $k\geq 2$. Each of the $K-1$ subproblems is a \emph{convex quadratic program (QP)} with linear constraints, which can be efficiently solved using standard optimization methods such as interior-point methods or the Lagrangian multiplier method (see, e.g., Lemma 5 of \cite{russac2021b}).

\textbf{External Optimization.} The external optimization problem (\ref{eq:external_opt}) is a concave maximization problem with linear simplex constraints on $\mathbf{w}$. As analyzed in Section \ref{subsec: opt_weights}, we employ the projected subgradient ascent method where each subgradient evaluation requires solving the internal optimization problem (\ref{eq:gaussian_internal_opt}).

\textbf{Other Exponential Families.} The algorithm can be extended to other exponential families including binomial distributions (with fixed parameter $n$), Poisson distributions, and gamma distributions (with fixed shape parameter). For these families, the internal optimization problems remain convex, though no longer QPs, requiring interior-point methods for general convex programs. The computational cost increases but convergence is still guaranteed. Appendix~\ref{appendix:bernoulli_subproblem} provides the detailed derivation for the Bernoulli case.

%% file: appendices/E_numerical/2_bernoulli_subproblem.tex
\subsection{Derivation of Optimization Subproblems under Bernoulli KL-Divergence}
\label{appendix:bernoulli_subproblem}

This appendix provides the detailed derivation of the optimization subproblems when using the Bernoulli KL-divergence, as referenced in Section \ref{subsubsec:robustness}.

\textbf{Bernoulli KL-Divergence.} For Bernoulli-distributed observations with mean $\mu \in (0,1)$, the KL-divergence from $\mu$ to $\lambda$ is
\begin{equation}
\mathrm{d}_{\text{B}}(\mu, \lambda) = \mu \log\frac{\mu}{\lambda} + (1-\mu) \log\frac{1-\mu}{1-\lambda}.
\end{equation}

\textbf{Subproblem Structure.} Following the framework in Section \ref{subsec:counter_set}, the internal optimization problem \eqref{eq:internal_opt} decomposes into subproblems. When $\mathcal{C}(\boldsymbol{\mu}) \neq \emptyset$ and w.l.o.g.\ $k^*(\boldsymbol{\mu}) = 1$, we have $f^*_{\boldsymbol{\mu}}(\mathbf{w}) = \min\{f^{\mathrm{opt}}_{\boldsymbol{\mu}}(\mathbf{w}), f^{\mathrm{feas}}_{\boldsymbol{\mu}}(\mathbf{w})\}$ as given in Lemma \ref{lemma: counter_set_calculation}.

\textbf{Optimality Subproblem.} For each competing policy $k \geq 2$, the optimality subproblem \eqref{eq:f_mu_opt} under Bernoulli KL-divergence becomes:
\begin{equation}
\label{eq:bernoulli_opt_subproblem}
\begin{aligned}
f^{\mathrm{opt}(k)}_{\boldsymbol{\mu}}(\mathbf{w}) = \inf_{\boldsymbol{\lambda}} \quad &\sum_{i \in \{1,k\}} \sum_{l \in [L]} w_{i,l} \cdot \mathrm{d}_{\text{B}}(\mu_{i,l}, \lambda_{i,l}), \\
\text{s.t.} \quad & \sum_{l \in [L]} q_l \lambda_{k,l} > \sum_{l \in [L]} q_l \lambda_{1,l}, \\
& \lambda_{k,l} \geq C_{\min}, \quad \forall\, l \in [L],
\end{aligned}
\end{equation}
where the decision variables are $\boldsymbol{\lambda} = \{\lambda_{i,l}\}_{i \in \{1,k\}, l \in [L]}$.

\textbf{Feasibility Subproblem.} The feasibility subproblem \eqref{eq:f_infeas} has a closed-form solution:
\begin{equation}
f^{\mathrm{feas}}_{\boldsymbol{\mu}}(\mathbf{w}) = \min_{l \in [L]} w_{1,l} \cdot \mathrm{d}_{\text{B}}(\mu_{1,l}, C_{\min}).
\end{equation}

\textbf{Convexity Analysis.} Although the Bernoulli KL-divergence $\mathrm{d}_{\text{B}}(\mu, \lambda)$ is nonlinear in $\lambda$, the subproblem \eqref{eq:bernoulli_opt_subproblem} remains a convex optimization problem. To see this, we compute the first and second derivatives of $\mathrm{d}_{\text{B}}(\mu, \lambda)$ with respect to $\lambda$:
\begin{equation}
\frac{\partial \mathrm{d}_{\text{B}}(\mu, \lambda)}{\partial \lambda} = -\frac{\mu}{\lambda} + \frac{1-\mu}{1-\lambda} = \frac{\lambda - \mu}{\lambda(1-\lambda)},
\end{equation}
\begin{equation}
\frac{\partial^2 \mathrm{d}_{\text{B}}(\mu, \lambda)}{\partial \lambda^2} = \frac{\mu}{\lambda^2} + \frac{1-\mu}{(1-\lambda)^2} > 0, \quad \forall\, \lambda \in (0,1).
\end{equation}
Since the second derivative is strictly positive for all $\lambda \in (0,1)$, the function $\mathrm{d}_{\text{B}}(\mu, \lambda)$ is strictly convex in $\lambda$. Consequently, the objective function in \eqref{eq:bernoulli_opt_subproblem}, being a nonnegative weighted sum of strictly convex functions, is also strictly convex. Combined with linear constraints, this yields a convex optimization problem.

However, unlike the Gaussian case where $\mathrm{d}_{\text{G}}(\mu, \lambda) = (\mu - \lambda)^2 / (2\sigma^2)$ leads to quadratic programming (QP) subproblems solvable by efficient QP solvers such as Gurobi, the logarithmic terms in $\mathrm{d}_{\text{B}}$ require general nonlinear optimization solvers.

\textbf{Numerical Implementation.} We solve the Bernoulli subproblems using the interior-point optimizer IPOPT \citep{wachter2006implementation} with the following considerations:
\begin{itemize}
    \item \textbf{Domain constraints}: To avoid numerical issues at boundary values where $\mathrm{d}_{\text{B}}(\mu, \lambda) \to \infty$, we constrain $\lambda_{i,l} \in [\epsilon, 1-\epsilon]$ with $\epsilon = 10^{-6}$.
    \item \textbf{Gradient and Hessian}: The analytical gradient $\frac{\partial \mathrm{d}_{\text{B}}}{\partial \lambda} = \frac{\lambda - \mu}{\lambda(1-\lambda)}$ and Hessian $\frac{\partial^2 \mathrm{d}_{\text{B}}}{\partial \lambda^2} = \frac{\mu}{\lambda^2} + \frac{1-\mu}{(1-\lambda)^2}$ are provided to the solver for improved convergence.
    \item \textbf{Initialization}: We initialize $\lambda_{i,l} = \mu_{i,l}$ (projected to the feasible domain if necessary).
\end{itemize}

%% file: appendices/E_numerical/3_variance_constraint.tex
\subsection{$\mathcal{M}$-SBFC with Variance Constraints: Detailed Formulation}
\label{appendix:variance_constraint}

This appendix provides the detailed optimization formulations for the $\mathcal{M}$-SBFC framework (Section~\ref{subsec:first_extension}) under a representative variance-based fairness constraint.

\begin{example}
\label{example:variance_fairness}
Consider the constraint set
\begin{equation}
\label{eq:variance_constraint}
\mathcal{M}=\left\{\boldsymbol{\mu}_k\in\mathbb{R}^L: \sum_{l\in[L]}(\mu_{k,l}-\mu_k)^2\leq C_{\text{var}}\right\},
\end{equation}
where $\mu_k=\sum_{l\in[L]}q_l\mu_{k,l}$ is the overall performance and $C_{\text{var}}>0$ is the maximum allowed variance. This constraint ensures that no subpopulation's performance deviates too far from the average, promoting equity.
\end{example}

For demonstration, we specify $\mathcal{P}$ as the Gaussian family. The internal optimization subproblems for the $\mathcal{M}$-SBFC algorithm become:

\textbf{Subproblem 1} (Activate alternative policy, from \eqref{eq:f_mu_opt_ext1}):
\begin{equation}
\label{eq:variance_opt1}
\begin{aligned}
\min_{\substack{\lambda_{i,l}\in\mathbb{R}\\ i\in\{1,k\},~l\in[L]}}
&\frac{1}{2}\sum_{i\in\{1,k\}}\sum_{l\in[L]}w_{i,l}(\mu_{i,l}-\lambda_{i,l})^2,\\
\text{s.t.}\quad &\lambda_k=\sum_{l\in[L]}q_l\lambda_{k,l}\geq\lambda_1= \sum_{l\in[L]}q_l\lambda_{1,l},\\
&\sum_{l\in[L]}(\lambda_{k,l}-\lambda_k)^2\leq C_{\text{var}}.
\end{aligned}
\end{equation}

\textbf{Subproblem 2} (Make current optimal infeasible, from \eqref{eq:f_infeas_ext1}):
\begin{equation}
\label{eq:variance_opt2}
\begin{aligned}
\min_{\lambda_{1,1},\ldots,\lambda_{1,L}\in\mathbb{R}}&~\frac{1}{2}\sum_{l\in[L]}w_{1,l}(\mu_{1,l}-\lambda_{1,l})^2,\\
\text{s.t.}\quad&\sum_{l\in[L]}(\lambda_{1,l}-\lambda_1)^2\geq C_{\text{var}}.
\end{aligned}
\end{equation}

\textbf{Computational Complexity.} Problem (\ref{eq:variance_opt1}) is a \emph{convex quadratically constrained quadratic program (QCQP)}, which can be efficiently solved using standard methods such as interior-point algorithms or second-order cone programming (SOCP) reformulations \citep{boyd2004convex}. Problem (\ref{eq:variance_opt2}) is non-convex due to the reverse inequality constraint. However, it remains a QCQP with a single quadratic constraint, which admits polynomial-time solutions via semidefinite programming (SDP) relaxations or specialized algorithms for trust-region subproblems \citep{boyd2004convex}.

This example demonstrates that the $\mathcal{M}$-SBFC framework maintains computational tractability even with non-trivial constraint structures, provided the constraint sets $\mathcal{M}$ are defined by polynomial inequalities.

%% file: appendices/E_numerical/4_soft_constraint.tex
\subsection{$\gamma$-SBFC with Soft Constraints: MIQP Formulation}
\label{appendix:soft_constraint}

This appendix provides the detailed mixed-integer reformulation of the $\gamma$-SBFC internal optimization (Section~\ref{subsec:second_extension}) under the Gaussian family.

\textbf{MIQP Reformulation.} The internal optimization subproblem (\ref{eq:f_mu_opt_ext2}) involves the constraint $\lambda_k^\gamma\geq\lambda_1^\gamma$, where the penalized performance contains the non-smooth $\min(\cdot,0)$ function. For the Gaussian family, this problem can be reformulated as a \emph{mixed-integer quadratic program (MIQP)} by introducing auxiliary variables $s_{i,l}$ to represent $\min(\lambda_{i,l}-C_{\min},0)$ and binary variables $z_{i,l}\in\{0,1\}$, with standard Big-M linearization constraints:
\begin{equation}
\label{eq:gamma_miqp}
\begin{aligned}
\min~&\sum_{i\in\{1,k\}}\sum_{l\in[L]}w_{i,l}(\mu_{i,l}-\lambda_{i,l})^2,\\
\text{s.t.}~&\sum_{l=1}^Lq_l(\lambda_{k,l}+\gamma_ls_{k,l})\geq\sum_{l=1}^Lq_l(\lambda_{1,l}+\gamma_ls_{1,l}),\\
&s_{i,l}\leq 0,\quad s_{i,l}\leq\lambda_{i,l}-C_{\min},\\
&s_{i,l}\geq\lambda_{i,l}-C_{\min}-Mz_{i,l},\quad s_{i,l}\geq -M(1-z_{i,l}),\\
&z_{i,l}\in\{0,1\},\quad\forall~i\in\{1,k\},~l\in[L],
\end{aligned}
\end{equation}
where the decision variables are $\lambda_{i,l}\in\mathbb{R}$, $s_{i,l}\in\mathbb{R}$, and $z_{i,l}\in\{0,1\}$ for $i\in\{1,k\}$ and $l\in[L]$. Problem (\ref{eq:gamma_miqp}) is a convex MIQP with $2L$ binary variables, which can be efficiently solved using standard solvers such as Gurobi.

%% file: appendices/E_numerical/5_asymptotic_details.tex
\subsection{Detailed Results for the Asymptotic Scaling Experiment (Section~\ref{subsubsec:asymptotic_optimality})}
\label{appendix:asymptotic_details}

This appendix provides supplementary numerical results for the asymptotic scaling experiment described in Section~\ref{subsubsec:asymptotic_optimality}. We present two complementary analyses: first, the full table of stopping times across the $\delta$-scaling sweep (Appendix~\ref{appendix:asymptotic_scaling_details}); second, an instance-sensitivity study that varies the feasibility gap to examine how the algorithm tracks the theoretical complexity constant $T^*(\boldsymbol{\mu})$ (Appendix~\ref{appendix:instance_sensitivity}).

\subsubsection{Problem Instance Detail}
\label{appendix: scaling_problem_instance}
The instance matrix is given in \ref{table:asymptotic_instance}: policy $a_1$ is the optimal feasible policy ($\bar{\mu}_1 = 0.70$); policy $a_2$ is feasible but suboptimal (gap $\Delta_{\mathrm{opt}} = 0.20$); policy $a_3$ is infeasible due to $\mu_{3,1} = -0.6 < 0$ but has high overall performance; policies $a_4$ and $a_5$ are easily eliminated (deeply infeasible and dominated, respectively).

\begin{table}[ht]
\centering
\small
\renewcommand{\arraystretch}{1.2}
\begin{tabular}{ccccccl}
\toprule
Policy & $\mu_{k,1}$ & $\mu_{k,2}$ & $\mu_{k,3}$ & $\bar{\mu}_k$ & Feasible & Role \\
\midrule
$a_1$ & 0.8 & 0.6 & 0.7 & 0.70 & \checkmark & Optimal feasible \\
$a_2$ & 0.5 & 0.6 & 0.4 & 0.50 & \checkmark & Suboptimal feasible \\
$a_3$ & $-0.6$ & 1.2 & 1.1 & $0.57$ & $\times$ & Infeasible, high performance \\
$a_4$ & $-1.5$ & 0.8 & 0.9 & 0.07 & $\times$ & Deeply infeasible \\
$a_5$ & 0.4 & 0.2 & 0.1 & 0.23 & \checkmark & Feasible but dominated \\
\bottomrule
\end{tabular}
\caption{Mean matrix for the asymptotic scaling experiment.}
\label{table:asymptotic_instance}
\end{table}

\subsubsection{$\delta$-Scaling Stopping Times}
\label{appendix:asymptotic_scaling_details}

Table~\ref{table:exp_a_results} reports the empirical stopping time $\hat{\tau}_\delta$, its standard deviation, the normalized ratio $\hat{\tau}_\delta / \ln(1/\delta)$, and the empirical correctness probability $\hat{P}_{\boldsymbol{\mu}}$ for each value of $\delta$ tested.

As $\delta$ decreases from 0.2 to 0.0005, the normalized ratio $\hat{\tau}_\delta / \ln(1/\delta)$ decreases from approximately 166 to 101, progressively approaching the asymptotic constant $T^*(\boldsymbol{\mu}) \approx 57.32$. All configurations achieve $\hat{P}_{\boldsymbol{\mu}} \ge 0.9997$, confirming the over-conservativeness discussed in Section~\ref{subsubsec:asymptotic_finite_sample}: the stopping threshold ensures a much lower error probability than the nominal $\delta$, at the cost of additional samples.

\begin{table}[ht]
\centering
\small
\renewcommand{\arraystretch}{1.2}
\begin{tabular}{ccccc}
\toprule
$\delta$ & $\hat{\tau}_\delta$ (mean $\pm$ SE) & Std & $\hat{\tau}_\delta / \ln(1/\delta)$ & $\hat{P}_{\boldsymbol{\mu}}$ \\
\midrule
0.200 & $267 \pm 2.9$ & 156.3 & 166.12 & 1.0000 \\
0.103 & $311 \pm 3.2$ & 174.8 & 136.53 & 1.0000 \\
0.053 & $360 \pm 3.4$ & 187.9 & 122.28 & 1.0000 \\
0.027 & $415 \pm 3.7$ & 203.5 & 115.08 & 1.0000 \\
0.014 & $468 \pm 3.8$ & 210.3 & 109.56 & 0.9997 \\
0.007 & $525 \pm 4.2$ & 229.5 & 106.40 & 1.0000 \\
0.004 & $587 \pm 4.3$ & 236.4 & 104.82 & 1.0000 \\
0.002 & $636 \pm 4.6$ & 253.5 & 101.48 & 1.0000 \\
0.001 & $693 \pm 4.8$ & 265.2 & 99.97 & 1.0000 \\
0.0005 & $766 \pm 5.0$ & 274.0 & 100.73 & 1.0000 \\
\bottomrule
\end{tabular}
\caption{Empirical stopping times across precision levels for the asymptotic scaling experiment (Section~\ref{subsubsec:asymptotic_optimality}). The asymptotic constant is $T^*(\boldsymbol{\mu}) \approx 57.32$; the ratio $\hat{\tau}_\delta / \ln(1/\delta)$ approaches this value as $\delta \to 0$. SE denotes standard error; Std denotes standard deviation across 3000 replications.}
\label{table:exp_a_results}
\end{table}

\subsubsection{Instance-Specific Complexity Sensitivity}
\label{appendix:instance_sensitivity}

This section complements the $\delta$-scaling analysis by providing evidence along a second dimension: the algorithm's ability to track the instance-specific constant $T^*(\boldsymbol{\mu})$ as the problem difficulty varies.

	extbf{Setup.} We use the same base instance described in Table~\ref{table:asymptotic_instance}, fixing $\delta = 0.01$ and varying the feasibility gap by setting $\mu_{3,1} = -\varepsilon$ for $\varepsilon \in \{0.1, 0.3, 0.6, 0.9, 1.2\}$, with 3000 replications per instance. As $\varepsilon$ increases, the KL divergence $d(\mu_{3,1}, C_{\min})$ grows, making feasibility verification easier and driving $T^*(\boldsymbol{\mu}, \varepsilon)$ downward. The optimality gap ($a_1$ vs.\ $a_2$) is held fixed, so the reduction is driven purely by the feasibility component.

\textbf{Results.} Figure~\ref{fig:exp_b_tracking} plots the empirical ratio $\hat{\tau}_\delta / \ln(1/\delta)$ against the theoretical $T^*(\boldsymbol{\mu}, \varepsilon)$ for each instance. The empirical ratios exceed $T^*$ by a factor of roughly $1.7$--$1.9\times$, consistent with the finite-sample conservativeness discussed in Section~\ref{subsubsec:asymptotic_finite_sample} and observed in the $\delta$-scaling experiment. Crucially, the points follow the same monotone trend as $T^*(\boldsymbol{\mu})$: instances with larger $T^*$ consistently incur larger empirical stopping times, confirming that the algorithm tracks the instance-specific difficulty across varying $\varepsilon$. Table~\ref{table:exp_b_results} provides numerical details. 

\begin{figure}[htbp]
  \centering
  \includegraphics[width=0.55\textwidth]{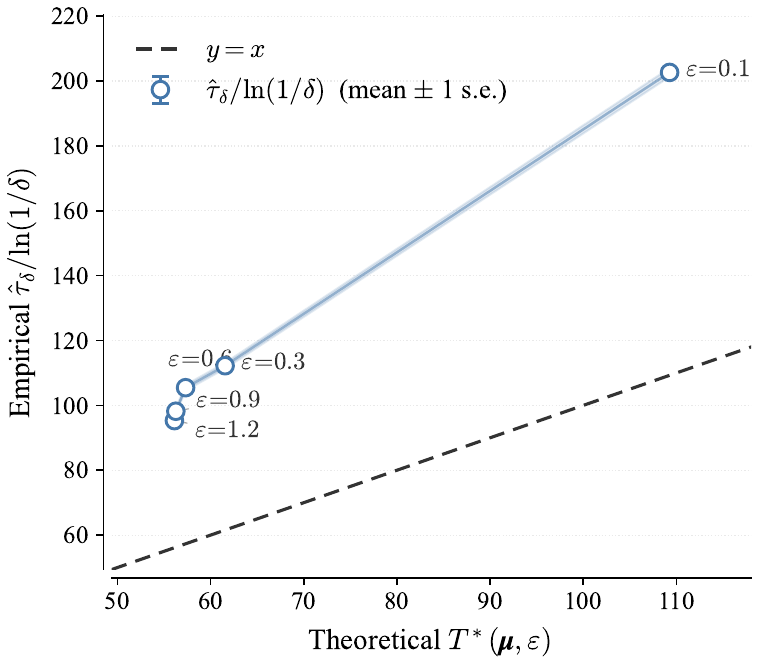}
  \caption{Empirical $\hat{\tau}_\delta / \ln(1/\delta)$ versus theoretical $T^*(\boldsymbol{\mu}, \varepsilon)$ for $\varepsilon \in \{0.1, 0.3, 0.6, 0.9, 1.2\}$ at $\delta = 0.01$. The dashed line is $y = x$. Error bars show $\pm 1$ SE over 3000 replications.}
  \label{fig:exp_b_tracking}
\end{figure}

\begin{table}[ht]
\centering
\small
\renewcommand{\arraystretch}{1.2}
\begin{tabular}{ccccc}
\toprule
$\varepsilon$ & $\mu_{3,1}$ & $T^*(\boldsymbol{\mu},\varepsilon)$ & $\hat{\tau}_\delta / \ln(1/\delta)$ (mean $\pm$ SE) & Ratio to $T^*$ \\
\midrule
0.1 & $-0.1$ & 109.27 & $202.66 \pm 1.34$ & 1.85 \\
0.3 & $-0.3$ & 61.56 & $112.23 \pm 0.90$ & 1.82 \\
0.6 & $-0.6$ & 57.32 & $105.46 \pm 0.87$ & 1.84 \\
0.9 & $-0.9$ & 56.28 & $98.20 \pm 0.86$ & 1.75 \\
1.2 & $-1.2$ & 56.12 & $95.31 \pm 0.83$ & 1.70 \\
\bottomrule
\end{tabular}
\caption{Theoretical $T^*(\boldsymbol{\mu},\varepsilon)$ versus empirical $\hat{\tau}_\delta / \ln(1/\delta)$ at $\delta = 0.01$ for varying feasibility gap $\varepsilon$.}
\label{table:exp_b_results}
\end{table}

%% file: appendices/E_numerical/6_extension_details.tex
\subsection{Detailed Results for the Hard vs.\ Soft Constraint Experiment (Section~\ref{subsubsec:extension_comparison})}
\label{appendix:extension_details}

This appendix provides supplementary materials for the hard vs.\ soft constraint comparison described in Section~\ref{subsubsec:extension_comparison}. We present the full problem instance specification (Appendix~\ref{appendix:extension_instance}) and the empirical sampling allocations (Appendix~\ref{appendix:extension_weights}).

\subsubsection{Problem Instance}
\label{appendix:extension_instance}

Table~\ref{table:extension_instance} reports the mean matrix for the hard vs.\ soft constraint comparison experiment. We consider $K=10$ policies and $L=3$ subpopulations with equal weights $q_\ell = 1/3$. The fairness constraint requires $\mu_{k,\ell} \geq C_{\min} = 0.2$. Of the ten policies, five are feasible (policies 1--3, 9, 10) and five are infeasible (policies 4--8). Policy~1 is the optimal feasible policy with $\bar\mu_1 = 0.550$. Policy~4 achieves the highest overall performance $\bar\mu_4 = 0.700$ but is infeasible due to $\mu_{4,1} = 0.05 < 0.2$. Observations are Gaussian with $\sigma = 0.5$.

\begin{table}[ht]
\centering
\small
\renewcommand{\arraystretch}{1.2}
\begin{tabular}{cccccl}
\toprule
Policy & $\mu_{k,1}$ & $\mu_{k,2}$ & $\mu_{k,3}$ & $\bar{\mu}_k$ & Role \\
\midrule
$a_1$ & 0.55 & 0.60 & 0.50 & 0.550 & Optimal feasible \\
$a_2$ & 0.35 & 0.40 & 0.30 & 0.350 & Feasible suboptimal \\
$a_3$ & 0.30 & 0.35 & 0.25 & 0.300 & Feasible suboptimal \\
$a_4$ & 0.05 & 1.05 & 1.00 & 0.700 & Infeasible, highest $\bar\mu$ \\
$a_5$ & 0.00 & 0.80 & 0.75 & 0.517 & Infeasible, moderate $\bar\mu$ \\
$a_6$ & 0.05 & 0.40 & 0.35 & 0.267 & Infeasible, low $\bar\mu$ \\
$a_7$ & 0.00 & 0.35 & 0.30 & 0.217 & Infeasible, low $\bar\mu$ \\
$a_8$ & 0.10 & 0.30 & 0.25 & 0.217 & Infeasible, low $\bar\mu$ \\
$a_9$ & 0.25 & 0.30 & 0.20 & 0.250 & Feasible, dominated \\
$a_{10}$ & 0.20 & 0.25 & 0.25 & 0.233 & Feasible, dominated \\
\bottomrule
\end{tabular}
\caption{Mean matrix for the hard vs.\ soft constraint comparison ($K=10$, $L=3$, $C_{\min}=0.2$, $\sigma=0.5$).}
\label{table:extension_instance}
\end{table}

\subsubsection{Weight Allocation}
\label{appendix:extension_weights}

Figure~\ref{fig:weights_extension} presents the empirical sampling proportions for the SBFC and $\gamma$-SBFC formulations described in Section~\ref{subsubsec:extension_comparison}.

\begin{figure}[ht]
    \centering
    \includegraphics[width=0.75\textwidth]{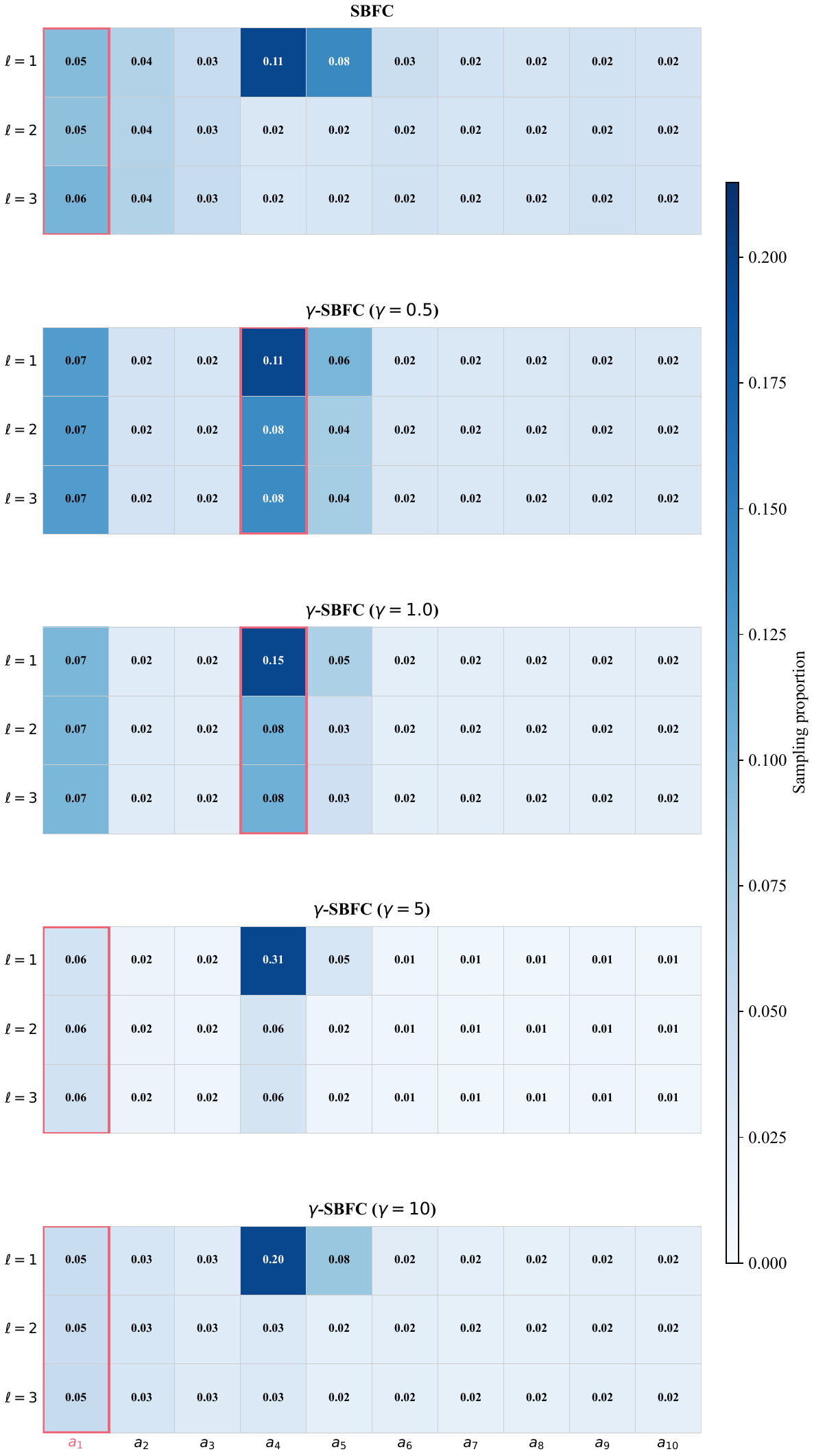}
    \caption{Empirical sampling proportions by policy and subpopulation for SBFC and $\gamma$-SBFC formulations ($\delta = 0.2$). For $\gamma \in \{0.5, 1.0\} < \gamma^*$, the optimal penalized policy is $a_4$ (infeasible under hard constraints); for $\gamma \in \{5, 10\} > \gamma^*$, it is $a_1$ (same as SBFC).}
    \label{fig:weights_extension}
\end{figure}

%% file: appendices/F_real_scenario/IST_supplementary.tex
\section{Supplementary Details for the IST Case Study}
\label{appendix:ist}

This appendix documents the data construction, cell-level summary statistics, feasibility gaps, allocation trajectory, and sensitivity analyses supporting the IST case study in Section~\ref{subsec:real_scenario}.

\subsection{Data source}
\label{appendix:ist_data}
We use the publicly released subject-level dataset from the International Stroke Trial (IST)~\citep{ist1997lancet,sandercock2011ist}, which enrolled $N = 19{,}435$ patients with acute ischaemic stroke across 36 countries between 1991 and 1996. The dataset contains randomisation variables, pre-treatment covariates, in-hospital events, and 14-day outcomes. We work with the full set of randomised patients; no exclusions are applied.

\subsection{Subpopulation definitions}
Subpopulations are defined by age at randomisation (variable \texttt{AGE} in the IST release), giving $L = 2$: Age~$<$~80 and Age~$\geq$~80. Age is recorded prior to randomisation, so the two subpopulations are well-defined pre-treatment strata satisfying the framework's assumption. Subpopulation weights are $q_1 = 0.7177$ and $q_2 = 0.2823$, matching sample proportions in the full cohort.

\subsection{Outcome construction}
The binary outcome $X_{k,\ell} \in \{0, 1\}$ is a 14-day composite: $X_{k,\ell} = 1$ if and only if the patient is alive and free of all four adverse events recorded in the trial's 14-day follow-up:
\begin{itemize}
    \item \texttt{ID14} $= 0$: not dead by day 14,
    \item \texttt{ISC14} $= 0$: no recurrent ischaemic stroke within 14 days,
    \item \texttt{H14}  $= 0$: no haemorrhagic stroke within 14 days,
    \item \texttt{TRAN14} $= 0$: no symptomatic pulmonary embolism / transfusion-requiring bleed,
    \item \texttt{NCB14} $= 0$: no non-cerebral major bleed within 14 days.
\end{itemize}
This composite integrates survival, stroke-recurrence avoidance, and bleeding avoidance on a 14-day horizon compatible with sequential sampling.

\subsection{Cell-level summary statistics}
\label{appendix:ist_cells}
Table~\ref{table:ist_cells_appendix} reports, for each $(k, \ell)$ cell, the pool size $N_{k,\ell}$, the cell mean $\hat\mu_{k,\ell}$, and the implied cell-level standard deviation $\hat\sigma_{k,\ell} = \sqrt{\hat\mu_{k,\ell}(1 - \hat\mu_{k,\ell})}$. Pool sizes are large---at least $1{,}363$ per cell---so cell-level empirical distributions are well-estimated and the bootstrap sampling model introduces no material additional noise.

\begin{table}[h]
\centering
\small
\renewcommand{\arraystretch}{1.15}
\begin{tabular}{lcccc}
\toprule
 & \multicolumn{2}{c}{Age $<$ 80} & \multicolumn{2}{c}{Age $\geq$ 80} \\
\cmidrule(lr){2-3}\cmidrule(lr){4-5}
Policy & $N_{k,1}$ & $\hat\mu_{k,1}\,(\hat\sigma_{k,1})$ & $N_{k,2}$ & $\hat\mu_{k,2}\,(\hat\sigma_{k,2})$ \\
\midrule
Aspirin only & $3470$ & $0.8925\;(0.3098)$ & $1388$ & $0.8429\;(0.3640)$ \\
Heparin only & $3484$ & $0.8751\;(0.3306)$ & $1371$ & $0.7980\;(0.4015)$ \\
Both         & $3499$ & $0.8705\;(0.3357)$ & $1363$ & $0.7843\;(0.4113)$ \\
Neither      & $3496$ & $0.9013\;(0.2983)$ & $1364$ & $0.8248\;(0.3801)$ \\
\bottomrule
\end{tabular}
\caption{Cell-level pool sizes and Bernoulli summary statistics on IST. Total sample sizes: $13{,}949$ (Age~$<$~80) and $5{,}486$ (Age~$\geq$~80).}
\label{table:ist_cells_appendix}
\end{table}

\begin{figure}[ht]
\centering
\includegraphics[width=\linewidth]{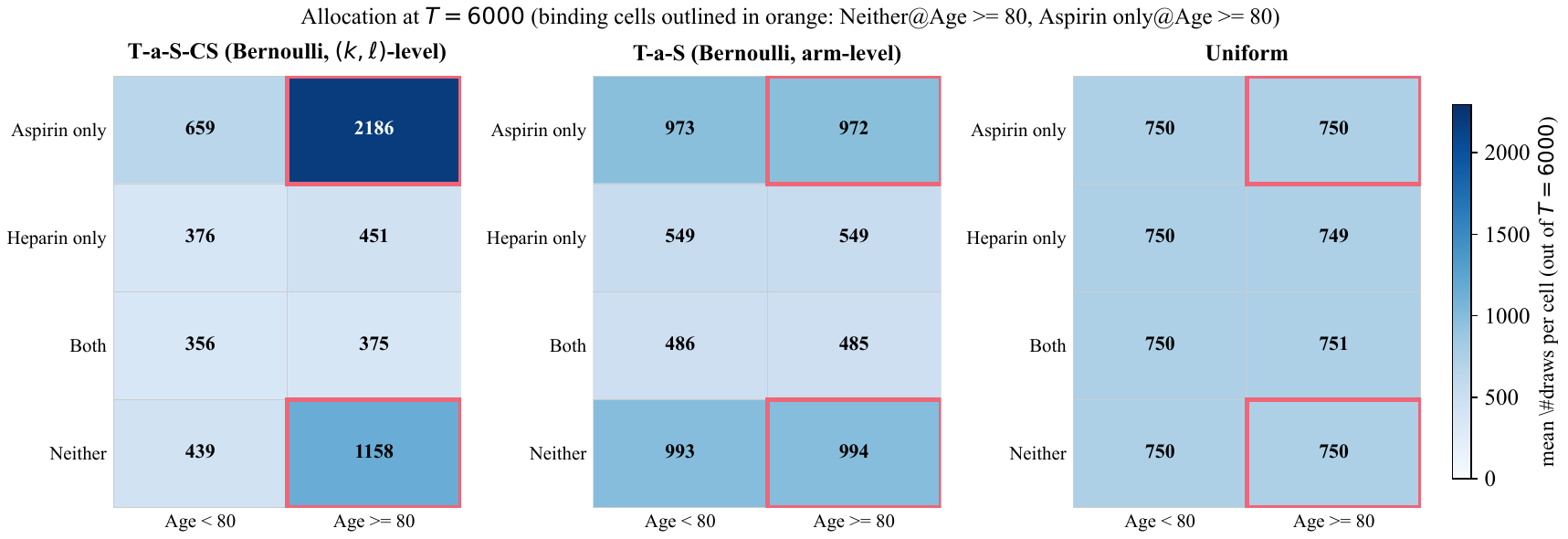}
\caption{Mean sample allocations. T-a-S-CS significantly oversamples the two critical subpopulation cells (outlined in orange) where the feasibility threshold is tightly constrained.}
\label{fig:ist_allocation}
\end{figure}

\subsection{Scope and limitations}
Three caveats. (i) The IST outcome is a 14-day composite; longer-horizon outcomes (6-month disability, mortality at one year) exhibit different signal-to-noise ratios and may yield different feasibility patterns. (ii) Age $\geq 80$ is one of several possible vulnerable subgroups; stratifying instead by baseline severity (\texttt{RCONSC}, consciousness level at randomisation) or by atrial-fibrillation status produces different Case/feasibility structures. (iii) Bootstrap-with-replacement preserves the marginal cell distribution exactly but does not capture any hospital/country clustering in the original trial; a multilevel generative model would be a natural extension. None of these caveats affects the algorithmic conclusion---subpopulation-level allocation dominates both policy-level adaptive allocation and uniform fixed allocation at moderate budgets on genuine clinical data in the Case \ref{example:2} regime.